\newcommand{\BlackBox}{\rule{1.5ex}{1.5ex}}  
    \renewenvironment{proof}{\par\noindent{\bf Proof\ }}{\hfill\BlackBox\par}
    \newenvironment{proof}{\par\noindent{\bf Proof\ }}{\hfill\BlackBox\par}
\newtheorem{theorem}{Theorem}
\newtheorem{lemma}[theorem]{Lemma}
\newtheorem{corollary}[theorem]{Corollary}
\crefname{section}{Section}{Sections}
\crefname{subsection}{Section}{Sections}
\crefname{appendix}{Appendix}{Appendices}
\crefname{figure}{Figure}{Figures}
\crefname{table}{Table}{Tables}
\crefname{algorithm}{Algorithm}{Algorithms}
\crefname{theorem}{Theorem}{Theorems}
\crefname{eq}{Eq.}{Eqs.}
\title{The Geometry of LLM Quantization: \\ GPTQ as Babai's Nearest Plane Algorithm}
\author{Jiale Chen\textsuperscript{\textmd{1}},
Yalda Shabanzadeh\textsuperscript{\textmd{1}},
Elvir Crnčević\textsuperscript{\textmd{2}},
Torsten Hoefler\textsuperscript{\textmd{3}},
Dan Alistarh\textsuperscript{\textmd{1,2}} \\
\textsuperscript{1}Institute of Science and Technology Austria (ISTA),
\textsuperscript{2}Red Hat, Inc.,
\textsuperscript{3}ETH Zürich \\
\texttt{jiale.chen@ist.ac.at} \\
}
\begin{document}

\doparttoc 
\faketableofcontents 

\maketitle

\begin{abstract}
Quantizing the weights of large language models (LLMs) from 16-bit to lower bitwidth is the de facto approach to deploy massive transformers onto more affordable accelerators.
While GPTQ emerged as one of the standard methods for one-shot post-training quantization at LLM scale, its inner workings are described as a sequence of algebraic updates that obscure geometric meaning or worst-case guarantees.
In this work, we show that, when executed back-to-front (from the last to first dimension) for a linear layer, GPTQ is mathematically identical to Babai's nearest plane algorithm for the classical closest vector problem (CVP) on a lattice defined by the Hessian matrix of the layer's inputs.
This equivalence is based on a sophisticated mathematical argument, and has two analytical consequences:
first, the GPTQ error propagation step gains an intuitive geometric interpretation;
second, GPTQ inherits the error upper bound of Babai's algorithm under the assumption that no weights are clipped.
Leveraging this bound, we design post-training quantization methods that avoid clipping, and outperform the original GPTQ.
In addition, we provide efficient GPU inference kernels for the resulting representation.
Taken together, these results place GPTQ on a firm theoretical footing and open the door to importing decades of progress in lattice algorithms towards the design of future quantization algorithms for billion-parameter models.
Source code is available at \url{https://github.com/IST-DASLab/GPTQ-Babai}.
\end{abstract}

\section{Introduction}
\label{sec:intro}

Generative pre-trained transformers (GPT) models contain hundreds of billions of parameters and have massive computational and memory costs~\citep{10.1145/3630106.3658542}.
Post-training quantization (PTQ) has emerged as a practical solution for reducing their footprint~\citep{gholami2021surveyquantizationmethodsefficient}.
Among a growing family of methods, GPTQ~\citep{frantar2023optq} was the first to push one-shot quantization down to the 4-bit regime, while retaining near-baseline accuracies. GPTQ is still very popular nowadays and yields state-of-the-art results in some regimes~\citep{kurtic-etal-2025-give}.

Despite its empirical success, the GPTQ algorithm was only presented as a sequence of greedily applied algebraic operations: the procedure picks one weight at a time, quantizes it via rounding or clipping, and then optimally updates the not-yet-quantized weights to correct for the remaining per-layer loss; it then continues with the next weight, and so on.
This procedure leaves an obvious open question: why does a local greedy rule work so well globally?
Current literature does not answer this question, leaving little guidance for principled extensions or failure case analysis.

\textbf{Our contribution.}
This paper is the first\footnote{The concurrent work of  \citet{birnick2025latticegeometryneuralnetwork} appeared on arXiv slightly later than our preprint~\citep{chen2025geometryllmquantizationgptq}.} to provide a geometric interpretation for GPTQ, which implies a layer-wise global error bound.
Our main theoretical results (\cref{sec:theory}) are (i) the GPTQ optimization problem, i.e., linear-layer quantization with the L2 objective on the output, is equivalent to the closest vector problem (CVP) w.r.t. L2 distance; (ii) the GPTQ algorithm executed from the last to first dimension is the same as Babai's nearest plane algorithm on the basis of the factorized Hessian matrix, without LLL basis reduction, and this finding holds independently of whether large weights are clipped to the quantization grid (a process known as weight clipping); and (iii) the worst-case layer-wise error in the no-clipping setting is bound tightly by the trace of the diagonal matrix of the LDL decomposition of the Hessian matrix.
In addition (\cref{sec:experiments}), we tie our theoretical findings to practical quantization by introducing new no-clipping methods of better accuracy than the original GPTQ, together with efficient GPU inference kernels for the resulting representation.

\section{Related Work}
\label{sec:related_work}

\textbf{Second-order compression (pruning and quantization).}
The idea of using Hessian information to guide parameter removal dates back to Optimal Brain Damage~\citep{lecun1990obd} and Optimal Brain Surgeon (OBS)~\citep{hassibi1993obs}.
Optimal Brain Compression (OBC)~\citep{frantar2022obc} generalizes OBS to the post-training setting and unifies structured pruning and quantization (also called Optimal Brain Quantizer, OBQ) under a single exact solver.
GPTQ~\citep{frantar2023optq} inherits OBQ's error propagation method but applies it in a fixed order, so that the inverse Hessian can be shared and only needs to be computed once.
GPTQ only has cubic computational complexity in the column/row dimension, making it suitable for LLMs.
QuIP~\citep{cheequip2023} proves an error guarantee for GPTQ and proposes the LDLQ method as an equivalent variant of GPTQ.

\textbf{Lattices, CVP algorithms, and hardness.}
The closest vector problem (CVP) is NP-complete to approximate within any constant factor under polynomial-time reductions~\citep{vEB81, micciancio2002complexity, Dinur2003}, motivating decades of approximation algorithms.
Babai's nearest plane heuristic~\citep{babai1986} delivers a solution in polynomial time and, when preceded by LLL basis reduction~\citep{lenstra1982lll}, enjoys a $2^{O \left( n \right)}$ approximation.
BKZ basis reduction~\citep{kannan1987cvp} further tightens the constant in an exponential-time solver.

\section{Preliminaries and Notations}
\label{sec:prelims}

We use Python-style indexing inside square brackets to select elements and sub-matrices from a tensor, e.g., $[j, :]$ selects the $j$-th row vector, $[:, j]$ selects the $j$-th column vector, and $[j:, j]$ selects the sub-column consisting of rows after the $j$-th (inclusive) row in the $j$-th column, $[:, J]$ selects the column vectors indexed by set $J$ as a sub-matrix, etc\footnote{For more details, please see (NumPy): \url{https://numpy.org/doc/stable/user/basics.indexing.html}}.

\subsection{Linear-Layer Quantization Problem}
\label{sec:prelims_quant}

\textbf{Problem.}
Let $\bm{X} = \left[ \bm{x}_{1}, \dots, \bm{x}_{n} \right]^\top \in \mathbb{R}^{n \times c}$ be the sampled calibration input data of batch size $n$ and input dimension $c$ with $\bm{x}_{i} \in \mathbb{R}^{c}$ and $n \ge c = \mathrm{rank} \left( \bm{X} \right)$.
Let $\bm{W} = \left[ \bm{w}_{1}, \dots, \bm{w}_{r} \right] \in \mathbb{R}^{c \times r}$ be the linear layer weights of input dimension $c$ and output dimension $r$ with $\bm{w}_{i} \in \mathbb{R}^{c}$.
Let $\bm{S} = \left[ \bm{s}_{1}, \dots, \bm{s}_{r} \right] \in \mathbb{R}_{\ne 0}^{c \times r}$ be the non-zero quantization scales with $\bm{s}_{i} \in \mathbb{R}_{\ne 0}^{c}$. Here we consider a general case that applies to any grouping pattern: each weight element $\bm{w}_{i} [j]$ has its own scaling factor $\bm{s}_{i} [j]$. Assume $\bm{S}$ is statically computed using methods like AbsMax or MSE before any weight updates.
Let $\mathbb{Z}_{\dagger} \subseteq \mathbb{Z}$ be the quantization grid (representable integers). In the clipping setting, e.g., for INT4 format, $\mathbb{Z}_{\dagger} = \left\{ -8, \dots, -1, 0, 1, \dots, 7 \right\}$. In the no-clipping setting, $\mathbb{Z}_{\dagger} = \mathbb{Z}$, which allows any integer as the quantization results.
Let $\bm{Z} = \left[ \bm{z}_{1}, \dots, \bm{z}_{r} \right] \in {\mathbb{Z}_{\dagger}}^{c \times r}$ be the (unknown) quantized integers with $\bm{z}_{i} \in \mathbb{Z}_{\dagger}^{c}$.
Denote $\bm{Q} = \left[ \bm{q}_{1}, \dots, \bm{q}_{r} \right] \in \mathbb{R}^{c \times r}$ as the dequantized weights with $\bm{q}_{i} = \mathrm{diag} \left( {\bm{s}_{i}} \right) \bm{z}_{i} \in \mathbb{R}^{c}$.
The goal is to minimize the L2 error on the layer output $\bm{X} \bm{W} \in \mathbb{R}^{n \times r}$:
$\left\| \bm{X} \bm{Q} - \bm{X} \bm{W} \right\|_{\mathrm{F}}^{2} = \sum_{i=1}^{r} \left\| \bm{X} \mathrm{\ diag} \left( {\bm{s}_{i}} \right) \bm{z}_{i} - \bm{X} \bm{w}_{i} \right\|^{2}$
, i.e, finding 
$\mathrm{argmin}_{\bm{z}_{i} \in \mathbb{Z}_{\dagger}^{c}} \left\| \bm{X} \mathrm{\ diag} \left( {\bm{s}_{i}} \right) \bm{z}_{i} - \bm{X} \bm{w}_{i} \right\|^{2}$ for all $1 \le i \le r$.

\textbf{OBQ algorithm.}
Let set $J_{i}$ initialized to $\left\{ 1, \dots, c \right\}$ be the set of not-yet-quantized indices of $\bm{w}_{i}$.
We denote $J_{i}$ as $J$ as a short-hand notation.
For each weight vector $\bm{w}_{i}$, OBQ chooses
\begin{equation}
j \leftarrow \mathrm{argmin}_{j \in J} \frac{\left( \bm{q}_{i} [j] - \bm{w}_{i} [j] \right)^{2}}{\left( \bm{X} [:, J]^{\top} \bm{X} [:, J] \right)^{-1} [j, j]}
\label{eq:obq_argmin}
\end{equation}
as the next dimension to quantize.
OBQ quantizes the chosen element $\bm{w}_{i} [j]$ as $\bm{q}_{i} [j] \leftarrow \bm{s}_{i} [j] \cdot \textsc{Round} \left( \frac{\bm{w}_{i} [j]}{\bm{s}_{i} [j]} , \mathbb{Z}_{\dagger} \right)$ via the $\textsc{Round} \left( \cdot , \mathbb{Z}_{\dagger} \right)$ function which rounds the inputs to the nearest values in $\mathbb{Z}_{\dagger}$. OBQ then optimally updates the subset of weights $\bm{w}_{i} [J]$ via an error propagation step $\bm{w}_{i} [j'] \leftarrow \bm{w}_{i} [j'] + \Delta \bm{w}_{i} [j']$ for all $j' \in J$ with
\begin{equation}
\Delta \bm{w}_{i} [j'] \leftarrow \frac{\left( \bm{X} [:, J]^\top \bm{X} [:, J] \right)^{-1}[j', j]}{\left( \bm{X} [:, J]^\top \bm{X} [:, J] \right)^{-1}[j, j]} \left( \bm{q}_{i} [j] - \bm{w}_{i} [j] \right)
\label{eq:obq_update} .
\end{equation}
OBQ continues iteration with $J \leftarrow J \setminus \left\{ j \right\}$ until $J$ is empty.

\textbf{GPTQ algorithm.}
GPTQ reduces the computational complexity of OBQ by applying the OBQ quantization and error propagation steps in a fixed dimensional order, e.g., from the first to last dimension ($j \leftarrow 1 \mathrm{\ to\ } c$), instead of dynamically determined orders (Eq.~\ref{eq:obq_argmin}).
The fixed order is independent of the output channel $i$, thus the Hessian information $\left( \bm{X} [:, J]^\top \bm{X} [:, J] \right)^{-1} [:, j]$ can be shared across $\bm{w}_{i}$ for all $i$, without recomputation.
Furthermore, the Hessian information for all $j$ can be precomputed at once using Cholesky or LDL decomposition of the Hessian matrix $\bm{X}^\top \bm{X}$.

\cref{alg:gptq} is the pseudocode of GPTQ.
The algorithm is identical to the original GPTQ paper~\citep{frantar2023optq} except for missing the blocking mechanism that only affects the memory access pattern and computational speed, but not the numerical results.
Additional notations are as follows.
$\bm{P} \in \left\{ 0 , 1 \right\}^{c \times c}$ is a permutation matrix that modifies the dimensional order of GPTQ quantization. The default order is front-to-back (from the first to last dimension), i.e., $\bm{P} = \mathbf{I}$.
$\lambda \in \mathbb{R}_{+}$ is a small damping factor for computing the Hessian matrix, ensuring the matrix is of full rank.
A typical choice is $\lambda = \frac{1}{100 c} \sum_{j=1}^{c} \left( \bm{X}^\top \bm{X} \right) [j, j] = \frac{1}{100 c} \left\| \bm{X} \right\|_{\mathrm{F}}^{2}$.
Function $\textsc{LDL}$ returns the lower triangular matrix in the LDL decomposition.
Symbols $*$ and $/$ denote element-wise multiplication and division, respectively.

\begin{figure}[!ht]
\begin{minipage}[t]{\linewidth}
\IncMargin{1.5em}
\begin{algorithm}[H]
\caption{GPTQ}
\label{alg:gptq}
\LinesNumbered
\SetKwInput{KwData}{Input}
\SetKwInput{KwResult}{Output}

\KwData{
original weights $\bm{W} \in \mathbb{R}^{c \times r}$,
per-coordinate scales $\bm{S} \in \mathbb{R}^{c \times r}_{\neq 0}$,
calibration activation $\bm{X} \in \mathbb{R}^{n \times c}$,
permutation $\bm{P} \in \left\{ 0, 1 \right\}^{c \times c}$,
damping ratio $\lambda > 0$,
integer grid $\mathbb{Z}_{\dagger} \subseteq \mathbb{Z}$
}
\KwResult{
quantized weights $\bm{Z} \in \mathbb{Z}_{\dagger}^{c \times r}$,
dequantized weights $\bm{Q} \in \mathbb{R}^{c \times r}$
}

$\bm{H} \leftarrow \bm{P}^\top \left( \bm{X}^\top \bm{X} + \lambda \mathbf{I} \right) \bm{P}$
\tcp{dampen and reorder Hessian}

$\bm{L} \leftarrow \textsc{LDL} \left( \bm{H}^{-1} \right)$
\tcp{factorize (take the L matrix from the LDL decomposition) the inversed Hessian as the shared coefficients for error propagation}

$\bm{W}, \bm{S} \leftarrow \bm{P}^{-1} \bm{W}, \bm{P}^{-1} \bm{S}$
\tcp{reorder weights and scales}

$\bm{Q}, \bm{Z} \leftarrow \bm{W}, \bm{0}$
\tcp{initialize dequantized and quantized weights}

\For{$j \gets 1 \ \mathrm{to} \ c$}{

$\bm{\zeta} \leftarrow \bm{W} [j, :] / \bm{S} [j, :]$
\tcp{element-wise divide current row by its scales}

$\bm{Z} [j, :] \leftarrow \textsc{Round} \left( \bm{\zeta} , \mathbb{Z}_{\dagger} \right)$
\tcp{quantize coefficients to the target grid}

$\bm{Q} [j, :] \leftarrow \bm{Z} [j, :] * \bm{S} [j, :]$
\tcp{dequantize current row back to weight space}

$\bm{\varepsilon} \leftarrow \bm{Q} [j, :] - \bm{W} [j, :]$
\tcp{quantization error for current row}

$\bm{W} [j:, :] \leftarrow \bm{W} [j:, :] + \bm{L} [j:, j] \bm{\varepsilon}$
\tcp{propagate error to not-yet-quantized rows; broadcast over columns}

}

$\bm{Z}, \bm{Q} \leftarrow \bm{P} \bm{Z}, \bm{P} \bm{Q}$
\tcp{undo reorder to restore original input order; return integers and dequantized weights}

\end{algorithm}
\end{minipage}
\end{figure}

\subsection{The Closest Vector Problem (CVP)}
\label{sec:prelims_cvp}

\textbf{Problem.}
Let $\bm{B} = \left[ \bm{b}_{1}, \dots, \bm{b}_{c} \right] \in \mathbb{R}^{n \times c}$ be a set of $c$ basis vectors of dimension $n$ with $\bm{b}_{j} \in \mathbb{R}^{n}$ and $n \ge c = \mathrm{rank} \left( \bm{B} \right)$.
Let $\bm{y} \in \mathbb{R}^{n}$ be an external target vector to approximate.
Let $\bm{z} \in \mathbb{Z}^{c}$ be the (unknown) integer vector representing the basis combinations of the lattice vector.
The goal is to find the vector on the lattice defined by the basis $\bm{B}$ that is closest to the target vector $\bm{y}$,
i.e., finding
$\mathrm{argmin}_{\bm{z} \in \mathbb{Z}^{c}} \left\| \bm{B} \bm{z} - \bm{y} \right\|^{2}$.
A visualization of a two-dimensional CVP is shown in \cref{fig:babai_2d}~(a).

\begin{figure}[ht]
\begin{center}
\includegraphics[width=\textwidth]{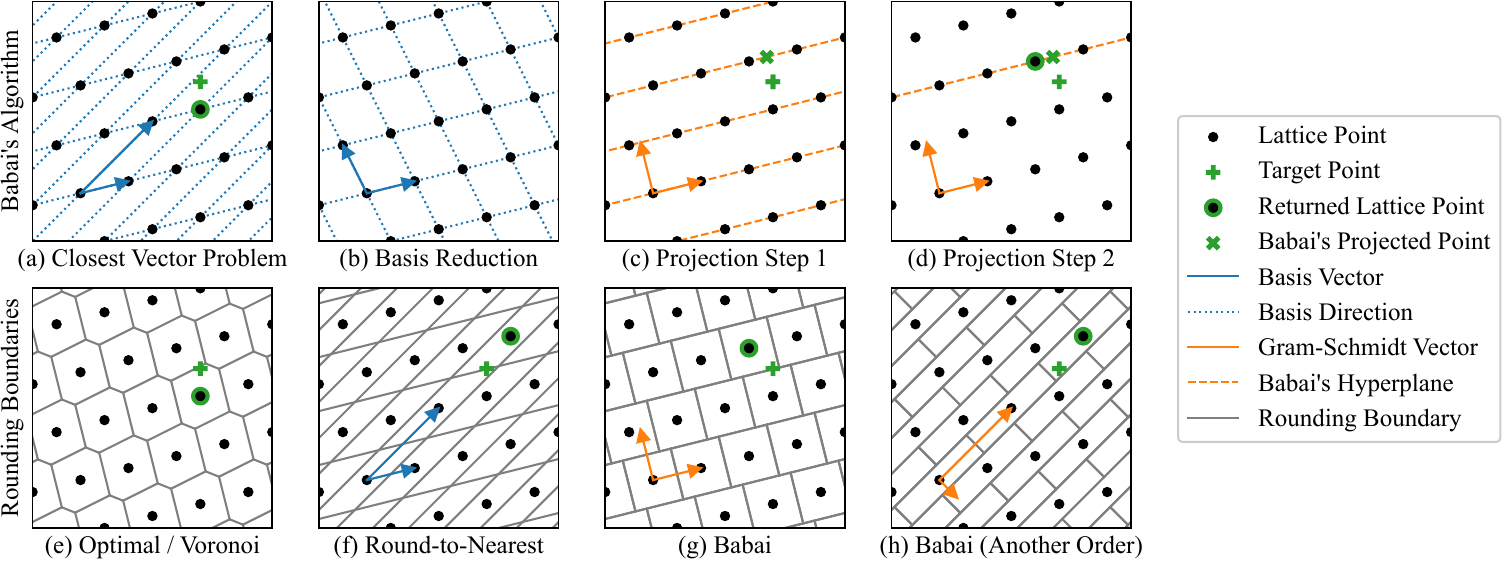}
\end{center}
\caption{\textbf{Top row:}
(a) CVP in a two-dimensional lattice;
(b) Basis reduction can find a shorter, more orthogonal basis that can potentially improve the results;
(c-d) The projection steps in Babai's nearest plane algorithm.
\textbf{Bottom row:} rounding boundaries of
(e) optimal rounding or Voronoi cells;
(f) round-to-nearest (RTN);
(g) Babai's nearest plane algorithm without basis reduction;
(h) Babai's algorithm without basis reduction under the reversely ordered basis.}
\label{fig:babai_2d}
\end{figure}

\textbf{Babai's nearest plane algorithm.} Babai's algorithm iteratively projects the target vector onto the nearest hyperplane of an LLL-reduced lattice and rounds the corresponding coefficient.
\cref{fig:babai_2d}~(b) visualizes the basis reduction step and \cref{fig:babai_2d}~(c-d) visualize the projection steps.

\cref{alg:babai} is the pseudocode of Babai's nearest plane algorithm to solve CVP.
For better computational efficiency, the pseudocode uses a conceptually equivalent approach.
Instead of projecting the target vector onto the nearest hyperplane, it moves the target vector along the basis direction towards the hyperplane where the origin lies. The projection error is retained in the updated target vector as it is orthogonal to the hyperplane and will not affect subsequent projections.
Additional notations are as follows.
Function $\textsc{LLL}$ returns the transformation matrix of the LLL reduction with the parameter delta defaulting to $\frac{3}{4}$.
Function $\textsc{QR}$ returns the orthogonal matrix in QR decomposition, which is the same as the normalized Gram-Schmidt orthogonalization process.
$\left< \cdot, \cdot \right>$ denotes the vector dot product.
Function $\textsc{Round}$ is defined as in the GPTQ algorithm.

\begin{figure}[!ht]
\begin{minipage}[t]{\linewidth}
\IncMargin{1.5em}
\begin{algorithm}[H]
\caption{Babai's Nearest Plane}
\label{alg:babai}
\LinesNumbered
\SetKwInput{KwData}{Input}
\SetKwInput{KwResult}{Output}

\KwData{lattice basis (column vectors) $\bm{B} \in \mathbb{R}^{n \times c}$, target vector $\bm{y} \in \mathbb{R}^{n}$}
\KwResult{closest lattice vector's basis coefficients $\bm{z} \in \mathbb{Z}^{c}$}

$\bm{T} \leftarrow \textsc{LLL} \left( \bm{B} \right)$
\tcp{unimodular transformation matrix from LLL basis reduction}

$\bm{A} \leftarrow \bm{B} \bm{T}$
\tcp{reduce the basis}

$\bm{\Phi} \leftarrow \textsc{QR} \left( \bm{A} \right)$
\tcp{normalized Gram-Schmidt process (take the Q matrix from the QR decomposition)}

$\bm{y}' , \bm{z} \leftarrow \bm{y}, \bm{0}$
\tcp{initialize residual target and integer solution in reduced basis}

\For{$j \gets c \ \mathrm{to} \ 1$}{

$\zeta \leftarrow \left< \bm{\Phi} [:, j] , \bm{y}' \right> / \left< \bm{\Phi} [:, j] , \bm{A} [:, j] \right>$
\tcp{exact coefficient along the unnormalized Gram-Schmidt vector; ratio between the projections of residual and the reduced basis on the Gram-Schmidt direction}

$\bm{z} [j] \leftarrow \textsc{Round} \left( \zeta, \mathbb{Z} \right)$
\tcp{round to the nearest plane}

$\bm{y}' \leftarrow \bm{y}' - \bm{A} [:, j] \bm{z} [j]$
\tcp{update the residual}

}

$\bm{z} \leftarrow \bm{T} \bm{z}$
\tcp{map integer solution back to the original basis and return}

\end{algorithm}
\end{minipage}
\end{figure}

\textbf{Babai's error bound.}
\cref{fig:babai_2d} shows the rounding boundaries of the optimal (e), round-to-nearest (RTN) (f), and Babai's algorithm without basis reduction (g-h). Compared to RTN, Babai's algorithm generates rectangular partitions and thus has a smaller worst-case error.
The error bound has been proven in \citet{babai1986}.
Formally, let $\bm{\Phi} = \left[ \bm{\phi}_{1}, \dots, \bm{\phi}_{c} \right]$ be the set of normalized Gram-Schmidt vectors of the LLL-reduced basis $\bm{A} = \left[ \bm{a}_{1}, \dots, \bm{a}_{c} \right]$. Let $\tilde{\bm{A}} = \left[ \tilde{\bm{a}}_{1}, \dots, \tilde{\bm{a}}_{c} \right]$ denote the unnormalized Gram-Schmidt vectors with $\tilde{\bm{a}}_{j} = \left< \bm{\phi}_{j} , \bm{a}_{j} \right> \bm{\phi}_{j}$.
At iteration~$j$, the algorithm replaces the exact coefficient $\zeta$ with the closest integer, so the deviation satisfies
$\left| \zeta - \bm{z} [j] \right| \le \frac{1}{2}$.
Hence, the error component along $\tilde{\bm{a}}_{j}$ has norm at most $\frac{1}{2} \left\| \tilde{\bm{a}}_{j} \right\|$.
Because the $\tilde{\bm{A}}$ is orthogonal, these error components add in Euclidean norm, giving a bound on the residual (error) vector $\bm{y}'$:
$\left\| \bm{y}' \right\|^2 \le \frac{1}{4} \sum_{j=1}^{c}\| \tilde{\bm{a}}_{j}\|^2 = \frac{1}{4} \sum_{j=1}^{c} \left< \bm{\phi}_{j} , \bm{a}_{j} \right>^2$.
Babai's algorithm guarantees to return the center vector of the hyper-cuboid (\cref{fig:babai_2d}~(g)) constructed by the unnormalized Gram-Schmidt vectors $\tilde{\bm{A}}$ where the target $\bm{y}$ is located.
Equality is attained when the target $\bm{y}$ lies at the corner of the hyper-cuboid, so the bound is tight.
\citet{babai1986} additionally proved a relative error bound for $\gamma$ with $\left\| \bm{B} \bm{z} - \bm{y} \right\| \le \gamma \cdot \min_{\bm{z}' \in \mathbb{Z}^{c}} \left\| \bm{B} \bm{z}' - \bm{y} \right\|$. The bound is $
1 \le \gamma \le \sqrt{1 + \max_{1 \le j \le c} \frac{\sum_{j'=1}^{j} \left\| \tilde{\bm{a}}_{j'} \right\|^{2}}{\left\| \tilde{\bm{a}}_{j} \right\|^{2}}}
\le \sqrt{c + 1} \cdot \max_{1 \le j' \le j \le c} \frac{\left\| \tilde{\bm{a}}_{j'} \right\|}{\left\| \tilde{\bm{a}}_{j} \right\|}
$.

\section{Theoretical Results}
\label{sec:theory}

We first show that weight quantization is an instance of the classical closest vector problem (CVP) in \cref{sec:quant_cvp}, which allows us to work in a lattice defined by the Hessian. We then reinterpret OBQ's, equivalently GPTQ's, error propagation step as a nearest hyperplane projection in \cref{sec:obq_babai}, establishing our main equivalence in \cref{sec:gptq_babai}: GPTQ, running back-to-front, coincides exactly with Babai's nearest plane algorithm. This equivalence allows us to import Babai's guarantees to obtain a tight, layer-wise error bound in the no-clipping setting in \cref{sec:error_bound}. Finally, we analyze how quantization order influences this bound in \cref{sec:quant_order}.

\subsection{Equivalence Between L2 Quantization and CVP}
\label{sec:quant_cvp}

A quantization problem with the L2 objective $\mathrm{argmin}_{\bm{z}_{i} \in \mathbb{Z}_{\dagger}^{c}} \left\| \bm{X} \mathrm{\ diag} \left( \bm{s}_{i} \right) \bm{z}_{i} - \bm{X} \bm{w}_{i} \right\|^{2}$ and a CVP with the L2 distance $\mathrm{argmin}_{\bm{z} \in \mathbb{Z}^{c}} \left\| \bm{B} \bm{z} - \bm{y} \right\|^{2}$ share the same solution ($\bm{z} = \bm{z}_i$) whenever the structural conditions $\bm{B} = \bm{X} \mathrm{\ diag} \left( \bm{s}_{i} \right)$ and $\bm{y} = \bm{X} \bm{w}_{i}$ hold and the solution domain matches. To ensure the solution domain matches, we can either disable the clipping in the quantization setup (setting $\mathbb{Z}_{\dagger} = \mathbb{Z}$) or enable the clipping in the CVP setup (making $\bm{z} \in \mathbb{Z}_{\dagger}^{c}$).
\cref{tab:lattice_dictionary} is a take-away dictionary showing the correspondence between the quantization and CVP concepts.

\begin{table}[htpb]
\caption{Quantization-CVP dictionary for the output channel $i$.}
\label{tab:lattice_dictionary}
\begin{center}
\begin{tabular}{l@{\hspace{1em}}l@{\hspace{0.1em}}}
\toprule
Quantization symbol & CVP interpretation \\
\midrule
Input activation $\bm{X} \in \mathbb{R}^{n \times c}$ & Basis directions (columns are generators) \\
Scale $\bm{s}_{i} \in \mathbb{R}_{\ne 0}^{c}$ & Basis stretches \\
$\bm{B}_{(i)} = \bm{X} \mathrm{\ diag} \left( \bm{s}_i \right) \in \mathbb{R}^{n \times c}$ & Lattice basis (columns are generators) \\
Weight $\bm{w}_{i} \in \mathbb{R}^{c}$ & Floating-point coordinates on the unstretched basis \\
Integer weight representation $\bm{z}_{i} \in \mathbb{Z}_{\dagger}^{c}$ & Integer coordinates on the lattice basis \\
Dequantized weight $\bm{q}_{i} = \mathrm{diag} \left( \bm{s}_{i} \right) \bm{z}_{i} \in \mathbb{R}^{c}$ & Dequantized coordinates on the unstretched basis \\
Target output activation $\bm{y}_{(i)} = \bm{X} \bm{w}_{i} \in \mathbb{R}^{n}$ & External target vector to approximate \\
\bottomrule
\end{tabular}
\end{center}
\end{table}

We can introduce a factor of the Hessian matrix, $\bm{\mathcal{X}} = \left[ \bm{\chi}_{1}, \dots, \bm{\chi}_{c} \right]$ with $\bm{X}^\top \bm{X} = \bm{\mathcal{X}}^\top \bm{\mathcal{X}}$. The loss can then be reformulated as $\left\| \bm{\mathcal{X}} \mathrm{\ diag} \left( \bm{s}_{i} \right) \bm{z}_{i} - \bm{\mathcal{X}} \bm{w}_{i} \right\|^{2}$.

\begin{theorem}[Quantization and CVP]
\label{thm:hessian_decompose}
The CVPs using any possible factors $\bm{\mathcal{X}}$ of the Hessian matrix $\bm{X}^\top \bm{X}$ are equivalent under an orthogonal transformation (rotation and reflection) of the lattice and external target vector.
\end{theorem}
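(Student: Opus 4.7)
The plan is to split the equivalence into two elementary ingredients: (i) any two factors of the same Gram matrix differ by an orthogonal transformation acting on the left, and (ii) the L2 CVP objective is invariant under orthogonal transformations of the ambient space. Applying the transformation from (i) simultaneously to the basis $\bm{B} = \bm{\mathcal{X}}\,\mathrm{diag}(\bm{s}_i)$ and target $\bm{y} = \bm{\mathcal{X}}\bm{w}_i$ then yields the claimed equivalence.

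For step (i), given two factors $\bm{\mathcal{X}}_1 \in \mathbb{R}^{m_1 \times c}$ and $\bm{\mathcal{X}}_2 \in \mathbb{R}^{m_2 \times c}$ sharing the Gram matrix $\bm{X}^\top \bm{X}$, I would first zero-pad each matrix to a common ambient dimension $m = \max(m_1, m_2)$, an operation that leaves the Gram product unchanged. Taking thin QR decompositions $\bm{\mathcal{X}}_i = \bm{Q}_i \bm{R}$, where $\bm{R}$ is the unique upper-triangular Cholesky factor of $\bm{X}^\top \bm{X}$ with positive diagonal (guaranteed by the full column rank of $\bm{X}$), each $\bm{Q}_i$ can be extended to a square orthogonal matrix $\tilde{\bm{Q}}_i \in O(m)$ by completing its columns into any orthonormal basis of the orthogonal complement. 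Then $\bm{U} = \tilde{\bm{Q}}_1 \tilde{\bm{Q}}_2^\top$ is orthogonal and satisfies $\bm{U}\bm{\mathcal{X}}_2 = \bm{\mathcal{X}}_1$. Alternative sign choices on the diagonal of $\bm{R}$ get absorbed into $\bm{U}$, which is precisely the "rotation and sign changes" wording of the theorem.

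For step (ii), apply $\bm{U}$ simultaneously to basis and target: orthogonality gives $\|\bm{U}\bm{B}\bm{z} - \bm{U}\bm{y}\|^2 = \|\bm{B}\bm{z} - \bm{y}\|^2$ for every $\bm{z} \in \mathbb{Z}^c$ (or $\bm{z} \in {\mathbb{Z}^\dagger}^c$), so the two CVP instances coincide pointwise in objective value and share the same optimizers. Composing (i) and (ii) then shows that replacing one factor by another is exactly the rigid motion $\bm{U}$ applied to the lattice and to the external target.

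The likely main obstacle is not conceptual but bookkeeping: different natural factorizations (e.g.\ $\bm{X}$ itself versus the $c \times c$ Cholesky factor) live in different ambient dimensions, so the meaning of "orthogonal transformation" has to be carefully pinned down. I would resolve this once with the zero-padding trick above, after which the rest of the proof reduces to routine linear algebra.
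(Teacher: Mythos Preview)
Your proposal is correct and follows the same underlying idea as the paper: two factorizations of the same Gram matrix are related by an orthogonal map, and the L2 CVP objective is invariant under such maps. The paper's proof is considerably terser, simply observing that $\bm{\mathcal{X}}^\top\bm{\mathcal{X}} = \bm{\mathcal{X}}'^\top\bm{\mathcal{X}}'$ forces all pairwise inner products $\bm{\chi}_{j_1}^\top\bm{\chi}_{j_2}$ to match, hence lengths and angles coincide, and leaving the existence of the orthogonal transformation and the invariance of the objective implicit. Your version is more explicit on both counts: you actually construct the orthogonal map $\bm{U}$ via the shared Cholesky/QR factor and the completed orthonormal bases, and you spell out step (ii). You also address a point the paper glosses over entirely, namely that different natural factors (e.g., $\bm{X}\in\mathbb{R}^{n\times c}$ versus a square Cholesky factor in $\mathbb{R}^{c\times c}$) live in ambient spaces of different dimension, which you resolve by zero-padding before invoking a single square orthogonal $\bm{U}$. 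This makes your argument more self-contained, at the cost of a little extra bookkeeping; the paper's version is shorter but leaves that gap for the reader to fill.
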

\begin{proof}
Let $\bm{\mathcal{X}}$ and $\bm{\mathcal{X}}'$ be two possible factors of the Hessian matrix with $\bm{\mathcal{X}}^\top \bm{\mathcal{X}} = \bm{\mathcal{X}}'^\top \bm{\mathcal{X}}'$. The inner products $\left< \bm{\chi}_{j_1}, \bm{\chi}_{j_2} \right>$ and $\left< \bm{\chi}_{j_1}', \bm{\chi}_{j_2}' \right>$ must be equal for all $1 \le j_1, j_2 \le c$. In other words, the lengths $\left\| \bm{\chi}_{j_1} \right\|= \left\| \bm{\chi}_{j_1}' \right\|$, and the angles $\angle \left( \bm{\chi}_{j_1}, \bm{\chi}_{j_2} \right) = \angle \left( \bm{\chi}_{j_1}', \bm{\chi}_{j_2}' \right)$, for all $1 \le j_1, j_2 \le c$. 
\end{proof}

According to \cref{thm:hessian_decompose}, any decomposition factor $\bm{\mathcal{X}}$ of the Hessian matrix $\bm{X}^\top \bm{X}$ can be used instead of $\bm{X}$ without changing the geometric properties of the CVP and its associated quantization problem. This is useful for reducing the computational cost, e.g., we may use a square matrix $\bm{\mathcal{X}} \in \mathbb{R}^{c \times c}$ instead of the rectangular matrix $\bm{X} \in \mathbb{R}^{n \times c}$.

\subsection{OBQ's Geometric Interpretation}
\label{sec:obq_babai}

We first demonstrate the geometric interpretation of OBQ (GPTQ's slower predecessor) to facilitate our equivalence proof of GPTQ and Babai's algorithm in \cref{sec:gptq_babai}.

\begin{figure}[ht]
\begin{center}
\includegraphics[width=\textwidth]{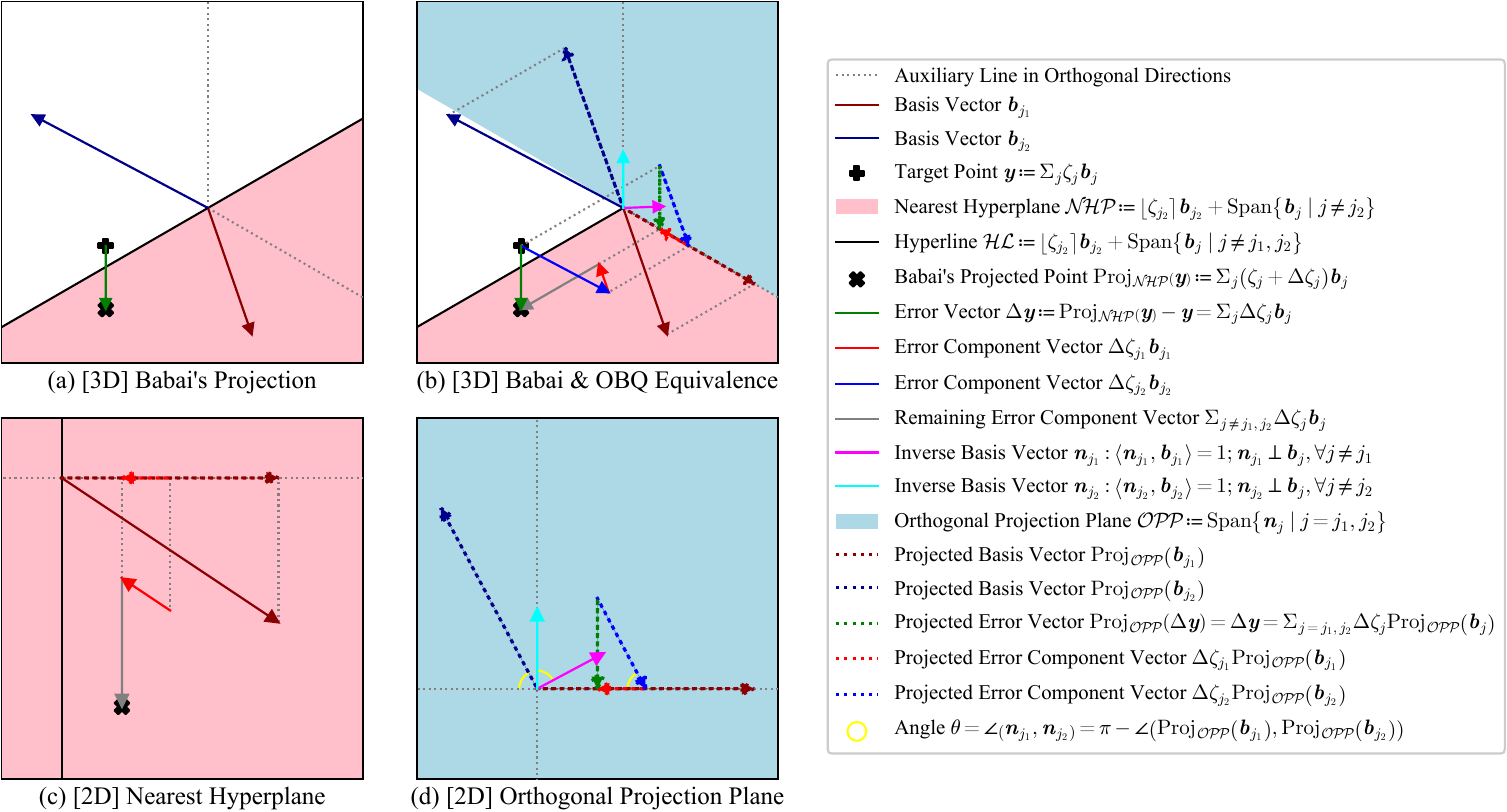}
\end{center}
\caption{Equivalence of OBQ's error propagation and Babai's projection. \textbf{(a)} 3D plot showing the target being projected onto the nearest plane. \textbf{(b)} 3D plot showing how the projection error is propagated. \textbf{(c)} 2D plot showing the vectors on the nearest hyperplane in (a-b). \textbf{(d)} 2D plot showing the vectors on the orthogonal projection plane in (b).}
\label{fig:obq_3d}
\end{figure}

\begin{theorem}[Error Propagation and Babai's Projection]
\label{thm:obq_babai}
Babai's nearest plane algorithm iteratively projects the target vector onto the nearest hyperplane and rounds the coefficient.
The OBQ error propagation step (Eq.~\ref{eq:obq_update}) is exactly this projection on the original basis $\bm{B} = \bm{X} \mathrm{\ diag} \left( \bm{s}_{i} \right)$ without basis reduction.
\end{theorem}
\begin{proof}
Let $\bm{B} = \left[ \bm{b}_{1}, \dots, \bm{b}_{c} \right]$ be the basis with $\bm{b}_{j}$ being a basis vector.
Let $J$ be the set of unprojected indices with ${j}_{1}, {j}_{2} \in J$ and ${j}_{1} \ne {j}_{2}$.
Let $\bm{y} = \sum_{j \in J} \zeta_{j} \bm{b}_{j}$ be the current residual target where $\zeta_{j} \in \mathbb{R}$ is a real number to be rounded to integers.
Let $\mathcal{NHP} \coloneq \left\lfloor {\zeta}_{{j}_{2}} \right\rceil \bm{b}_{{j}_{2}} + \mathrm{Span} \left\{ \bm{b}_{j} \; | \; {j} \ne {j}_{2} \right\}$ be the nearest hyperplane that is orthogonal to the Gram-Schmidt vector $\bm{b}_{{j}_{2}} - \sum_{{j} \ne {j}_{2}} \mathrm{Proj}_{\bm{b}_{j}} \left( \bm{b}_{{j}_{2}} \right)$.
\cref{fig:obq_3d} (a) is a 3D plot showing the projection error vector $\Delta \bm{y} = \mathrm{Proj}_{\mathcal{NHP}} \left( \bm{y} \right) - \bm{y}$.
We focus on analyzing the error propagation in the direction of basis $\bm{b}_{{j}_{1}}$ induced by the projection of basis $\bm{b}_{{j}_{2}}$ and collapse the span of other basis vectors to a single dimension as illustrated by the hyperline $\mathcal{HL} \coloneq \left\lfloor {\zeta}_{{j}_{2}} \right\rceil \bm{b}_{{j}_{2}} + \mathrm{Span} \left\{ \bm{b}_{j} | {j} \ne {j}_{1}, {j}_{2} \right\}$.
\cref{fig:obq_3d} (b) is a 3D plot showing the decomposition of the error $\Delta \bm{y} = \sum_{j \in J} \Delta {\zeta}_{j} \bm{b}_{j}$ as the error component vectors in the basis directions.
\cref{fig:obq_3d} (c) is a 2D plot showing the vectors on plane $\mathcal{NHP}$.
The number ${\zeta}_{j}$ will be updated to ${\zeta}_{j} + \Delta {\zeta}_{j}$ such that $\mathrm{Proj}_{\mathcal{NHP}} \left( \bm{y} \right) = \sum_{j \in J} \left( \zeta_{j} + \Delta {\zeta}_{j} \right) \bm{b}_{j}$.
Next, let $\bm{N} = \bm{B}^{-\top} = \left[ \bm{n}_{1}, \dots, \bm{n}_{c} \right]$ be the inverse basis. Then, we have $\left< \bm{n}_{j} , \bm{b}_{j} \right> = 1$ and $\bm{n}_{j} \perp \bm{b}_{j'}, \forall j \ne j'$.
We project all the vectors in \cref{fig:obq_3d} (b) onto the orthogonal projection plane $\mathcal{OPP} \coloneq \mathrm{Span} \left\{ \bm{n}_{j} | j = {j}_{1}, {j}_{2} \right\}$ that is orthogonal to the hyperline $\mathcal{HL}$, and continue the proof in the 2D geometry in \cref{fig:obq_3d} (d).
Denote the angle $\theta = \angle \left( \bm{n}_{{j}_{1}}, \bm{n}_{{j}_{2}} \right) = \pi - \angle \left( \mathrm{Proj}_{\mathcal{OPP}} \left( \bm{b}_{{j}_{1}} \right), \mathrm{Proj}_{\mathcal{OPP}} \left( \bm{b}_{{j}_{2}} \right) \right)$.
Then, $
\frac{\Delta \zeta_{{j}_{1}} \left\| \mathrm{Proj}_{\mathcal{OPP}} \left( \bm{b}_{{j}_{1}} \right) \right\|}
{\Delta \zeta_{{j}_{2}} \left\| \mathrm{Proj}_{\mathcal{OPP}} \left( \bm{b}_{{j}_{2}} \right) \right\|}
= \cos \theta
= \frac{\left< \bm{n}_{{j}_{1}} , \bm{n}_{{j}_{2}} \right>}{\left\| \bm{n}_{{j}_{1}} \right\| \left\| \bm{n}_{{j}_{2}} \right\|}
= \frac{\left\| \bm{n}_{{j}_{2}} \right\|}{\left\| \bm{n}_{{j}_{1}} \right\|} \frac{\left< \bm{n}_{{j}_{1}} , \bm{n}_{{j}_{2}} \right>}{\left< \bm{n}_{{j}_{2}} , \bm{n}_{{j}_{2}} \right>}
$.
For ${j} = {j}_{1}, {j}_{2}$, $
\left\| \mathrm{Proj}_{\mathcal{OPP}} \left( \bm{b}_{j} \right) \right\| \left\| \bm{n}_{j} \right\|
= \frac{\left< \mathrm{Proj}_{\mathcal{OPP}} \left( \bm{b}_{j} \right) , \bm{n}_{j} \right>}{\cos \left( \frac{\pi}{2} - \theta \right)}
= \frac{\left< \bm{b}_{j} , \bm{n}_{j} \right>}{\cos \left( \frac{\pi}{2} - \theta \right)}
= \frac{1}{\cos \left( \frac{\pi}{2} - \theta \right)}
$.
For $j, j' \in \left\{ {j}_{1}, {j}_{2} \right\}$, $
\left< \bm{n}_{j} , \bm{n}_{j'} \right>
= \left( \bm{N}^{\top} \bm{N} \right) [j, j']
= \left( \bm{B}^{\top} \bm{B} \right)^{-1} [j, j']
$.
Combining the above equations,
$
\Delta \zeta_{{j}_{1}}
= \frac{\left\| \mathrm{Proj}_{\mathcal{OPP}} \left( \bm{b}_{{j}_{2}} \right) \right\| \left\| \bm{n}_{{j}_{2}} \right\|}{\left\| \mathrm{Proj}_{\mathcal{OPP}} \left( \bm{b}_{{j}_{1}} \right) \right\| \left\| \bm{n}_{{j}_{1}} \right\|} \frac{\left< \bm{n}_{{j}_{1}} , \bm{n}_{{j}_{2}} \right>}{\left< \bm{n}_{{j}_{2}} , \bm{n}_{{j}_{2}} \right>} \Delta \zeta_{{j}_{2}}
= \frac{\left< \bm{n}_{{j}_{1}} , \bm{n}_{{j}_{2}} \right>}{\left< \bm{n}_{{j}_{2}} , \bm{n}_{{j}_{2}} \right>} \Delta \zeta_{{j}_{2}}
= \frac{\left( \bm{B}^{\top} \bm{B} \right)^{-1} [{j}_{1}, {j}_{2}]}{\left( \bm{B}^{\top} \bm{B} \right)^{-1} [{j}_{2}, {j}_{2}]} \Delta \zeta_{{j}_{2}}
$.
Finally, substituting $\bm{B} = \left( \bm{X} \mathrm{\ diag} \left( \bm{s}_{i}\right) \right) [:, J]$ and ${\zeta}_{j} = \frac{\bm{w}_{i} [j]}{\bm{s}_{i} [j]}$ completes the proof.
\end{proof}

\begin{figure}[ht]
\begin{center}
\includegraphics[width=.7\textwidth]{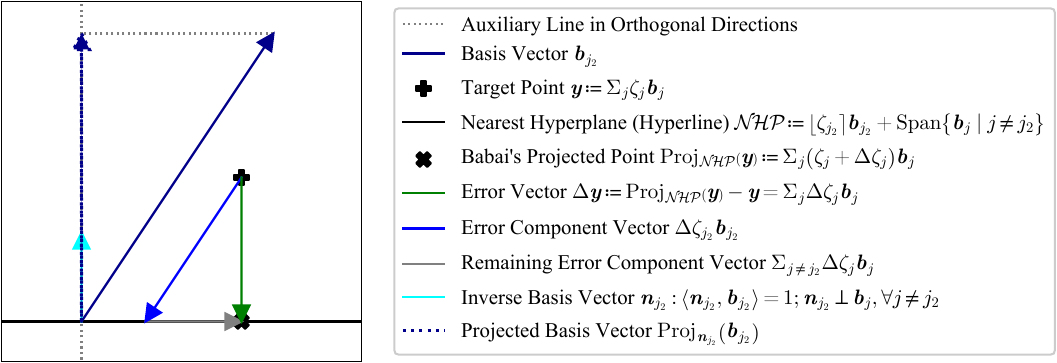}
\end{center}
\caption{Geometric interpretation of OBQ's quantization order. This 2D plot shows the target being projected onto the nearest plane.}
\label{fig:obq_2d}
\end{figure}

\begin{corollary}[OBQ Dimension Selection]
\label{thm:obq_babai_argmin}
At each dimension selection step (Eq.~\ref{eq:obq_argmin}), OBQ selects the not-yet-quantized dimension $j$ such that the nearest hyperplane of dimension $j$ is closest to the target residual vector.
\end{corollary}
\begin{proof}
We use the same notations defined in \cref{thm:obq_babai}.
\cref{fig:obq_2d} is a 2D plot showing the distance (projection error or quantization error) between the target residual vector $\bm{y}$ and the nearest hyperplane $\mathcal{NHP}$ of the basis $\bm{b}_{{j}_{2}}$. For better illustration, we collapse $\mathcal{NHP}$ into a single dimension.
The distance $\left\| \Delta \bm{y} \right\|$ can be expressed as $\left\| \Delta \bm{y} \right\|
= \left\| \mathrm{Proj}_{\bm{n}_{{j}_{2}}} \left( \Delta \bm{y} \right) \right\|
= \left| \Delta {\zeta}_{{j}_{2}} \right| \left\| \mathrm{Proj}_{\bm{n}_{{j}_{2}}} \left( \bm{b}_{{j}_{2}} \right) \right\|
= \frac{\left| \Delta {\zeta}_{{j}_{2}} \right| \left| \left< \bm{b}_{{j}_{2}}, \bm{n}_{{j}_{2}} \right> \right|}{\left\| \bm{n}_{{j}_{2}} \right\|}
= \frac{\left| \Delta {\zeta}_{{j}_{2}} \right|}{\left\| \bm{n}_{{j}_{2}} \right\|}$.
For each $\bm{w}_{i}$, OBQ independently selects
$j = \mathrm{argmin}_{j \in J} \frac{\left( \bm{q}_{i} [j] - \bm{w}_{i} [j] \right)^{2}}{\left( \bm{X} [:, J]^{\top} \bm{X} [:, J] \right)^{-1} [j, j]}
= \mathrm{argmin}_{j \in J} \frac{\left( \Delta {\zeta}_{j} \right)^{2}}{\left< \bm{n}_{j} , \bm{n}_{j} \right>}
= \mathrm{argmin}_{j \in J} \frac{\left| \Delta {\zeta}_{j} \right|}{\left\| \bm{n}_{j} \right\|}
$ as the next dimension to quantize, which is exactly minimizing this distance.
\end{proof}

\subsection{GPTQ and Babai's Algorithm}
\label{sec:gptq_babai}

Originally, GPTQ (\cref{alg:gptq}) runs from the first to the last dimension ($j \leftarrow 1 \ \mathrm{to} \ c$) while Babai's algorithm (\cref{alg:babai}) runs from the last to the first dimension ($j \leftarrow c \ \mathrm{to} \ 1$).
This is the only (superficial) difference between the two algorithms, as formalized below.

\begin{theorem}[GPTQ and Babai]
\label{thm:gptq_babai}
GPTQ and Babai's algorithm without basis reduction will have the same results if we align the dimensional order of these two algorithms, e.g., running GPTQ from the last to the first dimension.
\end{theorem}
\begin{proof}
We prove this theorem both geometrically and algebraically.
We first present the geometric proof.
\cref{thm:obq_babai} shows that each intermediate weight vector produced by OBQ, equivalently GPTQ, can be viewed as Babai's residual vector in the activation space.
At step~$j$ (running from the last to the first dimension, $j \leftarrow c \ \mathrm{to} \ 1$), GPTQ's error propagation update is exactly Babai's projection at step $j$, which projects the current residual of the target vector onto the hyperplane orthogonal to the $j$-th Gram-Schmidt vector.

Alternatively, we present a more rigorous algebraic proof.
\cref{sec:app_babai_quant_algo} describes the exact quantization procedures using Babai's algorithm in more detail, with the pseudocode in \cref{alg:babai_quantize}.
\cref{sec:app_proof_gptq_babai} contains the equivalence proof, in which we proceed in three steps. First, we rewrite GPTQ to track the cumulative quantization error and show that this form is algebraically equivalent to the standard implementation. Second, we run GPTQ in the back-to-front order and replace the lower triangular factor with an upper triangular one so that each update affects only the not-yet-quantized coordinates. Third, we prove that the step-wise rounding decisions of the back-to-front GPTQ coincide with those of Babai's algorithm.
\end{proof}

\textbf{Geometric interpretation of GPTQ.}
\cref{thm:gptq_babai} shows that, if we regard the activations as the lattice basis and transform the floating-point weight vector to a target vector in the activation space, GPTQ performs an \emph{orthogonal walk} through a nested sequence of affine subspaces in a pre-computed dimensional order.

\textbf{Ineffectiveness of composing algorithms.}
A seemingly appealing idea is to take the solution returned by any Babai iteration and then perform one further GPTQ-style error propagation step on the weights in the activation space, hoping to push the approximation even closer to the optimum.
However, as proven in \cref{sec:app_gptq_babai_no_combine}, such an extra update vanishes: the final results of $\bm{Z}$ and $\bm{Q}$ remain unchanged.
In other words, once Babai's projection has been executed, any subsequent GPTQ-style correction is algebraically redundant.
This confirms that the equivalence in \cref{thm:gptq_babai} is already tight; neither algorithm can be strengthened by composition.

\subsection{GPTQ's Error Bound}
\label{sec:error_bound}

Having established the correspondence between GPTQ and Babai's nearest plane algorithm, we can now import Babai's approximation guarantee to obtain an upper bound on the layer-wise quantization error in the no-clipping setting.

\begin{theorem}[GPTQ Error Bound]
\label{thm:error_bound}
Assume no clipping ($\mathbb{Z}_{\dagger}=\mathbb{Z}$) and let $\bm{T}$ be the permutation matrix of the reversed GPTQ quantization order (equivalently $\bm{P}$ with the reversed column order). Let $\bm{D}$ be the diagonal matrix of the LDL decomposition of the permuted Hessian matrix $\bm{T}^\top \bm{X}^\top \bm{X} \bm{T}$.
For every output channel $i$ ($1 \le i \le r$) produced by Babai's algorithm, or equivalently the GPTQ algorithm executed back-to-front, the (absolute) quantization error has a tight upper bound: $\left\| \bm{X} \mathrm{\ diag} \left( \bm{s}_{i} \right) \bm{z}_{i} - \bm{X} \bm{w}_{i} \right\|^{2} \le \frac{1}{4} \left( \bm{T}^{-1} \bm{s}_{i} \right)^\top \bm{D} \left( \bm{T}^{-1} \bm{s}_{i} \right)$.
For the relative bound for $\gamma$ with $\left\| \bm{X} \mathrm{\ diag} \left( \bm{s}_{i} \right) \bm{z}_{i} - \bm{X} \bm{w}_{i} \right\| \le \gamma \cdot \min_{\bm{z}'_{i} \in \mathbb{Z}^{c}} \left\| \bm{X} \mathrm{\ diag} \left( \bm{s}_{i} \right) \bm{z}'_{i} - \bm{X} \bm{w}_{i} \right\|$, we have $
1 \le \gamma \le \sqrt{1 + \max_{1 \le j \le c} \frac{\sum_{j'=1}^{j} d_{j'}^{2}}{d_{j}^{2}}}
\le \sqrt{c + 1} \cdot \max_{1 \le j' \le j \le c} \frac{d_{j'}}{d_{j}}
$ where $d_{j} = \sqrt{\bm{D} [j, j]} \left| \left( \bm{T}^{-1} \bm{s}_{i} \right) [j] \right|$.
\end{theorem}
\begin{proof}
The full proof of \cref{thm:error_bound} is presented in \cref{sec:app_error_bound_proof}.
\end{proof}

If the scales $\bm{s}_{i}$ are small enough, we may assume the weights $\bm{w}_{i}$ are nearly uniformly distributed within the hyper-cuboid constructed by Babai's orthogonalized basis vectors, the expected absolute error will be $\frac{1}{3}$ of the worst-case bound. See \cref{sec:app_expected_error} for a proof.

\subsection{The Role of Quantization Order in GPTQ}
\label{sec:quant_order}

The quadratic form on the right-hand side of the absolute error bound in \cref{thm:error_bound} is sensitive to the pivot order of the LDL decomposition of the Hessian matrix; this is the quantization order.
Reordering the dimensions changes the entries of the diagonal matrix $\bm{D}$ before the scale $\bm{s}_{i}$ is ``weighted'' by them. A poor order may place large $\bm{D}$ entries against large $\bm{s}_{i}$ entries and hence inflate the bound.
For a batched quantization algorithm like GPTQ, the order should be independent of the output channel $i$. To develop a good heuristic order, a reasonable approximation to make, especially for large quantization group sizes, is that the elements of $\bm{s}_{i} [j]$ are equal for all $1 \le j \le c$. Then we can focus on finding the optimal pivot order for the LDL decomposition of the Hessian matrix $\bm{X}^\top \bm{X}$ to minimize $\mathrm{tr} \left( \bm{D} \right)$.

Finding the optimal order is NP-hard~\citep{RoseTarjanLueker1976}. However, heuristics often effectively reduce the trace term in practice.
Even with clipping, heuristics can reduce the error.
GPTQ introduces the act-order, the descending order of the Hessian diagonal, i.e., the ascending order of the Hessian diagonal when applied to Babai's algorithm.

To improve upon act-order, we propose the min-pivot order, which is essentially taking the minimum diagonal entry at each LDL (or Cholesky) decomposition step. This order can be calculated by \cref{alg:ldl_pivot}, which has cubic time complexity and does not increase the overall time complexity of quantization. This order also has a geometric interpretation, as the order of the Gram-Schmidt orthogonalization process of the basis: always taking the shortest residual vector as the next one to orthogonalize, agreeing with Babai's relative error bound. Across our preliminary runs (\cref{sec:app_quant_order_comparison}), min-pivot \emph{consistently} reduces $\mathrm{tr} \left( \bm{D} \right)$ relative to act-order, but the downstream accuracy gains are modest. We nevertheless report min-pivot as a principled choice, and view act-order as a cheap approximation that only considers the Hessian diagonal, which already captures most of the benefit when the Hessian matrix is well-conditioned.

\begin{figure}[!ht]
\centering
\begin{minipage}[t]{\linewidth}
\IncMargin{1.5em}
\begin{algorithm}[H]
\caption{Min-Pivot}
\label{alg:ldl_pivot}
\LinesNumbered
\SetKwInput{KwData}{Input}
\SetKwInput{KwResult}{Output}

\KwData{Hessian $\bm{H} \in \mathbb{R}^{c \times c}$}
\KwResult{order encoded as a permutation matrix $\bm{T} \in \left\{ 0, 1 \right\}^{c \times c}$}

$J \leftarrow \left\{ 1, \dots, c \right\}$
\tcp{initialize the not-yet-pivoted indices}

$\bm{T} \leftarrow \bm{0}$
\tcp{initialize the output permutation matrix}

\For{$j \gets 1 \ \mathrm{to} \ c$}{

$j' \leftarrow \mathrm{argmin}_{j' \in J} \bm{H} [j', j']$
\tcp{choose next index with the smallest current diagonal}

$\bm{H} \leftarrow \bm{H} - \bm{H} [:, j'] \bm{H} [j', :] / \bm{H} [j', j']$
\tcp{updates remaining entries with rank-1 Schur complement}

$\bm{T} [j', j] \leftarrow 1$
\tcp{record the index}

$J \leftarrow J \setminus \left\{ j' \right\}$
\tcp{mark pivot as used}

}

\end{algorithm}
\end{minipage}
\end{figure}

\section{Applications}
\label{sec:experiments}

The original GPTQ algorithm clips the overflowed integers at the rounding step, introducing large errors that violate the error bound in \cref{thm:error_bound}.
In this section, we explore error-guaranteed variants of GPTQ that work in the no-clipping regime.

We notice that enforcing no-clipping by simply increasing scales is counterproductive: larger scales enlarge the bound, and the resulting errors can exceed those of a clipped scheme such as MSE.
Hence, any practical no-clipping design must account for the weight distributions that are known to have heavy outliers~\citep{li2025icquant}.
We would still like to apply small scales, but use small bitwidths for the bulk of inliers while handling the overflowed outliers with more storage budget without clipping them.
We therefore propose two overflow-tolerant schemes.

\textbf{Scale-adjusted SpQR (SSQR).}
SpQR~\citep{dettmers2024spqr} keeps a small set of outliers in full precision, but it still leaves clipping in place: weights are grouped, the outliers and a shared scale are chosen per group before the GPTQ updates, and there is no guarantee the updated inlier weights stay within the representable range.
We design SSQR with a scale-adjustment mechanism to fix this issue.
For simplicity, we discard SpQR's second-level quantization for the scales.
For a weight vector $\bm{w}_{i} \in \mathbb{R}^{c}$, we represent the quantized weight $\bm{q}_{i} \in \mathbb{R}^{c}$ as $\mathrm{diag} \left( \bm{s}_{i} \right) \bm{z}_{i} + \bm{\xi}_{i}$ where $\bm{z} \in \mathbb{Z}_{\dagger}^{c}$ is the low-bitwidth integer weight vector, $\bm{s}_{i} \in \mathbb{R}_{\ne 0}^{c}$ is the floating-point scale vector with each scale shared per group (only one number per group is actually stored), and $\bm{\xi}_{i} \in \mathbb{R}^{c}$ is the sparse floating-point outlier vector (stored in the compressed sparse row format, CSR) that captures all the overflowed weights after GPTQ's error propagation.
The scale-adjustment mechanism tunes the scale $\bm{s}_{i}$ until the density of $\bm{\xi}_{i}$ satisfies the specified rate. Because exhaustive trial-and-error over per-group scales is infeasible in large layers, the mechanism only proportionally changes $\bm{s}_{i}$ so that the search space reduces to one dimension. With the observation that the outlier rate is negatively related to the scales in general, this can be done via binary search: initialize $\bm{s}_{i}$ using MSE, quantize $\bm{w}_{i}$ with the specified format using GPTQ without clipping, calculate the density of $\bm{\xi}_{i}$, and adjust $\bm{s}_{i}$ and iterate.
\cref{sec:app_experiments_algo} \cref{alg:ssqr} is the pseudocode.

\textbf{Huffman-encoded post-training quantization (HPTQ).}
To better align with the infinite, unconstrained lattice in CVP, we design HPTQ, which represents both inliers and outliers in a unified, equal-spaced integer grid.
The idea is to use Huffman encoding, which was also explored for network compression by \citet{choi2017towards}.
We quantize the weight matrix $\bm{W} \in \mathbb{R}^{c \times r}$ as $\bm{Q} = s \bm{Z}$ with a single scalar $s \in \mathbb{R}_{\ne 0}$ and integers $\bm{Z} \in \mathbb{Z}^{c \times r}$. We select $s$ via an entropy-guided binary search: initialize a range proportional to the maximum weight, quantize to unclipped integers with GPTQ, measure the Huffman coding cost of $\bm{Z}$, and adjust $s$ until the encoded bits meet a target average bitwidth. This yields uneven-bitwidth representations that preserve accuracy while meeting a compression budget.
\cref{sec:app_experiments_algo} \cref{alg:hptq} is the pseudocode.

Experiments compare round-to-nearest (RTN), original GPTQ, HPTQ, and SSQR with 1\textasciitilde 5\% outliers.
We also include Huffman-encoded RTN (HRTN) as a baseline to HPTQ, which mirrors HPTQ but replaces GPTQ with RTN (pseudocode: \cref{sec:app_experiments_algo} \cref{alg:hrtn}).
The quantization order is act-order for all methods.
RTN, GPTQ, and SSQR use group size 128.
RTN and GPTQ calculate the scales with the MSE method.
Figure~\ref{fig:gptq_huffman_plot} (a-b) shows that HPTQ sustains low perplexity on Qwen3-8B at reduced bitwidths and scales favorably across model sizes, with 3.125-bit emerging as Pareto optimal in terms of perplexity vs compression.
Further information can be found in \cref{sec:app_experiments}, including the experimental setup (\cref{sec:app_experiments_setup}), additional metrics such as benchmark results (Qwen3 models: \cref{sec:app_experiments_metrics_qwen}; Llama models: \cref{sec:app_experiments_metrics_llama}), and comparison with other methods (\cref{sec:app_experiments_metrics_comparison}).

\begin{figure}[ht]
\begin{center}
\includegraphics[width=\textwidth]{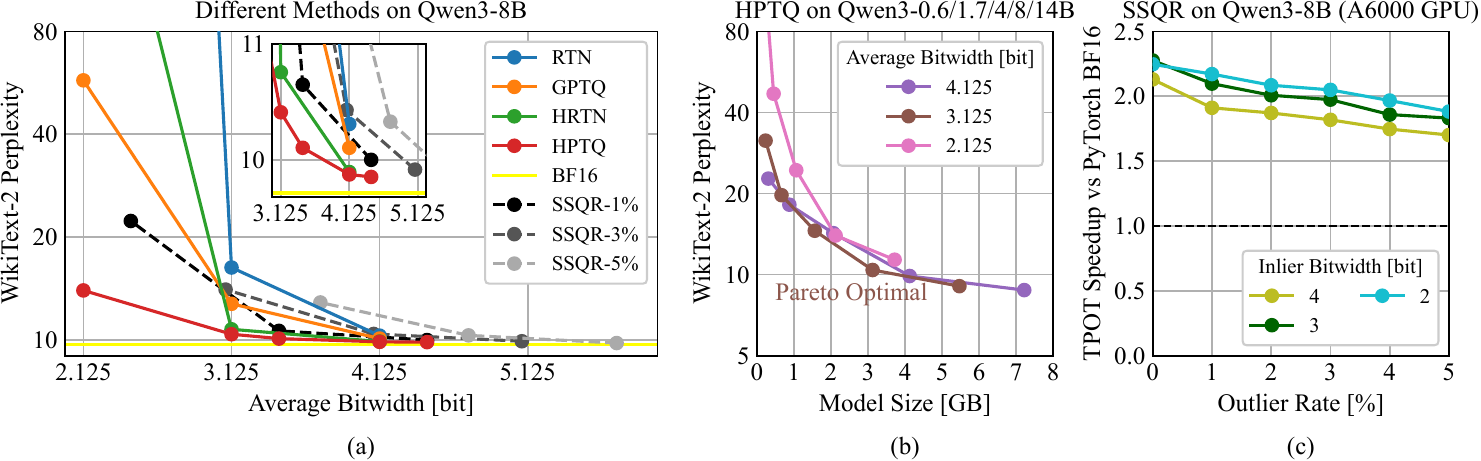}
\end{center}
\caption{
\textbf{(a)} Comparison of quantization methods (RTN, GPTQ, HRTN, HPTQ, and SSQR with 1\textasciitilde 5\% outliers) on Qwen3-8B evaluated on WikiText-2. 
Perplexity is plotted against the average effective bitwidth per weight, with the BF16 baseline shown as a horizontal line.
HPTQ has the best (lowest) perplexity. See \cref{sec:app_experiments_metrics_qwen} for zero-shot evaluation results.
\textbf{(b)} Scaling behavior of HPTQ across multiple model sizes (0.6B, 1.7B, 4B, 8B, 14B) and bitwidths (4.125, 3.125, 2.125). 
The x-axis denotes the effective model size after quantization, and the y-axis shows perplexity on WikiText-2. 
Each curve corresponds to a fixed bitwidth, while points along a curve represent different model scales.
Using our HPTQ method, 3.125-bit stands out as the Pareto optimal bitwidth (optimal perplexity vs compression trade-offs).
\textbf{(c)} End-to-end inference speedups of our SSQR kernel vs the PyTorch BF16 matrix multiplication kernel on NVIDIA RTX A6000 GPU. We run the Qwen3-8B model across multiple outlier rates (0\%\textasciitilde 5\%) and inlier bitwidths (4, 3, 2) and measure the TPOT (time per output token) metric. Our kernel achieves about 2$\times$ speedup end-to-end.
}
\label{fig:gptq_huffman_plot}
\end{figure}

\textbf{CUDA inference kernel.}
We implement an inference kernel for SSQR in CUDA/C++, optimized for low-batch latency, handling both dense inliers and sparse outliers while targeting the Ampere platform. The kernel supports group-quantized inlier weights in the 2-4-bit range with scales in 16 bits and support for unstructured sparsity, used to avoid weight clipping.
\cref{fig:gptq_huffman_plot} (c) visualizes the end-to-end speedup in the LLM decoding phase vs the PyTorch BF16 kernel. Our kernel achieves about 2$\times$ speedup across different bitwidth and outlier rate settings when generating 128 new tokens at a batch size of 1. Technical details and layer-wise speedups are described in \cref{sec:app_experiments_kernal}.

\section{Closing Remarks}
\label{sec:conclusion}

\textbf{Summary.}
We have shown that GPTQ, when executed back-to-front, is mathematically identical to Babai's nearest plane algorithm applied to the lattice defined by a layer's Hessian without basis reduction.
Based on this theory, we propose error-guaranteed practical methods and provide optimized CUDA kernels that deliver low-latency inferences.
More broadly, the lattice perspective opens a two-way channel: decades of the closest vector problem (CVP) heuristics can refine practical quantizers, while the behavior of massive neural networks may, in turn, inspire new questions for lattice theory.

\textbf{Future work.}
Looking ahead, extending the analysis to clipped grids and exploring (scale-aware) basis reductions are the immediate next steps.
However, we emphasize that the state-of-the-art 4-bit floating-point formats (e.g., MXFP4 and NVFP4) are essentially no-clipping~\citep{egiazarian2025bridginggappromiseperformance,chen2026wushnearoptimaladaptivetransforms}: since they use very small quantization groups (32 and 16, respectively), the near-optimal choice of scale is AbsMax per-group, which leads to no weight being clipped. As such, a no-clipping analysis of these formats would directly apply to actual practice.
We will also extend the lattice view beyond weight-only linear layers to activation and KV-cache quantization.

\subsubsection*{Ethics Statement}
\addcontentsline{toc}{section}{Ethics Statement}

Throughout this work, we have strictly adhered to the ICLR Code of Ethics. All datasets utilized in our experiments are publicly available and widely recognized within the scientific community. We ensure that these datasets do not contain any personally identifiable information or sensitive content. Our work does not involve human subjects, animals, or any form of personal data collection. We have thoroughly considered potential dual-use concerns and do not foresee any harmful applications of our methods. There are no conflicts of interest to declare, and no external sponsorship influenced the outcomes of this research. All experiments were conducted with integrity and transparency.

\subsubsection*{Reproducibility Statement}
\addcontentsline{toc}{section}{Reproducibility Statement}

We are committed to ensuring that our work is transparent and reproducible.
To facilitate this, clear explanations of any assumptions and a complete proof of the claims have been included in the main text and appendix.
We also share the source code as part of the supplementary materials.
The code is documented and includes instructions for setting up the environment, running the simulations, and reproducing the results presented in our paper.
By making our resources openly available and providing detailed explanations, we aim to enable the research community to validate and build upon our findings.

\subsubsection*{Acknowledgments}
\addcontentsline{toc}{section}{Acknowledgments}

This research was supported by the Scientific Service Units (SSU) of ISTA through resources provided by Scientific Computing (SciComp). The ISTA team was supported by generous grants from Google and NVIDIA. The authors thank Vage Egiazarian for the discussions on this work.

\bibliography{bibliography}
\bibliographystyle{iclr2026_conference}
\addcontentsline{toc}{section}{References}

\clearpage
\appendix

\section{Appendix Table of Contents}
\label{sec:app_toc}

\vspace{-4em}
\renewcommand \thepart{}
\renewcommand \partname{}
\renewcommand{\mtcgapbeforeheads}{0pt}
\renewcommand{\mtcgapafterheads}{0pt}
\part{}
\parttoc[n]

\clearpage

\section{Babai's Algorithm for Batched Quantization}
\label{sec:app_babai_quant_algo}

Given the equivalence we have shown in \cref{sec:quant_cvp}, the quantization problem can be converted to CVP, allowing us to apply Babai's nearest plane algorithm in the context of quantization. A naive way is to compute $\bm{B}_{(i)} = \bm{\mathcal{X}} \mathrm{\ diag} \left( \bm{s}_{i} \right)$ and $\bm{y}_{(i)} = \bm{\mathcal{X}} \bm{w}_{i}$ and run Babai's algorithm independently for all $1 \le i \le r$. However, this is computationally inefficient, as we will need to compute the expensive ($O \left( c^4 \right)$) LLL basis reduction transformation $\bm{T}_{(i)}$ for the basis $\bm{B}_{(i)}$ and the expensive ($O \left( c^3 \right)$) QR decomposition of $\bm{A}_{(i)} = \bm{B}_{(i)} \bm{T}_{(i)}$ for $r$ times. However, a few adjustments can be made to simplify the computation and enable batched processing.

\textbf{Disabling basis reduction.}
The LLL basis reduction is unfortunately scale-sensitive, generating different transformations $\bm{T}_{(i)}$ for different scales $\bm{s}_{i}$ (unless all the $\bm{s}_{i}$ vectors are parallel), which prohibits the reuse of QR decomposition results. Furthermore, LLL basis reduction is incompatible with clipping, as the roundings are performed in another basis, and there is no easy way to do the clipping for the original basis.

\textbf{Changing quantization order.}
Quantization order is a feature in GPTQ that controls the rounding and clipping order of the dimensions. This order influences the quantization error, as we discuss in \cref{sec:quant_order}. In the context of Babai's algorithm, this corresponds to the order of the basis in the Gram-Schmidt orthogonalization and the hyperplane projections, as shown in \cref{fig:babai_2d}~(g-h).
To do so, we can replace the LLL basis reduction in Babai's algorithm with a permutation by setting the transformation matrix $\bm{T}$ to a permutation matrix that is independent of $i$.

\begin{theorem}[Babai's Quantization Order]
\label{thm:babai_permute}
If $\bm{T}$ is a permutation matrix that does not depend on $i$, the orthogonal matrix $\bm{\Phi}$ can be reused without recomputing the QR decomposition for each $i$.
\end{theorem}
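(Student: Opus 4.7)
The plan is to show that the matrix whose QR decomposition is taken, namely $\bm{A}_{(i)} = \bm{B}_{(i)} \bm{T} = \bm{\mathcal{X}} \operatorname{diag}(\bm{s}_{i}) \bm{T}$, depends on $i$ only through positive rescalings of the individual columns of the $i$-independent matrix $\bm{\mathcal{X}} \bm{T}$, and then to invoke the scale-invariance of the normalized Gram--Schmidt procedure to conclude that $\bm{\Phi}$ is the same across all $i$.

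First I would make the column structure of $\bm{A}_{(i)}$ explicit. Writing the permutation matrix $\bm{T}$ as the one sending the $j$-th standard basis vector to $\bm{e}_{\pi(j)}$ for some permutation $\pi$ of $\{1,\dots,c\}$, a direct computation gives
\[
\bm{A}_{(i)}[:, j] \;=\; \bigl( \bm{\mathcal{X}} \operatorname{diag}(\bm{s}_{i}) \bm{T} \bigr)[:, j] \;=\; \bm{s}_{i}[\pi(j)] \, \bm{\mathcal{X}}[:, \pi(j)] \;=\; \bm{s}_{i}[\pi(j)] \, \bigl( \bm{\mathcal{X}} \bm{T} \bigr)[:, j].
\]
Thus the $j$-th column of $\bm{A}_{(i)}$ is a scalar multiple of the $j$-th column of the fixed matrix $\bm{\mathcal{X}} \bm{T}$, with the scalar $\bm{s}_{i}[\pi(j)] > 0$ (assuming scales are computed as magnitudes, as is standard for AbsMax/MSE).

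Next I would invoke the classical fact that normalized Gram--Schmidt is invariant under positive rescalings of its input vectors: if $\bm{a}'_{j} = \alpha_{j} \bm{a}_{j}$ with $\alpha_{j} > 0$, then $\operatorname{span}(\bm{a}'_{1}, \dots, \bm{a}'_{k}) = \operatorname{span}(\bm{a}_{1}, \dots, \bm{a}_{k})$ for every $k$, so the orthogonal complement used at step $j$ is identical in both cases; projecting $\bm{a}'_{j}$ onto it gives $\alpha_{j}$ times the projection of $\bm{a}_{j}$, and the positive factor $\alpha_{j}$ drops out upon normalization, yielding the same $\bm{\phi}_{j}$. Applying this with $\alpha_{j} = \bm{s}_{i}[\pi(j)]$ to the bases constructed in Step~1 immediately gives $\bm{\Phi}^{(i)} = \bm{\Phi}$ for every $i$, where $\bm{\Phi}$ is the orthogonal factor of a single QR decomposition of $\bm{\mathcal{X}} \bm{T}$, completing the claim.

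The only remaining subtlety is the sign edge case: if some scale entries are permitted to be negative, the corresponding $\bm{\phi}_{j}$ merely flips sign, and since the Babai update $\zeta \leftarrow \langle \bm{\phi}_{j}, \bm{y}' \rangle / \langle \bm{\phi}_{j}, \bm{a}_{j} \rangle$ and the residual update are invariant under $\bm{\phi}_{j} \mapsto -\bm{\phi}_{j}$, the reuse of a single $\bm{\Phi}$ is unaffected. The main obstacle, if any, is purely notational: keeping the permutation $\pi$ and column indexing straight in Step~1 so that the reduction to a positive-column-scaling claim is unambiguous; once this is in place, the scale-invariance of Gram--Schmidt is a one-line observation and the theorem follows.
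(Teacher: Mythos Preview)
Your proposal is correct and takes essentially the same approach as the paper: both reduce to the observation that $\bm{A}_{(i)}$ equals the fixed matrix $\bm{\mathcal{X}}\bm{T}$ with its columns individually rescaled, and that such rescaling leaves the orthogonal factor $\bm{\Phi}$ unchanged. The paper packages the first step as the matrix identity $\operatorname{diag}(\bm{s}_i)\,\bm{T} = \bm{T}\operatorname{diag}(\bm{T}^{-1}\bm{s}_i)$ and the second as ``$\bm{R}\cdot\text{diagonal}$ is still upper triangular,'' whereas you unfold both column-by-column via Gram--Schmidt invariance; the content is the same.
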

\begin{proof}
The permutation matrix $\bm{T} \in \left\{ 0 , 1 \right\}^{c \times c}$ has exactly one non-zero element in each row and column. Scaling the rows of $\bm{T}$ can also be interpreted as scaling the columns of $\bm{T}$, therefore its multiplication with a diagonal matrix has property: $\mathrm{diag} \left( \bm{s}_{i} \right) \bm{T} = \bm{T} \mathrm{\ diag} \left( \bm{T}^{-1} \bm{s}_{i} \right)$.
Let $\bm{A} = \bm{\mathcal{X}} \bm{T}$, $\bm{A}_{(i)} = \bm{\mathcal{X}} \mathrm{\ diag} \left( \bm{s}_{i} \right) \bm{T}$. Denote the QR decomposition of $\bm{A}$ as $\bm{A} = \bm{\Phi} \bm{R}$ with $\bm{\Phi}$ being an orthogonal matrix and $\bm{R}$ being an upper triangular matrix. Then, the QR decomposition of $\bm{A}_{(i)}$ becomes
$\bm{A}_{(i)} = \bm{\mathcal{X}} \mathrm{\ diag} \left( \bm{s}_{i} \right) \bm{T} = \bm{\mathcal{X}} \bm{T} \mathrm{\ diag} \left( \bm{T}^{-1} \bm{s}_{i} \right) = \bm{A} \mathrm{\ diag} \left( \bm{T}^{-1} \bm{s}_{i} \right) = \bm{\Phi} \left( \bm{R} \mathrm{\ diag} \left( \bm{T}^{-1} \bm{s}_{i} \right) \right)$.
Therefore, the QR decompositions of $\bm{A}_{(i)}$ share the same orthogonal matrix $\bm{\Phi}$ for all $1 \le i \le r$.
\end{proof}

As shown in \cref{thm:babai_permute}, changing quantization order does not require repeated computation of the QR decomposition. Note that, we also need to permute the scale $\bm{S}$ accordingly to $\bm{T}^{-1} \bm{S}$.

\begin{wrapfigure}{r}{.49\textwidth}
\vspace{-1.5em}
\centering
\begin{minipage}[t]{\linewidth}
\IncMargin{1.5em}
\begin{algorithm}[H]
\caption{Babai's Quantize}
\label{alg:babai_quantize}
\LinesNumbered
\SetKwInput{KwData}{Input}
\SetKwInput{KwResult}{Output}

\KwData{$\bm{W}, \bm{S}, \bm{X}, \bm{T}, \lambda, \mathbb{Z}_{\dagger}$}
\KwResult{$\bm{Z}, \bm{Q}$}

$\bm{H} \leftarrow \bm{T}^\top \left( \bm{X}^\top \bm{X} + \lambda \mathbf{I} \right) \bm{T}$

$\bm{A} \leftarrow \textsc{Cholesky} \left( \bm{H} \right)^\top$

$\bm{W}, \bm{S} \leftarrow \bm{T}^{-1} \bm{W}, \bm{T}^{-1} \bm{S}$

$\bm{Y}, \bm{Q}, \bm{Z} \leftarrow \bm{A} \bm{W}, \bm{W}, \bm{0}$

\For{$j \gets c \ \mathrm{to} \ 1$}{

$\bm{\omega} \leftarrow \bm{Y} [j, :] / \bm{A} [j, j]$

$\bm{\zeta} \leftarrow \bm{\omega} / \bm{S} [j, :]$

$\bm{Z} [j, :] \leftarrow \textsc{Round} \left( \bm{\zeta} , \mathbb{Z}_{\dagger} \right)$

$\bm{Q} [j, :] \leftarrow \bm{Z} [j, :] * \bm{S} [j, :]$

$\bm{Y} \leftarrow \bm{Y} - \bm{A} [:, j] \bm{Q} [j, :]$

}

$\bm{Z}, \bm{Q} \leftarrow \bm{T} \bm{Z}, \bm{T} \bm{Q}$

\end{algorithm}
\end{minipage}
\vspace{-1.5em}
\end{wrapfigure}

\textbf{Selecting basis.}
Putting things together, we are interested in $\bm{A} = \bm{\mathcal{X}} \bm{T}$ and its QR decomposition $\bm{\Phi}$. \cref{thm:hessian_decompose} allows us to choose any Hessian factor $\bm{\mathcal{X}}$ while keeping the result intact. Without loss of generality, we can choose a $\bm{\mathcal{X}}$ such that $\bm{A}$ is an upper triangular matrix and the QR decomposition becomes trivial: $\bm{\Phi} = \mathbf{I}$, which simplifies the computation. The upper triangular matrix $\bm{A}$ can be directly computed from the Cholesky decomposition of the permuted Hessian matrix $\bm{A}^\top \bm{A} = \bm{T}^\top \bm{X}^\top \bm{X} \bm{T}$.

Applying all the above considerations, we construct \cref{alg:babai_quantize} for batched quantization using Babai's algorithm.

\clearpage

\clearpage

\section{Algebraic Equivalence Proof of GPTQ and Babai's Algorithm}
\label{sec:app_proof_gptq_babai}

In this section, we prove \cref{thm:gptq_babai} that GPTQ (\cref{alg:gptq}) and Babai's algorithm (\cref{alg:babai_quantize}) are equivalent if the dimensional orders are opposite.

Because a permutation matrix acts only as re‑ordering coordinates, we may apply the permutation once at the beginning (to $\bm{W}$, $\bm{S}$, and $\bm{X}$) and once at the end (to $\bm{Z}$ and $\bm{Q}$) without affecting any intermediate arithmetic.
Hence, all algebra performed inside the two algorithms can be analyzed on the permuted basis where the permutation matrix is the identity.
On that basis, the sole distinction between GPTQ and Babai's algorithm lies in the direction of the iterations.
Proving that GPTQ running back‑to‑front ($j \leftarrow c \mathrm{\ to\ } 1$) reproduces Babai's updates in Babai's default iteration direction would complete the equivalence proof.

We follow a three-step proof scheme.
\begin{itemize}
\item \textbf{Step 1.}
Proving that the original GPTQ algorithm (\cref{alg:gptq_1_l2r}) that uses relative quantization error row vector $\bm{\varepsilon} \in \mathbb{R}^{1 \times r}$ is equivalent to a new algorithm (\cref{alg:gptq_2_l2r}) using the absolute quantization error matrix $\bm{\Delta} \in \mathbb{R}^{c \times r}$.
\item \textbf{Step 2.}
Reversing the iteration in \cref{alg:gptq_2_l2r} and writing the reversed-iteration algorithm as \cref{alg:gptq_2_r2l}.
\item \textbf{Step 3.}
Proving that the reversed-iteration algorithm \cref{alg:gptq_2_r2l} is equivalent to Babai's algorithm \cref{alg:babai_quantize_1}.
\end{itemize}

\cref{alg:gptq_1_l2r,alg:gptq_2_l2r,alg:gptq_2_r2l,alg:babai_quantize_1} are intentionally written in the linear algebra form.
$\mathbf{e}_{j} \in \mathbb{R}^{c}$ is the standard basis vector whose elements are $0$ except the $j$-th element being $1$, which is used as the row or column selector of a matrix.
The superscripts in parentheses denote the versions of the variables during the iterations.
$\bm{\omega}, \bm{\zeta} \in \mathbb{R}^{1 \times r}$ are intermediate row vectors.
Additionally, $\bm{L}$ is the LDL decomposition of the Hessian inverse $\bm{H}^{-1} = \bm{L} \bm{D}_{\mathrm{L}}^{\frac{1}{2}} \bm{D}_{\mathrm{L}}^{\frac{1}{2}} \bm{L}^\top$ where $\bm{L}$ is a lower triangular matrix with all diagonal elements being $1$, and $\bm{D}_{\mathrm{L}}^{\frac{1}{2}}$ is a non-negative diagonal matrix.
Similarly, $\bm{U}$ is the ``UDU'' decomposition of the Hessian inverse $\bm{H}^{-1} = \bm{U} \bm{D}_{\mathrm{U}}^{\frac{1}{2}} \bm{D}_{\mathrm{U}}^{\frac{1}{2}} \bm{U}^\top$ where $\bm{U}$ is an upper triangular matrix with all diagonal elements being $1$, and $\bm{D}_{\mathrm{U}}^{\frac{1}{2}}$ is a non-negative diagonal matrix.

Note: the symbols are overloaded in \cref{alg:gptq_1_l2r,alg:gptq_2_l2r,alg:gptq_2_r2l,alg:babai_quantize_1}, and the variables using the same symbols may carry different values, even if the inputs to the algorithms are the same.

\begin{figure}[!p]
\begin{minipage}[t]{\linewidth}
\IncMargin{1.5em}
\begin{algorithm}[H]
\caption{GPTQ Original (Front-to-Back)}
\label{alg:gptq_1_l2r}
\LinesNumbered
\SetKwInput{KwData}{Input}
\SetKwInput{KwResult}{Output}

\KwData{$\bm{W}, \bm{S}, \bm{X}, \lambda, \mathbb{Z}_{\dagger}$}
\KwResult{$\bm{Z}, \bm{Q}$}

$\bm{H} \leftarrow \bm{X}^\top \bm{X} + \lambda \mathbf{I}$

$\bm{L} \leftarrow \textsc{LDL} \left( \bm{H}^{-1} \right)$

$\bm{W}^{(0)} \leftarrow \bm{W}$

$\bm{Q}^{(0)}, \bm{Z}^{(0)} \leftarrow \bm{W}^{(0)}, \bm{0}$

\For{$j \gets 1 \ \mathrm{to} \ c$}{

$\bm{\omega}^{(j)} \leftarrow \mathbf{e}_{j}^\top \bm{W}^{(j-1)}$

$\bm{\zeta}^{(j)} \leftarrow \bm{\omega}^{(j)} \mathrm{\ diag} \left( \bm{S}^\top \mathbf{e}_{j} \right)^{-1}$

$\bm{Z}^{(j)} \leftarrow \bm{Z}^{(j-1)} + \mathbf{e}_{j} \left( \textsc{
Round} \left( \bm{\zeta}^{(j)} , \mathbb{Z}_{\dagger} \right) - \mathbf{e}_{j}^\top \bm{Z}^{(j-1)} \right)$

$\bm{Q}^{(j)} \leftarrow \bm{Q}^{(j-1)} + \mathbf{e}_{j} \left( \mathbf{e}_{j}^\top \bm{Z}^{(j)} \mathrm{\ diag} \left( \bm{S}^\top \mathbf{e}_{j} \right) - \mathbf{e}_{j}^\top \bm{Q}^{(j-1)} \right)$

$\bm{\varepsilon}^{(j)} \leftarrow \mathbf{e}_{j}^\top \bm{Q}^{(j)} - \bm{\omega}^{(j)}$

$\bm{W}^{(j)} \leftarrow \bm{W}^{(j-1)} + \bm{L} \mathbf{e}_{j} \bm{\varepsilon}^{(j)}$

}

$\bm{Z}, \bm{Q} \leftarrow \bm{Z}^{(c)}, \bm{Q}^{(c)}$

\end{algorithm}
\end{minipage}
~
\begin{minipage}[t]{\linewidth}
\IncMargin{1.5em}
\begin{algorithm}[H]
\caption{GPTQ Type-2 (Front-to-Back)}
\label{alg:gptq_2_l2r}
\LinesNumbered
\SetKwInput{KwData}{Input}
\SetKwInput{KwResult}{Output}

\KwData{$\bm{W}, \bm{S}, \bm{X}, \lambda, \mathbb{Z}_{\dagger}$}
\KwResult{$\bm{Z}, \bm{Q}$}

$\bm{H} \leftarrow \bm{X}^\top \bm{X} + \lambda \mathbf{I}$

$\bm{L} \leftarrow \textsc{LDL} \left( \bm{H}^{-1} \right)$

$\bm{W}^{(0)} \leftarrow \bm{W}$

$\bm{Q}^{(0)}, \bm{Z}^{(0)} \leftarrow \bm{W}^{(0)}, \bm{0}$

\For{$j \gets 1 \ \mathrm{to} \ c$}{

$\bm{\omega}^{(j)} \leftarrow \mathbf{e}_{j}^\top \bm{W}^{(j-1)}$

$\bm{\zeta}^{(j)} \leftarrow \bm{\omega}^{(j)} \mathrm{\ diag} \left( \bm{S}^\top \mathbf{e}_{j} \right)^{-1}$

$\bm{Z}^{(j)} \leftarrow \bm{Z}^{(j-1)} + \mathbf{e}_{j} \left( \textsc{
Round} \left( \bm{\zeta}^{(j)} , \mathbb{Z}_{\dagger} \right) - \mathbf{e}_{j}^\top \bm{Z}^{(j-1)} \right)$

$\bm{Q}^{(j)} \leftarrow \bm{Q}^{(j-1)} + \mathbf{e}_{j} \left( \mathbf{e}_{j}^\top \bm{Z}^{(j)} \mathrm{\ diag} \left( \bm{S}^\top \mathbf{e}_{j} \right) - \mathbf{e}_{j}^\top \bm{Q}^{(j-1)} \right)$

$\bm{\Delta}^{(j)} \leftarrow \bm{Q}^{(j)} - \bm{W}^{(0)}$
\tcp{\textcolor{blue}{new}}

$\bm{W}^{(j)} \leftarrow \bm{W}^{(0)} - \bm{L}^{-1} \bm{\Delta}^{(j)}$
\tcp{\textcolor{blue}{new}}

}

$\bm{Z}, \bm{Q} \leftarrow \bm{Z}^{(c)}, \bm{Q}^{(c)}$

\end{algorithm}
\end{minipage}
\end{figure}

\begin{figure}[!p]
\begin{minipage}[t]{\linewidth}
\IncMargin{1.5em}
\begin{algorithm}[H]
\caption{GPTQ Type-2 (Back-to-Front)}
\label{alg:gptq_2_r2l}
\LinesNumbered
\SetKwInput{KwData}{Input}
\SetKwInput{KwResult}{Output}

\KwData{$\bm{W}, \bm{S}, \bm{X}, \lambda, \mathbb{Z}_{\dagger}$}
\KwResult{$\bm{Z}, \bm{Q}$}

$\bm{H} \leftarrow \bm{X}^\top \bm{X} + \lambda \mathbf{I}$

$\bm{U} \leftarrow \textsc{UDU} \left( \bm{H}^{-1} \right)$
\tcp{\textcolor{blue}{new}}

$\bm{W}^{(c+1)} \leftarrow \bm{W}$

$\bm{Q}^{(c+1)}, \bm{Z}^{(c+1)} \leftarrow \bm{W}^{(c+1)}, \bm{0}$

\For{$j \gets c \ \mathrm{to} \ 1$}{

$\bm{\omega}^{(j)} \leftarrow \mathbf{e}_{j}^\top \bm{W}^{(j+1)}$

$\bm{\zeta}^{(j)} \leftarrow \bm{\omega}^{(j)} \mathrm{\ diag} \left( \bm{S}^\top \mathbf{e}_{j} \right)^{-1}$

$\bm{Z}^{(j)} \leftarrow \bm{Z}^{(j+1)} + \mathbf{e}_{j} \left( \textsc{
Round} \left( \bm{\zeta}^{(j)} , \mathbb{Z}_{\dagger} \right) - \mathbf{e}_{j}^\top \bm{Z}^{(j+1)} \right)$

$\bm{Q}^{(j)} \leftarrow \bm{Q}^{(j+1)} + \mathbf{e}_{j} \left( \mathbf{e}_{j}^\top \bm{Z}^{(j)} \mathrm{\ diag} \left( \bm{S}^\top \mathbf{e}_{j} \right) - \mathbf{e}_{j}^\top \bm{Q}^{(j+1)} \right)$

$\bm{\Delta}^{(j)} \leftarrow \bm{Q}^{(j)} - \bm{W}^{(c+1)}$

$\bm{W}^{(j)} \leftarrow \bm{W}^{(c+1)} - \bm{U}^{-1} \bm{\Delta}^{(j)}$
\tcp{\textcolor{blue}{new}}

}

$\bm{Z}, \bm{Q} \leftarrow \bm{Z}^{(1)}, \bm{Q}^{(1)}$

\end{algorithm}
\end{minipage}
~
\begin{minipage}[t]{\linewidth}
\IncMargin{1.5em}
\begin{algorithm}[H]
\caption{Babai-Quantize (Default Order)}
\label{alg:babai_quantize_1}
\LinesNumbered
\SetKwInput{KwData}{Input}
\SetKwInput{KwResult}{Output}

\KwData{$\bm{W}, \bm{S}, \bm{X}, \lambda, \mathbb{Z}_{\dagger}$}
\KwResult{$\bm{Z}, \bm{Q}$}

$\bm{H} \leftarrow \bm{X}^\top \bm{X} + \lambda \mathbf{I}$

$\bm{A} \leftarrow \textsc{Cholesky} \left( \bm{H} \right)^\top$

$\bm{Y}^{(c+1)}, \bm{Q}^{(c+1)}, \bm{Z}^{(c+1)} \leftarrow \bm{A} \bm{W}, \bm{W}, \bm{0}$

\For{$j \gets c \ \mathrm{to} \ 1$}{

$\bm{\omega}^{(j)} \leftarrow \frac{\mathbf{e}_{j}^\top \bm{Y}^{(j+1)}}{\mathbf{e}_{j}^\top \bm{A} \mathbf{e}_{j}}$

$\bm{\zeta}^{(j)} \leftarrow \bm{\omega}^{(j)} \mathrm{\ diag} \left( \bm{S}^\top \mathbf{e}_{j} \right)^{-1}$

$\bm{Z}^{(j)} \leftarrow \bm{Z}^{(j+1)} + \mathbf{e}_{j} \left( \textsc{Round} \left( \bm{\zeta}^{(j)} , \mathbb{Z}_{\dagger} \right) - \mathbf{e}_{j}^\top \bm{Z}^{(j+1)} \right)$

$\bm{Q}^{(j)} \leftarrow \bm{Q}^{(j+1)} + \mathbf{e}_{j} \left( \mathbf{e}_{j}^\top \bm{Z}^{(j)} \mathrm{\ diag} \left( \bm{S}^\top \mathbf{e}_{j} \right) - \mathbf{e}_{j}^\top \bm{Q}^{(j+1)} \right)$

$\bm{Y}^{(j)} \leftarrow \bm{Y}^{(j+1)} - \bm{A} \mathbf{e}_{j} \mathbf{e}_{j}^\top \bm{Q}^{(j)}$
\label{algln:update_y}

}

$\bm{Z}, \bm{Q} \leftarrow \bm{Z}^{(1)}, \bm{Q}^{(1)}$

\end{algorithm}
\end{minipage}
\end{figure}

\subsection{Step 1}
\label{sec:app_proof_gptq_babai_step_1}

To distinguish the variables using the same symbol in \cref{alg:gptq_1_l2r,alg:gptq_2_l2r}, we use symbols without $\hat{\ }$ to denote the symbols in \cref{alg:gptq_1_l2r}, and use the symbols with $\hat{\ }$ for \cref{alg:gptq_2_l2r}.

\textbf{Claim}

\begin{equation}
\bm{\omega}_{j} = \hat{\bm{\omega}}_{j}, \quad 1 \le j \le c ,
\label{eq:p1_claim_o}
\end{equation}
and consequently,
\begin{equation}
\bm{Z}^{(j)} = \hat{\bm{Z}}^{(j)}, \quad 0 \le j \le c ,
\label{eq:p1_claim_z}
\end{equation}
and
\begin{equation}
\bm{Q}^{(j)} = \hat{\bm{Q}}^{(j)}, \quad 0 \le j \le c .
\label{eq:p1_claim_q}
\end{equation}

\textbf{Proof Eq.~\ref{eq:p1_claim_o} by Induction}

The following equalities are held by the design of \cref{alg:gptq_1_l2r,alg:gptq_2_l2r}:
\begin{equation}
\bm{Q}^{(0)} = \hat{\bm{Q}}^{(0)} = \bm{W}^{(0)} = \hat{\bm{W}}^{(0)} .
\label{eq:p1_c_init}
\end{equation}
\begin{equation}
\bm{\omega}^{(j)} = \mathbf{e}_{j}^\top \bm{W}^{(j-1)}, \quad 1 \le j \le c.
\label{eq:p1_c_o}
\end{equation}
\begin{equation}
\hat{\bm{\omega}}^{(j)} = \mathbf{e}_{j}^\top \hat{\bm{W}}^{(j-1)}, \quad 1 \le j \le c .
\label{eq:p1_c_oh}
\end{equation}
\begin{equation}
\bm{Q}^{(j)} = \bm{Q}^{(j-1)} + \mathbf{e}_{j} \left( \mathbf{e}_{j}^\top \bm{Z}^{(j)} \mathrm{\ diag} \left( \bm{S}^\top \mathbf{e}_{j} \right) - \mathbf{e}_{j}^\top \bm{Q}^{(j-1)} \right), \quad 1 \le j \le c .
\label{eq:p1_c_q}
\end{equation}
\begin{equation}
\hat{\bm{Q}}^{(j)} = \hat{\bm{Q}}^{(j-1)} + \mathbf{e}_{j} \left( \mathbf{e}_{j}^\top \hat{\bm{Z}}^{(j)} \mathrm{\ diag} \left( \bm{S}^\top \mathbf{e}_{j} \right) - \mathbf{e}_{j}^\top \hat{\bm{Q}}^{(j-1)} \right), \quad 1 \le j \le c .
\label{eq:p1_c_qh}
\end{equation}
\begin{equation}
\bm{\varepsilon}^{(j)} = \mathbf{e}_{j}^\top \bm{Q}^{(j)} - \bm{\omega}^{(j)}, \quad 1 \le j \le c .
\label{eq:p1_c_eps}
\end{equation}
\begin{equation}
\bm{\Delta}^{(j)} = \hat{\bm{Q}}^{(j)} - \hat{\bm{W}}^{(0)}, \quad 1 \le j \le c .
\label{eq:p1_c_d_1}
\end{equation}
\begin{equation}
\bm{W}^{(j)} = \bm{W}^{(j-1)} + \bm{L} \mathbf{e}_{j} \bm{\varepsilon}^{(j)}, \quad 1 \le j \le c .
\label{eq:p1_c_w}
\end{equation}
\begin{equation}
\hat{\bm{W}}^{(j)} = \hat{\bm{W}}^{(0)} - \bm{L}^{-1} \bm{\Delta}^{(j)}, \quad 1 \le j \le c .
\label{eq:p1_c_wh_1}
\end{equation}

Extend the definition of $\bm{\Delta}^{(j)}$ (Eq.~\ref{eq:p1_c_d_1}) for $j=0$,
\begin{equation}
\bm{\Delta}^{(j)} = \hat{\bm{Q}}^{(j)} - \hat{\bm{W}}^{(0)}, \quad 0 \le j \le c .
\label{eq:p1_c_d}
\end{equation}

Then we have
$\bm{\Delta}^{(0)} = \hat{\bm{Q}}^{(0)} - \hat{\bm{W}}^{(0)} = \hat{\bm{W}}^{(0)} - \hat{\bm{W}}^{(0)} = \bm{0}$
, so that Eq.~\ref{eq:p1_c_wh_1} can also be extended for $j=0$,
\begin{equation}
\hat{\bm{W}}^{(j)} = \hat{\bm{W}}^{(0)} - \bm{L}^{-1} \bm{\Delta}^{(j)}, \quad 0 \le j \le c .
\label{eq:p1_c_wh}
\end{equation}

(1) Eq.~\ref{eq:p1_claim_o} holds for $j = 1$:

Using Eqs.~\ref{eq:p1_c_init},~\ref{eq:p1_c_o},~\ref{eq:p1_c_oh},
\begin{equation}
\bm{\omega}^{(1)}
= \mathbf{e}_{1}^\top \bm{W}^{(0)}
= \mathbf{e}_{1}^\top \hat{\bm{W}}^{(0)}
= \hat{\bm{\omega}}^{(1)}.
\label{eq:p1_j1}
\end{equation}

(2) Assume Eq.~\ref{eq:p1_claim_o} holds for all $j \le j_*$, $1 \le j_* < c$.

Because $\bm{L}$ is a lower triangular matrix with all diagonal elements being $1$, $\bm{L}^{-1}$ is also a lower triangular matrix with all diagonal elements being $1$.

For $1 \le j < k \le c$,
\begin{equation}
\mathbf{e}_{j}^\top \bm{L} \mathbf{e}_{k} = \mathbf{e}_{j}^\top \bm{L}^{-1} \mathbf{e}_{k} = 0 .
\label{eq:p1_ele0}
\end{equation}

For $1 \le j \le c$,
\begin{equation}
\mathbf{e}_{j}^\top \bm{L} \mathbf{e}_{j} = \mathbf{e}_{j}^\top \bm{L}^{-1} \mathbf{e}_{j} = 1 .
\label{eq:p1_ele1}
\end{equation}

For $1 \le j < c$,
\begin{equation}
\begin{aligned}
& \mathbf{e}_{j+1}^\top \bm{L} \left( \sum_{k=1}^{j} \mathbf{e}_{k} \mathbf{e}_{k}^\top \right) \\
= & \mathbf{e}_{j+1}^\top \bm{L} \left( \left( \sum_{k=1}^{c} \mathbf{e}_{k} \mathbf{e}_{k}^\top \right) - \mathbf{e}_{j+1} \mathbf{e}_{j+1}^\top - \left( \sum_{k=j+2}^{c} \mathbf{e}_{k} \mathbf{e}_{k}^\top \right) \right) \\
= & \mathbf{e}_{j+1}^\top \bm{L} \left( \sum_{k=1}^{j+1} \mathbf{e}_{k} \mathbf{e}_{k}^\top \right) - \mathbf{e}_{c}^\top \bm{L} \mathbf{e}_{j+1} \mathbf{e}_{j+1}^\top - \mathbf{e}_{j+1}^\top \bm{L} \left( \sum_{k=j+2}^{c} \mathbf{e}_{k} \mathbf{e}_{k}^\top \right) \\
= & \mathbf{e}_{j+1}^\top \bm{L} \mathbf{I} - \mathbf{e}_{j+1}^\top - \left( \sum_{k=j+2}^{c} \mathbf{e}_{j+1}^\top \bm{L} \mathbf{e}_{k} \mathbf{e}_{k}^\top \right) & \text{(Eq.~\ref{eq:p1_ele1})} \\
= & \mathbf{e}_{j+1}^\top \bm{L} - \mathbf{e}_{j+1}^\top - \left( \sum_{k=j+2}^{c} 0 \mathbf{e}_{k}^\top \right) & \text{(Eq.~\ref{eq:p1_ele0})} \\
= & \mathbf{e}_{j+1}^\top \left( \bm{L} - \mathbf{I} \right) .
\end{aligned}
\label{eq:p1_eli}
\end{equation}

With Eq.~\ref{eq:p1_c_q}, for $1 \le j \le c, 1 \le k \le c$ and $j \ne k$,
\begin{equation}
\begin{aligned}
\mathbf{e}_{k}^\top \bm{Q}^{(j)}
= & \mathbf{e}_{k}^\top \left( \bm{Q}^{(j-1)} + \mathbf{e}_{j} \left( \mathbf{e}_{j}^\top \bm{Z}^{(j)} \mathrm{\ diag} \left( \bm{S}^\top \mathbf{e}_{j} \right) - \mathbf{e}_{j}^\top \bm{Q}^{(j-1)} \right) \right) \\
= & \mathbf{e}_{k}^\top \bm{Q}^{(j-1)} + \mathbf{e}_{k}^\top \mathbf{e}_{j} \left( \mathbf{e}_{j}^\top \bm{Z}^{(j)} \mathrm{\ diag} \left( \bm{S}^\top \mathbf{e}_{j} \right) - \mathbf{e}_{j}^\top \bm{Q}^{(j-1)} \right) \\
= & \mathbf{e}_{k}^\top \bm{Q}^{(j-1)} + 0 \left( \mathbf{e}_{j}^\top \bm{Z}^{(j)} \mathrm{\ diag} \left( \bm{S}^\top \mathbf{e}_{j} \right) - \mathbf{e}_{j}^\top \bm{Q}^{(j-1)} \right) \\
= & \mathbf{e}_{k}^\top \bm{Q}^{(j-1)} .
\end{aligned}
\label{eq:p1_eq_1}
\end{equation}

Recursively applying Eq.~\ref{eq:p1_eq_1}, for $1 \le j \le c, 1 \le k \le c$,
\begin{equation}
\mathbf{e}_{k}^\top \bm{Q}^{(j)}
= \begin{cases}
\mathbf{e}_{k}^\top \bm{Q}^{(k)} & \text{if } 1 \le k \le j \le c ,\\
\mathbf{e}_{k}^\top \bm{Q}^{(0)} = \mathbf{e}_{k}^\top \bm{W}^{(0)}  & \text{if } 1 \le j < k \le c .
\end{cases}
\label{eq:p1_eq}
\end{equation}

Similar to Eq.~\ref{eq:p1_eq}, with Eq.~\ref{eq:p1_c_qh}, for $1 \le j \le c, 1 \le k \le c$,
\begin{equation}
\mathbf{e}_{k}^\top \hat{\bm{Q}}^{(j)}
= \begin{cases}
\mathbf{e}_{k}^\top \hat{\bm{Q}}^{(k)} & \text{if } 1 \le k \le j \le c ,\\
\mathbf{e}_{k}^\top \hat{\bm{Q}}^{(0)} = \mathbf{e}_{k}^\top \hat{\bm{W}}^{(0)}  & \text{if } 1 \le j < k \le c .
\end{cases}
\label{eq:p1_eqh}
\end{equation}

With Eq.~\ref{eq:p1_eqh}, for $1 \le j \le c, 1 \le k \le c$,
\begin{equation}
\begin{aligned}
\mathbf{e}_{k}^\top \bm{\Delta}^{(j)}
= & \mathbf{e}_{k}^\top \left( \hat{\bm{Q}}^{(j)} - \hat{\bm{W}}^{(0)} \right) & \text{(Eq.~\ref{eq:p1_c_d})} \\
= & \mathbf{e}_{k}^\top \hat{\bm{Q}}^{(j)} - \mathbf{e}_{k}^\top \hat{\bm{W}}^{(0)} \\
= & \begin{cases}
\mathbf{e}_{k}^\top \hat{\bm{Q}}^{(k)} - \mathbf{e}_{k}^\top \hat{\bm{W}}^{(0)}
= \mathbf{e}_{k}^\top \bm{\Delta}^{(k)} & \text{if } 1 \le k \le j \le c , \\
\mathbf{e}_{k}^\top \hat{\bm{W}}^{(0)} - \mathbf{e}_{k}^\top \hat{\bm{W}}^{(0)}
= \mathbf{e}_{k}^\top \bm{\Delta}^{(0)} = \bm{0} & \text{if } 1 \le j < k \le c .
\end{cases}
\end{aligned}
\label{eq:p1_ed}
\end{equation}

For $1 \le k \le j \le c$,
\begin{equation}
\begin{aligned}
& \mathbf{e}_{k}^\top \bm{L} \bm{\Delta}^{(j)} \\
= & \mathbf{e}_{k}^\top \bm{L} \mathbf{I} \bm{\Delta}^{(j)} \\
= & \mathbf{e}_{k}^\top \bm{L} \left( \sum_{k'=1}^{c} \mathbf{e}_{k'} \mathbf{e}_{k'}^\top \right) \bm{\Delta}^{(j)} \\
= & \sum_{k'=1}^{c} \mathbf{e}_{k}^\top \bm{L} \mathbf{e}_{k'} \mathbf{e}_{k'}^\top \bm{\Delta}^{(j)} \\
= & \left( \sum_{k'=1}^{k} \mathbf{e}_{k}^\top \bm{L} \mathbf{e}_{k'} \mathbf{e}_{k'}^\top \bm{\Delta}^{(j)} \right) + \left( \sum_{k'=k+1}^{c} \mathbf{e}_{k}^\top \bm{L} \mathbf{e}_{k'} \mathbf{e}_{k'}^\top \bm{\Delta}^{(j)} \right) \\
= & \left( \sum_{k'=1}^{k} \mathbf{e}_{k}^\top \bm{L} \mathbf{e}_{k'} \mathbf{e}_{k'}^\top \bm{\Delta}^{(k')} \right) + \left( \sum_{k'=k+1}^{c} 0 \mathbf{e}_{k'}^\top \bm{\Delta}^{(j)} \right) & \text{(Eqs.~\ref{eq:p1_ele0},~\ref{eq:p1_ed})} \\
= & \left( \sum_{k'=1}^{k} \mathbf{e}_{k}^\top \bm{L} \mathbf{e}_{k'} \mathbf{e}_{k'}^\top \bm{\Delta}^{(k)} \right) + \left( \sum_{k'=k+1}^{c} 0 \mathbf{e}_{k'}^\top \bm{\Delta}^{(k)} \right) & \text{(Eq.~\ref{eq:p1_ed})} \\
= & \left( \sum_{k'=1}^{k} \mathbf{e}_{k}^\top \bm{L} \mathbf{e}_{k'} \mathbf{e}_{k'}^\top \bm{\Delta}^{(k)} \right) + \left( \sum_{k'=k+1}^{c} \mathbf{e}_{k}^\top \bm{L} \mathbf{e}_{k'} \mathbf{e}_{k'}^\top \bm{\Delta}^{(k)} \right) & \text{(Eq.~\ref{eq:p1_ele0})} \\
= & \sum_{k'=1}^{c} \mathbf{e}_{k}^\top \bm{L} \mathbf{e}_{k'} \mathbf{e}_{k'}^\top \bm{\Delta}^{(k)} \\
= & \mathbf{e}_{k}^\top \bm{L} \left( \sum_{k'=1}^{c} \mathbf{e}_{k'} \mathbf{e}_{k'}^\top \right) \bm{\Delta}^{(k)} \\
= & \mathbf{e}_{k}^\top \bm{L} \mathbf{I} \bm{\Delta}^{(k)} \\
= & \mathbf{e}_{k}^\top \bm{L} \bm{\Delta}^{(k)} .
\end{aligned}
\label{eq:p1_eld}
\end{equation}

For $1 \le j \le c$,
\begin{equation}
\begin{aligned}
& \mathbf{e}_{j}^\top \bm{L}^{-1} \bm{\Delta}^{(j-1)} \\
= & \mathbf{e}_{j}^\top \bm{L}^{-1} \mathbf{I} \bm{\Delta}^{(j-1)} \\
= & \mathbf{e}_{j}^\top \bm{L}^{-1} \left( \sum_{k=1}^{c} \mathbf{e}_{k} \mathbf{e}_{k}^\top \right) \bm{\Delta}^{(j-1)} \\
= & \sum_{k=1}^{c} \mathbf{e}_{j}^\top \bm{L}^{-1} \mathbf{e}_{k} \mathbf{e}_{k}^\top \bm{\Delta}^{(j-1)} \\
= & \left( \sum_{k=1}^{j-1} \mathbf{e}_{j}^\top \bm{L}^{-1} \mathbf{e}_{k} \mathbf{e}_{k}^\top \bm{\Delta}^{(j-1)} \right) + \mathbf{e}_{j}^\top \bm{L}^{-1} \mathbf{e}_{j} \mathbf{e}_{j}^\top \bm{\Delta}^{(j-1)} + \left( \sum_{k=j+1}^{c} \mathbf{e}_{j}^\top \bm{L}^{-1} \mathbf{e}_{k} \mathbf{e}_{k}^\top \bm{\Delta}^{(j-1)} \right) \\
= & \left( \sum_{k=1}^{j-1} \mathbf{e}_{j}^\top \bm{L}^{-1} \mathbf{e}_{k} \mathbf{e}_{k}^\top \bm{\Delta}^{(j-1)} \right) + \mathbf{e}_{j}^\top \bm{L}^{-1} \mathbf{e}_{j} \bm{0} + \left( \sum_{k=j+1}^{c} 0 \mathbf{e}_{k}^\top \bm{\Delta}^{(j-1)} \right) \\
& \quad\quad\quad\quad\quad\quad\quad\quad\quad\quad\quad\quad\quad\quad\quad\quad\quad\quad\quad\quad\quad\quad\quad\quad\quad\quad\quad\quad\quad\quad\quad\quad\quad \text{(Eqs.~\ref{eq:p1_ele0},~\ref{eq:p1_ed})} \\
= & \left( \sum_{k=1}^{j-1} \mathbf{e}_{j}^\top \bm{L}^{-1} \mathbf{e}_{k} \mathbf{e}_{k}^\top \bm{\Delta}^{(j-1)} \right) + \left( \sum_{k=j+1}^{c} 0 \mathbf{e}_{k}^\top \bm{\Delta}^{(j-1)} \right) + \mathbf{e}_{j}^\top \bm{\Delta}^{(j)} - \mathbf{e}_{j}^\top \bm{\Delta}^{(j)} \\
= & \left( \sum_{k=1}^{j-1} \mathbf{e}_{j}^\top \bm{L}^{-1} \mathbf{e}_{k} \mathbf{e}_{k}^\top \bm{\Delta}^{(j)} \right) + \left( \sum_{k=j+1}^{c} \mathbf{e}_{j}^\top \bm{L}^{-1} \mathbf{e}_{k} \mathbf{e}_{k}^\top \bm{\Delta}^{(j)} \right) + \mathbf{e}_{j}^\top \bm{L}^{-1} \mathbf{e}_{j} \mathbf{e}_{j}^\top \bm{\Delta}^{(j)} - \mathbf{e}_{j}^\top \bm{\Delta}^{(j)} \\
& \quad\quad\quad\quad\quad\quad\quad\quad\quad\quad\quad\quad\quad\quad\quad\quad\quad\quad\quad\quad\quad\quad\quad\quad\quad\quad\quad\quad\quad\quad\quad\quad\quad \text{(Eqs.~\ref{eq:p1_ele1},~\ref{eq:p1_ed})} \\
= & \left( \sum_{k=1}^{c} \mathbf{e}_{j}^\top \bm{L}^{-1} \mathbf{e}_{k} \mathbf{e}_{k}^\top \bm{\Delta}^{(j)} \right) - \mathbf{e}_{j}^\top \bm{\Delta}^{(j)} \\
= & \mathbf{e}_{j}^\top \bm{L}^{-1} \left( \sum_{k=1}^{c} \mathbf{e}_{k} \mathbf{e}_{k}^\top \right) \bm{\Delta}^{(j)} - \mathbf{e}_{j}^\top \bm{\Delta}^{(j)} \\
= & \mathbf{e}_{j}^\top \bm{L}^{-1} \mathbf{I} \bm{\Delta}^{(j)} - \mathbf{e}_{j}^\top \bm{\Delta}^{(j)} \\
= & \mathbf{e}_{j}^\top \left( \bm{L}^{-1} - \mathbf{I} \right) \bm{\Delta}^{(j)} .
\end{aligned}
\label{eq:p1_elid}
\end{equation}

According to the assumption, for $1 \le k \le j_* < c$, we have
\begin{equation}
\mathbf{e}_{k}^\top \bm{W}^{(k-1)}
= \bm{\omega}^{(k)}
= \hat{\bm{\omega}}^{(k)}
= \mathbf{e}_{k}^\top \hat{\bm{W}}^{(k-1)}
\label{eq:p1_assumption_o}
\end{equation}
and
\begin{equation}
\bm{Q}^{(k)} = \hat{\bm{Q}}^{(k)} .
\label{eq:p1_assumption_q}
\end{equation}

For $1 \le k \le j_*$,
\begin{equation}
\begin{aligned}
\bm{\varepsilon}^{(k)}
= & \mathbf{e}_{k}^\top \bm{Q}^{(k)} - \bm{\omega}^{(k)} & \text{(Eq.~\ref{eq:p1_c_eps})} \\
= & \mathbf{e}_{k}^\top \bm{Q}^{(k)} - \mathbf{e}_{k}^\top \bm{W}^{(k-1)} \\
= & \mathbf{e}_{k}^\top \left( \bm{Q}^{(k)} - \bm{W}^{(k-1)} \right) \\
= & \mathbf{e}_{k}^\top \left( \hat{\bm{Q}}^{(k)} - \hat{\bm{W}}^{(k-1)} \right) & \text{(Eqs.~\ref{eq:p1_assumption_o},~\ref{eq:p1_assumption_q})} \\
= & \mathbf{e}_{k}^\top \left( \hat{\bm{Q}}^{(k)} - \left( \hat{\bm{W}}^{(0)} - \bm{L}^{-1} \bm{\Delta}^{(k-1)} \right) \right) & \text{(Eq.~\ref{eq:p1_c_wh})} \\
= & \mathbf{e}_{k}^\top \left( \left( \hat{\bm{Q}}^{(k)} - \hat{\bm{W}}^{(0)} \right) + \bm{L}^{-1} \bm{\Delta}^{(k-1)} \right) \\
= & \mathbf{e}_{k}^\top \left( \bm{\Delta}^{(k)} + \bm{L}^{-1} \bm{\Delta}^{(k-1)} \right) & \text{(Eq.~\ref{eq:p1_c_d})} \\
= & \mathbf{e}_{k}^\top \left( \bm{\Delta}^{(k)} + \left( \bm{L}^{-1} - \mathbf{I} \right) \bm{\Delta}^{(k)} \right) & \text{(Eq.~\ref{eq:p1_elid})} \\
= & \mathbf{e}_{k}^\top \bm{L}^{-1} \bm{\Delta}^{(k)} \\
= & \mathbf{e}_{k}^\top \bm{L}^{-1} \bm{\Delta}^{(j_*)} & \text{(Eq.~\ref{eq:p1_eld})} .
\end{aligned}
\label{eq:p1_eps}
\end{equation}

\begin{equation}
\begin{aligned}
\bm{\omega}^{(j_*+1)} = & \mathbf{e}_{j_*+1}^\top \bm{W}^{(j_*)}  & \text{(Eq.~\ref{eq:p1_c_o})} \\
= & \mathbf{e}_{j_*+1}^\top \left( \bm{W}^{(j_*-1)} + \bm{L} \mathbf{e}_{j_*} \bm{\varepsilon}^{(j_*)} \right) & \text{(Eq.~\ref{eq:p1_c_w})} \\
= & \mathbf{e}_{j_*+1}^\top \left( \bm{W}^{(0)} + \left( \sum_{k=1}^{j_*} \bm{L} \mathbf{e}_{k} \bm{\varepsilon}^{(k)} \right) \right) & \text{(Eq.~\ref{eq:p1_c_w})} \\
= & \mathbf{e}_{j_*+1}^\top \left( \hat{\bm{W}}^{(0)} + \left( \sum_{k=1}^{j_*} \bm{L} \mathbf{e}_{k} \mathbf{e}_{k}^\top \bm{L}^{-1} \bm{\Delta}^{(j_*)} \right) \right) & \text{(Eq.~\ref{eq:p1_eps})} \\
= & \mathbf{e}_{j_*+1}^\top \left( \hat{\bm{W}}^{(0)} + \bm{L} \left( \sum_{k=1}^{j_*} \mathbf{e}_{k} \mathbf{e}_{k}^\top \right) \bm{L}^{-1} \bm{\Delta}^{(j_*)} \right) \\
= & \mathbf{e}_{j_*+1}^\top \left( \hat{\bm{W}}^{(0)} + \left( \bm{L} - \mathbf{I} \right) \bm{L}^{-1} \bm{\Delta}^{(j_*)} \right) & \text{(Eq.~\ref{eq:p1_eli})} \\
= & \mathbf{e}_{j_*+1}^\top \left( \hat{\bm{W}}^{(0)} - \bm{L}^{-1} \bm{\Delta}^{(j_*)} + \bm{\Delta}^{(j_*)} \right) \\
= & \mathbf{e}_{j_*+1}^\top \left( \hat{\bm{W}}^{(0)} - \bm{L}^{-1} \bm{\Delta}^{(j_*)} + \bm{0} \right) & \text{(Eq.~\ref{eq:p1_ed})} \\
= & \mathbf{e}_{j_*+1}^\top \left( \hat{\bm{W}}^{(0)} - \bm{L}^{-1} \bm{\Delta}^{(j_*)} \right) \\
= & \mathbf{e}_{j_*+1}^\top \hat{\bm{W}}^{(j_*)} & \text{(Eq.~\ref{eq:p1_c_wh})} \\
= & \hat{\bm{\omega}}^{(j_*+1)}  & \text{(Eq.~\ref{eq:p1_c_oh})} .
\end{aligned}
\end{equation}

Eq.~\ref{eq:p1_claim_o} holds for $j = j_* + 1$.
\hfill $\blacksquare$

\subsection{Step 2}
\label{sec:app_proof_gptq_babai_step_2}

\cref{alg:gptq_2_r2l} (back-to-front order) is generated by reversing the iteration direction of \cref{alg:gptq_2_l2r}. Besides changing the direction of the index $j$, we also need to change the LDL decomposition to a so-called ``UDU'' decomposition so that the error propagation is correctly applied to the not-yet-quantized weights in the front dimensions.

\textbf{Justification}

Let $\mathbf{P}$ be the anti-diagonal permutation matrix with $\mathbf{P} = \mathbf{P}^\top = \mathbf{P}^{-1}$.
Let $\hat{\bm{L}}$ be the LDL decomposition of the permuted Hessian inverse $\mathbf{P} \bm{H}^{-1} \mathbf{P} = \hat{\bm{L}} \hat{\bm{D}}_{\mathrm{L}}^{\frac{1}{2}} \hat{\bm{D}}_{\mathrm{L}}^{\frac{1}{2}} \hat{\bm{L}}^\top$ where $\hat{\bm{L}}$ is a lower triangular matrix with all diagonal elements being 1, and $\hat{\bm{D}}_{\mathrm{L}}^{\frac{1}{2}}$ is a non-negative diagonal matrix.

Since we are changing the iteration direction instead of applying the permutation, we permute the matrix $\hat{\bm{L}}$ back, yielding $\bm{U} = \mathbf{P} \hat{\bm{L}} \mathbf{P}$.
Alternatively, $\bm{U}$ can be calculated using the decomposition
$\bm{H}^{-1} = \mathbf{P} \hat{\bm{L}} \mathbf{P} \mathbf{P} \hat{\bm{D}}_{\mathrm{L}}^{\frac{1}{2}} \mathbf{P} \mathbf{P} \hat{\bm{D}}_{\mathrm{L}}^{\frac{1}{2}} \mathbf{P} \mathbf{P} \hat{\bm{L}}^\top \mathbf{P} = \bm{U} \bm{D}_{\mathrm{U}}^{\frac{1}{2}} \bm{D}_{\mathrm{U}}^{\frac{1}{2}} \bm{U}^\top$
where $\bm{U}$ is an upper triangular matrix with all diagonal elements being 1, and $\bm{D}_{\mathrm{U}}^{\frac{1}{2}} = \mathbf{P} \hat{\bm{D}}_{\mathrm{L}}^{\frac{1}{2}} \mathbf{P}$ is a non-negative diagonal matrix.

The decomposition to calculate $\bm{U}$ from $\bm{H}^{-1}$ is what we call ``UDU'' decomposition, which can be considered as a variant of the LDL decomposition. 

\subsection{Step 3}
\label{sec:app_proof_gptq_babai_step_3}

To distinguish the variables using the same symbol in \cref{alg:gptq_2_r2l,alg:babai_quantize_1}, we use symbols with $\hat{\ }$ to denote the symbols in \cref{alg:gptq_2_r2l}, and use the symbols with $\tilde{\ }$ for \cref{alg:babai_quantize_1}.

We have the Cholesky decomposition of $\bm{H}$: $\bm{H} = \left( \bm{H}^{-1} \right)^{-1} = \left( \bm{U} \bm{D}_{\mathrm{U}}^{\frac{1}{2}} \bm{D}_{\mathrm{U}}^{\frac{1}{2}} \bm{U}^\top \right)^{-1} = \left( \bm{D}_{\mathrm{U}}^{-\frac{1}{2}} \bm{U}^{-1} \right)^\top \bm{D}_{\mathrm{U}}^{-\frac{1}{2}} \bm{U}^{-1}$, so that $\bm{A} = \bm{D}_{\mathrm{U}}^{-\frac{1}{2}} \bm{U}^{-1}$.

\textbf{Claim}

\begin{equation}
\hat{\bm{\omega}}_{j} = \tilde{\bm{\omega}}_{j}, \quad 1 \le j \le c ,
\label{eq:p2_claim_o}
\end{equation}
and consequently,
\begin{equation}
\hat{\bm{Z}}^{(j)} = \tilde{\bm{Z}}^{(j)}, \quad 1 \le j \le c+1 ,
\label{eq:p2_claim_z}
\end{equation}
and
\begin{equation}
\hat{\bm{Q}}^{(j)} = \tilde{\bm{Q}}^{(j)}, \quad 1 \le j \le c+1 .
\label{eq:p2_claim_q}
\end{equation}

\textbf{Proof Eq.~\ref{eq:p2_claim_o} by Induction}

For $1 \le j \le c$,
\begin{equation}
\begin{aligned}
\tilde{\bm{\omega}}^{(j)}
= & \frac{\mathbf{e}_{j}^\top \bm{Y}^{(j+1)}}{\mathbf{e}_{j}^\top \bm{A} \mathbf{e}_{j}} \\
= & \frac{\mathbf{e}_{j}^\top \bm{Y}^{(j+1)}}{\mathbf{e}_{j}^\top \bm{D}_{\mathrm{U}}^{-\frac{1}{2}} \bm{U}^{-1} \mathbf{e}_{j}} \\
= & \frac{\mathbf{e}_{j}^\top \bm{Y}^{(j+1)}}{\bm{D}_{\mathrm{U}}^{-\frac{1}{2}} [j,j]} \\
= & \bm{D}_{\mathrm{U}}^{\frac{1}{2}} [j,j] \mathbf{e}_{j}^\top \bm{Y}^{(j+1)} \\
= & \mathbf{e}_{j}^\top \bm{D}_{\mathrm{U}}^{\frac{1}{2}} \bm{Y}^{(j+1)} .
\end{aligned}
\label{eq:p2_ot}
\end{equation}

The following equalities are held by the design of \cref{alg:gptq_2_l2r,alg:babai_quantize_1}:
\begin{equation}
\hat{\bm{Q}}^{(c+1)} = \tilde{\bm{Q}}^{(c+1)} = \hat{\bm{W}}^{(c+1)} = \tilde{\bm{W}}.
\label{eq:p2_c_init}
\end{equation}
\begin{equation}
\bm{Y}^{(c+1)} = \bm{A} \tilde{\bm{W}} = \bm{D}_{\mathrm{U}}^{-\frac{1}{2}} \bm{U}^{-1} \tilde{\bm{W}} .
\label{eq:p2_c_y1}
\end{equation}
\begin{equation}
\hat{\bm{\omega}}^{(j)} = \mathbf{e}_{j}^\top \hat{\bm{W}}^{(j+1)}, \quad 1 \le j \le c .
\label{eq:p2_c_oh}
\end{equation}
\begin{equation}
\hat{\bm{Q}}^{(j)} = \hat{\bm{Q}}^{(j+1)} + \mathbf{e}_{j} \left( \mathbf{e}_{j}^\top \hat{\bm{Z}}^{(j)} \mathrm{\ diag} \left( \bm{S}^\top \mathbf{e}_{j} \right) - \mathbf{e}_{j}^\top \hat{\bm{Q}}^{(j+1)} \right), \quad 1 \le j \le c .
\label{eq:p2_c_qh}
\end{equation}
\begin{equation}
\tilde{\bm{Q}}^{(j)} = \tilde{\bm{Q}}^{(j+1)} + \mathbf{e}_{j} \left( \mathbf{e}_{j}^\top \tilde{\bm{Z}}^{(j)} \mathrm{\ diag} \left( \bm{S}^\top \mathbf{e}_{j} \right) - \mathbf{e}_{j}^\top \tilde{\bm{Q}}^{(j+1)} \right), \quad 1 \le j \le c .
\label{eq:p2_c_qt}
\end{equation}
\begin{equation}
\bm{\Delta}^{(j)} = \hat{\bm{Q}}^{(j)} - \hat{\bm{W}}^{(c+1)}, \quad 1 \le j \le c .
\label{eq:p2_c_d}
\end{equation}
\begin{equation}
\hat{\bm{W}}^{(j)} = \hat{\bm{W}}^{(c+1)} - \bm{U}^{-1} \bm{\Delta}^{(j)}, \quad 1 \le j \le c .
\label{eq:p2_c_wh}
\end{equation}
\begin{equation}
\bm{Y}^{(j)} = \bm{Y}^{(j+1)} - \bm{A} \mathbf{e}_{j} \mathbf{e}_{j}^\top \tilde{\bm{Q}}^{(j)} = \bm{Y}^{(j+1)} - \bm{D}_{\mathrm{U}}^{-\frac{1}{2}} \bm{U}^{-1} \mathbf{e}_{j} \mathbf{e}_{j}^\top \tilde{\bm{Q}}^{(j)}, \quad 1 \le j \le c .
\label{eq:p2_c_y}
\end{equation}

Because $\bm{U}$ is an upper triangular matrix with all diagonal elements being $1$, $\bm{U}^{-1}$ is also an upper triangular matrix with all diagonal elements being $1$.

For $1 \le k < j \le c$,
\begin{equation}
\mathbf{e}_{j}^\top \bm{U} \mathbf{e}_{k} = \mathbf{e}_{j}^\top \bm{U}^{-1} \mathbf{e}_{k} = 0 .
\label{eq:p2_eue0}
\end{equation}

\begin{equation}
\mathbf{e}_{c}^\top \bm{U} = \mathbf{e}_{c}^\top .
\label{eq:p2_eu1}
\end{equation}

For $1 \le j \le c$,
\begin{equation}
\mathbf{e}_{j}^\top \bm{U} \mathbf{e}_{j} = \mathbf{e}_{j}^\top \bm{U}^{-1} \mathbf{e}_{j} = 1 .
\label{eq:p2_eue1}
\end{equation}

(1) Eq.~\ref{eq:p2_claim_o} holds for $j = c$:

Using Eqs.~\ref{eq:p2_ot},~\ref{eq:p2_c_init},~\ref{eq:p2_c_y1},~\ref{eq:p2_c_oh},~\ref{eq:p2_eu1},
\begin{equation}
\tilde{\bm{\omega}}^{(c)}
= \mathbf{e}_{c}^\top \bm{D}_{\mathrm{U}}^{\frac{1}{2}} \bm{Y}^{(c+1)}
= \mathbf{e}_{c}^\top \bm{D}_{\mathrm{U}}^{\frac{1}{2}} \bm{D}_{\mathrm{U}}^{-\frac{1}{2}} \bm{U}^{-1} \tilde{\bm{W}}
= \mathbf{e}_{c}^\top \bm{U}^{-1} \tilde{\bm{W}}
= \mathbf{e}_{c}^\top \tilde{\bm{W}}
= \mathbf{e}_{c}^\top \hat{\bm{W}}^{(c+1)}
= \hat{\bm{\omega}}^{(c)}.
\label{eq:p2_j1}
\end{equation}

(2) Assume Eq.~\ref{eq:p2_claim_o} holds for all $j \ge j_*$, $1 < j_* \le c$.

With Eq.~\ref{eq:p2_c_qh}, for $1 \le j \le c, 1 \le k \le c$ and $j \ne k$,
\begin{equation}
\begin{aligned}
\mathbf{e}_{k}^\top \hat{\bm{Q}}^{(j)}
= & \mathbf{e}_{k}^\top \left( \hat{\bm{Q}}^{(j+1)} + \mathbf{e}_{j} \left( \mathbf{e}_{j}^\top \bm{Z}^{(j)} \mathrm{\ diag} \left( \bm{S}^\top \mathbf{e}_{j} \right) - \mathbf{e}_{j}^\top \hat{\bm{Q}}^{(j+1)} \right) \right) \\
= & \mathbf{e}_{k}^\top \hat{\bm{Q}}^{(j+1)} + \mathbf{e}_{k}^\top \mathbf{e}_{j} \left( \mathbf{e}_{j}^\top \bm{Z}^{(j)} \mathrm{\ diag} \left( \bm{S}^\top \mathbf{e}_{j} \right) - \mathbf{e}_{j}^\top \hat{\bm{Q}}^{(j+1)} \right) \\
= & \mathbf{e}_{k}^\top \hat{\bm{Q}}^{(j+1)} + 0 \left( \mathbf{e}_{j}^\top \bm{Z}^{(j)} \mathrm{\ diag} \left( \bm{S}^\top \mathbf{e}_{j} \right) - \mathbf{e}_{j}^\top \hat{\bm{Q}}^{(j+1)} \right) \\
= & \mathbf{e}_{k}^\top \hat{\bm{Q}}^{(j+1)} .
\end{aligned}
\label{eq:p2_eqh_1}
\end{equation}

Recursively applying Eq.~\ref{eq:p2_eqh_1}, for $1 \le j \le c, 1 \le k \le c$,
\begin{equation}
\mathbf{e}_{k}^\top \hat{\bm{Q}}^{(j)}
= \begin{cases}
\mathbf{e}_{k}^\top \hat{\bm{Q}}^{(k)} & \text{if } 1 \le j \le k \le c ,\\
\mathbf{e}_{k}^\top \hat{\bm{Q}}^{(c+1)} = \mathbf{e}_{k}^\top \hat{\bm{W}}^{(c+1)}  & \text{if } 1 \le k < j \le c .
\end{cases}
\label{eq:p2_eqh}
\end{equation}

Similar to Eq.~\ref{eq:p2_eqh}, with Eq.~\ref{eq:p2_c_qt}, for $1 \le j \le c, 1 \le k \le c$,
\begin{equation}
\mathbf{e}_{k}^\top \tilde{\bm{Q}}^{(j)}
= \begin{cases}
\mathbf{e}_{k}^\top \tilde{\bm{Q}}^{(k)} & \text{if } 1 \le j \le k \le c ,\\
\mathbf{e}_{k}^\top \tilde{\bm{Q}}^{(c+1)} = \mathbf{e}_{k}^\top \tilde{\bm{W}}  & \text{if } 1 \le k < j \le c .
\end{cases}
\label{eq:p2_eqt}
\end{equation}

For $1 \le j \le c$,
\begin{equation}
\begin{aligned}
\bm{Y}^{(j)}
= & \bm{Y}^{(j+1)} - \bm{D}_{\mathrm{U}}^{-\frac{1}{2}} \bm{U}^{-1} \mathbf{e}_{j} \mathbf{e}_{j}^\top \tilde{\bm{Q}}^{(j)} & \text{(Eq.~\ref{eq:p2_c_y})} \\
= & \bm{Y}^{(c+1)} - \left( \sum_{k=j}^{c} \bm{D}_{\mathrm{U}}^{-\frac{1}{2}} \bm{U}^{-1} \mathbf{e}_{k} \mathbf{e}_{k}^\top \tilde{\bm{Q}}^{(k)} \right) & \text{(Eq.~\ref{eq:p2_c_y})} \\
= & \bm{D}_{\mathrm{U}}^{-\frac{1}{2}} \bm{U}^{-1} \tilde{\bm{W}} - \left( \sum_{k=j}^{c} \bm{D}_{\mathrm{U}}^{-\frac{1}{2}} \bm{U}^{-1} \mathbf{e}_{k} \mathbf{e}_{k}^\top \tilde{\bm{Q}}^{(j)} \right) & \text{(Eq.~\ref{eq:p2_c_y1})} \\
= & \bm{D}_{\mathrm{U}}^{-\frac{1}{2}} \bm{U}^{-1} \left( \tilde{\bm{W}} - \left( \sum_{k=j}^{c} \mathbf{e}_{k} \mathbf{e}_{k}^\top \right) \tilde{\bm{Q}}^{(j)} \right)
\end{aligned}
\label{eq:p2_y}
\end{equation}

For $1 \le j < c$,
\begin{equation}
\begin{aligned}
\tilde{\bm{\omega}}^{(j)}
= & \mathbf{e}_{j}^\top \bm{D}_{\mathrm{U}}^{\frac{1}{2}} \bm{Y}^{(j+1)} \\
& \quad\quad\quad\quad\quad\quad\quad\quad\quad\quad\quad\quad\quad\quad\quad\quad\quad\quad\quad\quad\quad\quad\quad\quad\quad\quad\quad\quad\quad\quad\quad\quad\quad \text{(Eq.~\ref{eq:p2_ot})} \\
= & \mathbf{e}_{j}^\top \bm{D}_{\mathrm{U}}^{\frac{1}{2}} \bm{D}_{\mathrm{U}}^{-\frac{1}{2}} \bm{U}^{-1} \left( \tilde{\bm{W}} - \left( \sum_{k=j+1}^{c} \mathbf{e}_{k} \mathbf{e}_{k}^\top \right) \tilde{\bm{Q}}^{(j+1)} \right) \\
& \quad\quad\quad\quad\quad\quad\quad\quad\quad\quad\quad\quad\quad\quad\quad\quad\quad\quad\quad\quad\quad\quad\quad\quad\quad\quad\quad\quad\quad\quad\quad\quad\quad \text{(Eq.~\ref{eq:p2_y})} \\
= & \mathbf{e}_{j}^\top \bm{U}^{-1} \left( \tilde{\bm{W}} - \left( \sum_{k=j+1}^{c} \mathbf{e}_{k} \mathbf{e}_{k}^\top \right) \tilde{\bm{Q}}^{(j+1)} \right) \\
= & \mathbf{e}_{j}^\top \bm{U}^{-1} \tilde{\bm{W}} - \left( \sum_{k=j+1}^{c} \mathbf{e}_{j}^\top \bm{U}^{-1} \mathbf{e}_{k} \mathbf{e}_{k}^\top \right) \tilde{\bm{Q}}^{(j+1)} \\
= & \mathbf{e}_{j}^\top \bm{U}^{-1} \tilde{\bm{W}} - \left( \left( \sum_{k=1}^{c} \mathbf{e}_{j}^\top \bm{U}^{-1} \mathbf{e}_{k} \mathbf{e}_{k}^\top \right) - \left( \sum_{k=1}^{j-1} \mathbf{e}_{j}^\top \bm{U}^{-1} \mathbf{e}_{k} \mathbf{e}_{k}^\top \right) - \mathbf{e}_{j}^\top \bm{U}^{-1} \mathbf{e}_{j} \mathbf{e}_{j}^\top \right) \tilde{\bm{Q}}^{(j+1)} \\
= & \mathbf{e}_{j}^\top \bm{U}^{-1} \tilde{\bm{W}} - \left( \left( \sum_{k=1}^{c} \mathbf{e}_{j}^\top \bm{U}^{-1} \mathbf{e}_{k} \mathbf{e}_{k}^\top \right) - \left( \sum_{k=1}^{j-1} 0 \mathbf{e}_{k}^\top \right) - 1 \mathbf{e}_{j}^\top \right) \tilde{\bm{Q}}^{(j+1)} \\
& \quad\quad\quad\quad\quad\quad\quad\quad\quad\quad\quad\quad\quad\quad\quad\quad\quad\quad\quad\quad\quad\quad\quad\quad\quad\quad\quad\quad\quad\quad\quad \text{(Eqs.~\ref{eq:p2_eue0},~\ref{eq:p2_eue1})} \\
= & \mathbf{e}_{j}^\top \bm{U}^{-1} \tilde{\bm{W}} - \left( \sum_{k=1}^{c} \mathbf{e}_{j}^\top \bm{U}^{-1} \mathbf{e}_{k} \mathbf{e}_{k}^\top \right) \tilde{\bm{Q}}^{(j+1)} + \mathbf{e}_{j}^\top \tilde{\bm{Q}}^{(j+1)} \\
= & \mathbf{e}_{j}^\top \bm{U}^{-1} \tilde{\bm{W}} - \left( \sum_{k=1}^{c} \mathbf{e}_{j}^\top \bm{U}^{-1} \mathbf{e}_{k} \mathbf{e}_{k}^\top \right) \tilde{\bm{Q}}^{(j+1)} + \mathbf{e}_{j}^\top \tilde{\bm{W}} \\
& \quad\quad\quad\quad\quad\quad\quad\quad\quad\quad\quad\quad\quad\quad\quad\quad\quad\quad\quad\quad\quad\quad\quad\quad\quad\quad\quad\quad\quad\quad\quad\quad\quad \text{(Eq.~\ref{eq:p2_eqt})} \\
= & \mathbf{e}_{j}^\top \left( \tilde{\bm{W}} - \bm{U}^{-1} \left( \left( \sum_{k=1}^{c} \mathbf{e}_{k} \mathbf{e}_{k}^\top \right) \tilde{\bm{Q}}^{(j+1)} - \tilde{\bm{W}} \right) \right) \\
= & \mathbf{e}_{j}^\top \left( \tilde{\bm{W}} - \bm{U}^{-1} \left( \mathbf{I} \tilde{\bm{Q}}^{(j+1)} - \tilde{\bm{W}} \right) \right) \\
= & \mathbf{e}_{j}^\top \left( \tilde{\bm{W}} - \bm{U}^{-1} \left( \tilde{\bm{Q}}^{(j+1)} - \tilde{\bm{W}} \right) \right) .
\end{aligned}
\label{eq:p2_otj_1}
\end{equation}

Because $\mathbf{e}_{c}^\top \left( \tilde{\bm{W}} - \bm{U}^{-1} \left( \tilde{\bm{Q}}^{(c+1)} - \tilde{\bm{W}} \right) \right) = \mathbf{e}_{c}^\top \tilde{\bm{W}} = \tilde{\bm{\omega}}^{(c)}$, Eq.~\ref{eq:p2_otj_1} can be extended for $j = c$,
\begin{equation}
\begin{aligned}
\tilde{\bm{\omega}}^{(j)} = \mathbf{e}_{j}^\top \left( \tilde{\bm{W}} - \bm{U}^{-1} \left( \tilde{\bm{Q}}^{(j+1)} - \tilde{\bm{W}} \right) \right), \quad 1 \le j \le c .
\end{aligned}
\label{eq:p2_otj}
\end{equation}

According to the assumption, for $1 < j_* \le k \le c$, we have
\begin{equation}
\hat{\bm{Q}}^{(k)} = \tilde{\bm{Q}}^{(k)} .
\label{eq:p2_assumption_q}
\end{equation}

\begin{equation}
\begin{aligned}
\tilde{\bm{\omega}}^{(j_*-1)}
= & \mathbf{e}_{j_*-1}^\top \left( \tilde{\bm{W}} - \bm{U}^{-1} \left( \tilde{\bm{Q}}^{(j_*)} - \tilde{\bm{W}} \right) \right) & \text{(Eq.~\ref{eq:p2_otj})} \\
= & \mathbf{e}_{j_*-1}^\top \left( \hat{\bm{W}}^{(c+1)} - \bm{U}^{-1} \left( \hat{\bm{Q}}^{(j_*)} - \hat{\bm{W}}^{(c+1)} \right) \right) & \text{(Eq.~\ref{eq:p2_assumption_q})} \\
= & \mathbf{e}_{j_*-1}^\top \left( \hat{\bm{W}}^{(c+1)} - \bm{U}^{-1} \bm{\Delta}^{(j_*)} \right) & \text{(Eq.~\ref{eq:p2_c_d})} \\
= & \mathbf{e}_{j_*-1}^\top \hat{\bm{W}}^{(j_*)} & \text{(Eq.~\ref{eq:p2_c_wh})} \\
= & \hat{\bm{\omega}}^{(j_*-1)} & \text{(Eq.~\ref{eq:p2_c_oh})} .
\end{aligned}
\end{equation}

Eq.~\ref{eq:p2_claim_o} holds for $j = j_* - 1$.
\hfill $\blacksquare$

\subsection{Proof of Ineffectiveness of Additional GPTQ Refinement on Babai's Algorithm}
\label{sec:app_gptq_babai_no_combine}

We may try to apply further GPTQ updates in Babai's algorithm by changing Line~\ref{algln:update_y} in Algorithm~\ref{alg:babai_quantize_1} to
\begin{equation}
{\bm{Y}'}^{(j)} \leftarrow \bm{Y}^{(j)} + \bm{A} \bm{U} \mathbf{e}_{j} \bm{\varepsilon}^{(j)} = \bm{Y}^{(j+1)} - \bm{A} \mathbf{e}_{j} \mathbf{e}_{j}^\top \tilde{\bm{Q}}^{(j)} + \bm{A} \bm{U} \mathbf{e}_{j} \bm{\varepsilon}^{(j)}
\end{equation}

However, as $\bm{A} = \bm{D}_{\mathrm{U}}^{-\frac{1}{2}} \bm{U}^{-1}$, the $\tilde{\bm{\omega}}^{(j-1)}$ remains the same:
\begin{equation}
\begin{aligned}
\tilde{{\bm{\omega}}'}^{(j-1)}
= & \mathbf{e}_{j-1}^\top \bm{D}_{\mathrm{U}}^{\frac{1}{2}} {\bm{Y}'}^{(j)} & \text{(Eq.~\ref{eq:p2_ot})} \\
= & \mathbf{e}_{j-1}^\top \bm{D}_{\mathrm{U}}^{\frac{1}{2}} \left( \bm{Y}^{(j)} + \bm{D}_{\mathrm{U}}^{-\frac{1}{2}} \bm{U}^{-1} \bm{U} \mathbf{e}_{j} \bm{\varepsilon}^{(j)} \right) \\
= & \mathbf{e}_{j-1}^\top \bm{D}_{\mathrm{U}}^{\frac{1}{2}} \bm{Y}^{(j)} + \mathbf{e}_{j-1}^\top \bm{D}_{\mathrm{U}}^{\frac{1}{2}} \bm{D}_{\mathrm{U}}^{-\frac{1}{2}} \bm{U}^{-1} \bm{U} \mathbf{e}_{j} \bm{\varepsilon}^{(j)} \\
= & \mathbf{e}_{j-1}^\top \bm{D}_{\mathrm{U}}^{\frac{1}{2}} \bm{Y}^{(j)} + \mathbf{e}_{j-1}^\top \mathbf{e}_{j} \bm{\varepsilon}^{(j)} \\
= & \mathbf{e}_{j-1}^\top \bm{D}_{\mathrm{U}}^{\frac{1}{2}} \bm{Y}^{(j)} + 0 \bm{\varepsilon}^{(j)} \\
= & \mathbf{e}_{j-1}^\top \bm{D}_{\mathrm{U}}^{\frac{1}{2}} \bm{Y}^{(j)} \\
= & \tilde{\bm{\omega}}^{(j-1)} & \text{(Eq.~\ref{eq:p2_ot})} .
\end{aligned}
\end{equation}
\hfill $\blacksquare$

\clearpage

\section{Further Discussion on Quantization Error Bound}
\label{sec:app_error_bound}

\subsection{Proof of Absolute and Relative GPTQ Quantization Error Bounds}
\label{sec:app_error_bound_proof}

We prove \cref{thm:error_bound} as follows.

Denote the basis $\bm{B}_{(i)}=\bm{\mathcal{X}} \mathrm{\ diag} \left( \bm{s}_{i} \right)$, $\bm{y}_{(i)} = \bm{\mathcal{X}} \bm{w}_{i}$
as in \cref{sec:quant_cvp} so that the quantization problem becomes the CVP minimizing
$\left\| \bm{B}_{(i)} \bm{z}_{i} - \bm{y}_{(i)} \right\|^{2}$.
Applying permutation $\bm{T}$ gives the permuted basis
$\bm{A}_{(i)} = \bm{B}_{(i)} \bm{T} = \bm{\mathcal{X}} \mathrm{\ diag} \left( \bm{s}_{i} \right) \bm{T} = \bm{\mathcal{X}} \bm{T} \mathrm{\ diag} \left( \bm{T}^{-1} \bm{s}_{i} \right)$.
Write the unnormalized Gram-Schmidt vectors of $\bm{A}_{(i)}$ as $\tilde{\bm{A}}_{(i)} = \left[ \tilde{\bm{a}}_{(i) 1}, \dots, \tilde{\bm{a}}_{(i) c} \right]$. Babai's guarantee therefore yields the tight bound $\left\| \bm{B}_{(i)} \bm{z}_{i} - \bm{y}_{(i)} \right\|^{2}
= \left\| \bm{A}_{(i)} \left( \bm{T}^{-1} \bm{z}_{i} \right) - \bm{y}_{(i)} \right\|^{2}
\le \frac{1}{4} \sum_{j=1}^{c} \left\| \tilde{\bm{a}}_{(i) j} \right\|^2$.

We may, without loss of generality, use \cref{thm:hessian_decompose} to rotate $\bm{\mathcal{X}}$ so that $\bm{A}_{(i)}$ is upper triangular.  In that case, the norm $\left\| \tilde{\bm{a}}_{(i) j} \right\|$ simplifies to $\left| \bm{A}_{(i)} [j, j] \right|$.
Let $\bm{D}_{(i)}$ be the diagonal matrix of the LDL decomposition of $\bm{A}_{(i)}^\top \bm{A}_{(i)}$ such that $\bm{D}_{(i)} [j, j]  = \left| \bm{A}_{(i)} [j, j] \right|^{2} = \left\| \tilde{\bm{a}}_{(i) j} \right\|^{2}$.
The summation $\sum_{j=1}^{c} \left\| \tilde{\bm{a}}_{(i) j} \right\|^2$ can then be expressed as $\mathrm{tr} \left( \bm{D}_{(i)} \right)$.
Let $\bm{\mathcal{L}}$ be the lower triangular matrix in the LDL decomposition of $\bm{T}^\top \bm{X}^\top \bm{X} \bm{T} = \bm{\mathcal{L}} \bm{\mathcal{D}} \bm{\mathcal{L}}^{\top}$,
so that the LDL decomposition of $\bm{A}_{(i)}^\top \bm{A}_{(i)} = \mathrm{diag} \left( \bm{T}^{-1} \bm{s}_{i} \right) \bm{T}^\top \bm{X}^\top \bm{X} \bm{T} \mathrm{\ diag} \left( \bm{T}^{-1} \bm{s}_{i} \right) = \bm{\mathcal{L}}_{(i)} \bm{D}_{(i)} \bm{\mathcal{L}}_{(i)}^\top$ has $\bm{D}_{(i)} = \mathrm{diag} \left( \bm{T}^{-1} \bm{s}_{i} \right) \bm{D} \mathrm{\ diag} \left( \bm{T}^{-1} \bm{s}_{i} \right)$ and $\bm{\mathcal{L}}_{(i)} = \mathrm{diag} \left( \bm{T}^{-1} \bm{s}_{i} \right) \bm{\mathcal{L}} \mathrm{\ diag} \left( \bm{T}^{-1} \bm{s}_{i} \right)^{-1}$.
The absolute no-clipping error bound is therefore $\frac{1}{4} \sum_{j=1}^{c} \left\| \tilde{\bm{a}}_{(i) j} \right\|^2 = \frac{1}{4} \mathrm{tr} \left( \bm{D}_{(i)} \right) = \frac{1}{4} \left( \bm{T}^{-1} \bm{s}_{i} \right)^\top \bm{D} \left( \bm{T}^{-1} \bm{s}_{i} \right)$.

For the relative no-clipping quantization error bound, we can plug in $\left\| \tilde{\bm{a}}_{(i) j} \right\| = \left| \bm{A}_{(i)} [j, j] \right| = \sqrt{ \bm{D}_{(i)} [j, j]} = \sqrt{\left(\mathrm{diag} \left( \bm{T}^{-1} \bm{s}_{i} \right) \bm{D} \mathrm{\ diag} \left( \bm{T}^{-1} \bm{s}_{i} \right) \right) [j, j]} = \sqrt{\bm{D} [j, j]} \left| \left( \bm{T}^{-1} \bm{s}_{i} \right) [j] \right| \coloneq d_{j}$ into Babai's relative error bound in \cref{sec:prelims_cvp}.

\subsection{Expected Quantization Error over a Uniform Hyper-Cuboid}
\label{sec:app_expected_error}

We have shown that, when clipping is disabled, Babai's nearest-plane (hence back-to-front GPTQ) ensures the tight worst-case bound  
\begin{equation}
\left\| \bm{X} \mathrm{\ diag} \left( \bm{s}_{i} \right) \bm{z}_{i} - \bm{X} \bm{w}_{i} \right\|^2 \le \frac{1}{4} \sum_{j=1}^{c} \left\| \tilde{\bm{a}}_{j} \right\|^2,
\quad
\tilde{\bm{A}}=\left[ \tilde{\bm{a}}_{1}, \dots, \tilde{\bm{a}}_{c} \right]
\label{eq:B1}
\end{equation}
where $\tilde{\bm{a}}_{j}$ are the unnormalized Gram-Schmidt vectors of the permuted lattice basis $\bm{A}$.

Introduce the half-edge lengths
\begin{equation}
a_{j} = \frac{1}{2} \left\| \tilde{\bm a}_{j} \right\|, \quad j = 1, \dots, c,
\end{equation}
so that the Babai residual always lies in the axis-aligned hyper-cuboid
$\prod_{j=1}^{c} \left[ -a_{j}, a_{j} \right]$ and Eq.~\ref{eq:B1} is rewritten as
\begin{equation}
\epsilon_{\mathrm{worst}} = \sum_{j=1}^{c} a_{j}^{2} .
\label{eq:B2}
\end{equation}

\textbf{Uniform prior on the unknown weight vector.}
Assume now that the continuous, not-yet-quantized weight offset
$\bm{u} = \bm{X} \left( \bm{w}_{i} - \mathrm{\ diag}(\bm{s}_{i})\bm{z}_{i} \right)$
is uniformly distributed inside this hyper-cuboid, i.e., each coordinate
$u_{j} \sim \mathrm{Uniform} \left(-a_{j}, a_{j} \right)$
and the coordinates are independent.
The squared error becomes the random variable
\begin{equation}
\epsilon = \sum_{j=1}^{c} u_{j}^2 .
\end{equation}

\begin{lemma}
\label{thm:uniform_square}
For a scalar $u \sim \mathrm{Uniform} \left(-a, a \right)$ one has
$\mathbb{E}[u^2]=\frac{a^2}{3}$.
\end{lemma}
\begin{proof}
\begin{equation}
\mathbb{E}[u^2]
=\frac{1}{2a} \int_{-a}^{a} u^{2} \mathrm{d} u
=\frac{1}{2a} \left[ \frac{1}{3} x^{3} \right]_{-a}^{a}
=\frac{a^2}{3} .
\end{equation}
\end{proof}

\textbf{Expected residual norm.}
Using Lemma~\ref{thm:uniform_square},
\begin{equation}
\mathbb{E}[\epsilon]
=\sum_{j=1}^{c} \mathbb{E} \left[ u_{j}^2 \right]
=\frac{1}{3} \sum_{j=1}^{c} a_{j}^2 .
\label{eq:B3}
\end{equation}

\textbf{Ratio to the worst-case bound.}
Comparing Eq.~\ref{eq:B3} with Eq.~\ref{eq:B2} gives
\begin{equation}
\boxed{
\mathbb{E}[\epsilon] = \frac{1}{3} \epsilon_{\mathrm{worst}}
}
\quad\Longrightarrow\quad
\mathbb{E} \left[ \left\| \bm{X} \mathrm{\ diag} \left( \bm{s}_{i} \right) \bm{z}_{i} - \bm{X} \bm{w}_{i} \right\|^2 \right]
= \frac{1}{12} \sum_{j=1}^{c} \| \tilde{\bm{a}}_{j} \|^2 .
\end{equation}
Hence, under a uniform prior on the weights inside Babai's orthogonal hyper-cuboid, the average layer-wise quantization error is exactly $\frac{1}{3}$ of the worst-case guarantee stated in \cref{thm:error_bound}.

\subsection{Empirical Verification on Quantization Order and Error Bound}
\label{sec:app_quant_order_comparison}

Changing the quantization order alters the diagonal matrix $\bm{D}$ of the LDL decomposition of the permuted Hessian and therefore the no-clipping GPTQ/Babai bound (see \cref{sec:quant_order}).
When per-group scales are approximately uniform, minimizing $\mathrm{tr} \left( \bm{D} \right)$ is a good proxy for tightening this bound.
To assess different orders (back-to-front, front-to-back, random order, GPTQ's act-order, and our min-pivot order), we run the calibration dataset from \cref{sec:app_experiments_setup} through the full-precision Qwen3-8B model and compute per-layer Hessians and calculate the $\mathrm{tr} \left( \bm{D} \right)$.
For the random order, we average the results over 100 runs.
\cref{tab:quant_order} reports $\mathrm{tr} \left( \bm{D} \right)$ for the layers in transformer block 18; other blocks and models show similar patterns.
In block 18, act-order already reduces $\mathrm{tr} \left( \bm{D} \right)$ relative to the back-to-front/front-to-back/random baselines, especially in the Q·K·V and Gate·Up layers ($\approx$35-50\% lower).
Our min-pivot heuristic consistently attains the smallest trace.
In practice, this tightens the theoretical layer-wise error bound and yields modest but consistent improvements.
We can use act-order as a cheap option and reserve min-pivot for cases where a tighter bound is required.

\begin{table}[htpb]
\caption{$\mathrm{tr} \left( \bm{D} \right)$ with different quantization orders of layers in Qwen3-8B block 18.}
\label{tab:quant_order}
\begin{center}
\begin{tabular}{lcccc}
\toprule
Order & Q·K·V & O & Gate·Up & Down \\
\midrule
back-to-front & 1.169e+08 & 1.824e+08 & 1.181e+08 & 1.323e+09 \\
\midrule
front-to-back & 1.161e+08 & 1.841e+08 & 1.202e+08 & 1.320e+09 \\
\midrule
random (averaged) & 1.168e+08 & 1.856e+08 & 1.194e+08 & 1.322e+09 \\
\midrule
act-order & 7.400e+07 & 1.786e+08 & 6.052e+07 & 1.222e+09 \\
\midrule
min-pivot & 7.323e+07 & 1.772e+08 & 5.990e+07 & 1.221e+09 \\
\bottomrule
\end{tabular}
\end{center}
\end{table}

\clearpage

\section{Further Applications and Experimental Results}
\label{sec:app_experiments}

\subsection{Overflow-Tolerant Quantization Algorithms}
\label{sec:app_experiments_algo}

\cref{alg:ssqr,alg:hptq,alg:hrtn} are the pseudocodes of our proposed SSQR, HPTQ, and HRTN algorithms in \cref{sec:experiments}.
Additional notations are as follows.
$\rho \in \left[ 0, 1 \right]$ is the target outlier rate in SSQR.
$\bm{\Xi} = \left[ \bm{\xi}_{1}, \dots, \bm{\xi}_{r} \right] \in \mathbb{R}^{c \times r}$ is the sparse weight matrix in SSQR.
$h \in \mathbb{R}_{>0}$ is the target average bitwidth in HPTQ and HRTN.

\begin{figure}[!ht]
\centering
\begin{minipage}[t]{\linewidth}
\IncMargin{1.5em}
\begin{algorithm}[H]
\caption{SSQR}
\label{alg:ssqr}
\LinesNumbered
\SetKwInput{KwData}{Input}
\SetKwInput{KwResult}{Output}

\KwData{$\bm{W}, \bm{X}, \bm{P}, \lambda, \mathbb{Z}_{\dagger}, \rho$}
\KwResult{$\bm{Z}, \bm{S}, \bm{\Xi}, \bm{Q}$}

$\bm{S}_{\mathrm{MSE}} \leftarrow$ compute the MSE scale using $\bm{W}$ and $\mathbb{Z}_{\dagger}$

$\bm{s}_{\mathrm{min}}, \bm{s}_{\mathrm{max}} \leftarrow \bm{0}^{r}, \bm{2}^{r}$
\tcp{initialize the binary search boundary per output channel}

$\bm{s} \leftarrow \left( \bm{s}_{\mathrm{min}} + \bm{s}_{\mathrm{max}} \right) / 2$
\tcp{the scale for scale}

\While{$\bm{s} \mathrm{\ not\ converge}$}{

$\bm{S} \leftarrow \bm{S}_{\mathrm{MSE}} \mathrm{\ diag} \left( \bm{s} \right)$
\tcp{output-channel-wisely proportionally adjust the scale}

$\bm{Z}, \bm{\Xi}, \bm{Q} \leftarrow \textsc{SSQRInnerProcedure} \left( \bm{W}, \bm{S}, \bm{X}, \bm{P}, \lambda, \mathbb{Z}_{\dagger} \right)$
\tcp{\cref{alg:ssqr_inner}}

$\bm{s}_{\mathrm{min}} [i], \bm{s}_{\mathrm{max}} [i] \leftarrow \begin{cases}
\bm{s}_{\mathrm{min}} [i], \bm{s} [i] & \mathrm{if \ } \left\| \bm{\Xi} [:, i] \right\|_{0} < \rho c \\
\bm{s} [i], \bm{s}_{\mathrm{max}} [i] & \mathrm{otherwise}
\end{cases}$
\quad for $i \in \left\{ 1, \dots, r \right\}$

$\bm{s} \leftarrow \left( \bm{s}_{\mathrm{min}} + \bm{s}_{\mathrm{max}} \right) / 2$

}

\end{algorithm}
\end{minipage}
\begin{minipage}[t]{\linewidth}
\IncMargin{1.5em}
\begin{algorithm}[H]
\caption{SSQR Inner Procedure (GPTQ with overflowed elements in floating-point)}
\label{alg:ssqr_inner}
\LinesNumbered
\SetKwInput{KwData}{Input}
\SetKwInput{KwResult}{Output}

\KwData{$\bm{W}, \bm{S}, \bm{X}, \bm{P}, \lambda, \mathbb{Z}_{\dagger}$}
\KwResult{$\bm{Z}, \bm{\Xi}, \bm{Q}$}

$\bm{H} \leftarrow \bm{P}^\top \left( \bm{X}^\top \bm{X} + \lambda \mathbf{I} \right) \bm{P}$

$\bm{L} \leftarrow \textsc{LDL} \left( \bm{H}^{-1} \right)$

$\bm{W}, \bm{S} \leftarrow \bm{P}^{-1} \bm{W}, \bm{P}^{-1} \bm{S}$

$\bm{Q}, \bm{Z} \leftarrow \bm{W}, \bm{0}$

\For{$j \gets 1 \ \mathrm{to} \ c$}{

$\bm{\zeta} \leftarrow \bm{W} [j, :] / \bm{S} [j, :]$

$\bm{Z} [j, :] \leftarrow \textsc{Round} \left( \bm{\zeta} , \mathbb{Z}_{\dagger} \right)$

$\bm{\Xi} [j, i] \leftarrow \begin{cases}
\bm{W} [j, i] - \bm{Z} [j, i] * \bm{S} [j, i] & \mathrm{if \ } \bm{Z} [j, i] \ne \textsc{Round} \left( \bm{\zeta} [i] , \mathbb{Z} \right) \\
0 & \mathrm{otherwise}
\end{cases}$
\tcp{\textcolor{blue}{new}}

$\bm{Q} [j, :] \leftarrow \bm{Z} [j, :] * \bm{S} [j, :] + \bm{\Xi} [j, :]$
\tcp{\textcolor{blue}{new}}

$\bm{\varepsilon} \leftarrow \bm{Q} [j, :] - \bm{W} [j, :]$

$\bm{W} [j:, :] \leftarrow \bm{W} [j:, :] + \bm{L} [j:, j] \bm{\varepsilon}$

}

$\bm{Z}, \bm{\Xi}, \bm{Q} \leftarrow \bm{P} \bm{Z}, \bm{P} \bm{\Xi}, \bm{P} \bm{Q}$
\tcp{\textcolor{blue}{new}}

\end{algorithm}
\end{minipage}
\end{figure}

\begin{figure}[!ht]
\centering
\begin{minipage}[t]{\linewidth}
\IncMargin{1.5em}
\begin{algorithm}[H]
\caption{HPTQ}
\label{alg:hptq}
\LinesNumbered
\SetKwInput{KwData}{Input}
\SetKwInput{KwResult}{Output}

\KwData{$\bm{W}, \bm{X}, \bm{P}, \lambda, h$}
\KwResult{$\bm{Z}, s, \bm{Q}$}

$s_{\mathrm{min}}, s_{\mathrm{max}} \leftarrow 0, \left\| \bm{W} \right\|_{\infty}$
\tcp{initialize the binary search boundary}

$s \leftarrow \left( s_{\mathrm{min}} + s_{\mathrm{max}} \right) / 2$
\tcp{the scale}

\While{$s \mathrm{\ not\ converge}$}{

$\bm{S} \leftarrow s \cdot \bm{1}^{c \times r}$
\tcp{broadcast the scale}

$\bm{Z}, \bm{Q} \leftarrow \textsc{GPTQ} \left( \bm{W}, \bm{S}, \bm{X}, \bm{P}, \lambda, \mathbb{Z} \right)$
\tcp{\cref{alg:gptq}}

$h' \leftarrow$ average Huffman encoding bitwidth of $\bm{Z}$

\If{$h' < h$}{

$s_{\mathrm{max}} \leftarrow s$
\tcp{too few bits, try smaller scale}

}
\Else{

$s_{\mathrm{min}} \leftarrow s$
\tcp{too many bits, try larger scale}

}

$s \leftarrow \left( s_{\mathrm{min}} + s_{\mathrm{max}} \right) / 2$

}

\end{algorithm}
\end{minipage}
\begin{minipage}[t]{\linewidth}
\IncMargin{1.5em}
\begin{algorithm}[H]
\caption{HRTN}
\label{alg:hrtn}
\LinesNumbered
\SetKwInput{KwData}{Input}
\SetKwInput{KwResult}{Output}

\KwData{$\bm{W}, h$}
\KwResult{$\bm{Z}, s, \bm{Q}$}

$s_{\mathrm{min}}, s_{\mathrm{max}} \leftarrow 0, \left\| \bm{W} \right\|_{\infty}$
\tcp{initialize the binary search boundary with min and max}

$s \leftarrow \left( s_{\mathrm{min}} + s_{\mathrm{max}} \right) / 2$
\tcp{the scale}

\While{$s \mathrm{\ not\ converge}$}{

$\bm{Z} \leftarrow \textsc{Round} \left( \bm{W} / s, \mathbb{Z} \right)$
\tcp{round-to-nearest}

$\bm{Q} \leftarrow s \bm{Z}$

$h' \leftarrow$ average Huffman encoding bitwidth of $\bm{Z}$

\If{$h' < h$}{

$s_{\mathrm{max}} \leftarrow s$
\tcp{too few bits, try smaller scale}

}
\Else{

$s_{\mathrm{min}} \leftarrow s$
\tcp{too many bits, try larger scale}

}

$s \leftarrow \left( s_{\mathrm{min}} + s_{\mathrm{max}} \right) / 2$

}

\end{algorithm}
\end{minipage}
\end{figure}

\clearpage

\subsection{Experiment Setup}
\label{sec:app_experiments_setup}

We work with the Qwen3 family of models, which come in a range of sizes. We focus on the Qwen3-8B model for detailed head-to-head comparisons, while the other variants, Qwen3-0.6B, Qwen3-1.7B, Qwen3-4B, and Qwen3-14B, help us assess how our method performs across different model scales.

We construct the calibration dataset for the GPTQ algorithm using the FineWeb-Edu dataset (HuggingFaceFW/fineweb-edu, subset sample-10BT). The dataset is streamed and shuffled with a fixed seed for reproducibility. After tokenizing the text samples, our 256 sequences are accumulated into non-overlapping sequences of length 2048.

We use WikiText-2 and C4 for perplexity evaluations. For WikiText-2, the entire test split is first concatenated using two line breaks as separators and then tokenized with the default HuggingFace tokenizer for each model. For C4, we sample individual documents from the selected shard, tokenize them, and randomly extract sequences of the desired length. In both cases, sequences shorter than the target length (2048 tokens) are discarded, and sequences longer than the target length are cropped to the specified window.

\clearpage

\subsection{Accuracy Results for Qwen3 Models}
\label{sec:app_experiments_metrics_qwen}

We compare the perplexity results between RTN, GPTQ, HRTN, HPTQ, and SSQR using the Qwen3-8B model in \cref{tab:qwen8_results}. In addition, the perplexity results for other variants of Qwen3 with HPTQ are shown in \cref{tab:huff_results_all}.

\cref{tab:qwen8_eval_tasks} shows additional zero-shot results on the Qwen3-8B model for RTN, GPTQ, HRTN, and HPTQ. Additional HPTQ results on other Qwen3 models are in \cref{tab:all_winogrande,tab:all_truthfulqa,tab:all_mmlu,tab:all_piqa,tab:all_sciq}. 

\begin{table}[htpb]
\caption{Perplexity of Qwen3-8B model under HPTQ, GPTQ, HRTN, RTN, and SSQR with different bitwidths.}
\label{tab:qwen8_results}
\begin{center}
\begin{tabular}{lcccc}
\toprule
Method & Avg. Bitwidth & \multicolumn{2}{c}{Perplexity}  \\
\cmidrule(r){3-4}
      &           & WikiText-2 & C4 \\
\midrule
BF16 Baseline & 16.000 & 9.73 & 13.55 \\
\midrule
\multirow{3}{*}{HPTQ} & 4.125 & 9.81 & 13.64 \\
           & 3.125 & 10.34 & 14.23 \\
           & 2.125 & 13.97 & 16.89  \\
\midrule
\multirow{3}{*}{GPTQ} & 4.125 & 10.10 & 13.92 \\
           & 3.125 & 12.77 & 15.61 \\
           & 2.125 & 57.51 & 36.14 \\
\midrule
\multirow{3}{*}{HRTN}   & 4.125 & 9.90 & 13.80 \\
           & 3.125 & 10.75 & 14.63 \\
           & 2.125 & 593.05 & 503.00 \\
\midrule
\multirow{3}{*}{RTN}   & 4.125 & 10.30 & 15.20 \\
           & 3.125 & 16.30 & 21.08 \\
           & 2.125 & 2e10 & 2e10 \\
\midrule
\multirow{3}{*}{SSQR-1\%} & 4.445 & 10.00 & 13.83 \\
           & 3.445 & 10.64 & 14.71 \\
           & 2.445 & 22.30 & 27.07 \\
\midrule
\multirow{3}{*}{SSQR-2\%} & 4.765 & 9.96 & 13.76 \\
           & 3.765 & 10.57 & 14.56 \\
           & 2.765 & 16.55 & 20.80 \\
\midrule
\multirow{3}{*}{SSQR-3\%} & 5.085 & 9.92 & 13.76 \\
           & 4.085 & 10.42 & 14.32 \\
           & 3.085 & 14.05 & 18.57 \\
\midrule
\multirow{3}{*}{SSQR-4\%} & 5.405 & 9.84 & 13.71 \\
           & 4.405 & 10.34 & 14.29 \\
           & 3.405 & 13.12 & 17.60 \\
\midrule
\multirow{3}{*}{SSQR-5\%} & 5.725 & 9.80 & 13.67 \\
           & 4.725 & 10.32 & 14.22 \\
           & 3.725 & 12.88 & 16.85 \\
\bottomrule
\end{tabular}
\end{center}
\end{table}

\begin{table}[htpb]
\caption{Perplexity of Qwen3 models under HPTQ for different bitwidths.}
\label{tab:huff_results_all}
\begin{center}
\begin{tabular}{ccccc}
\toprule
Model & Avg. Bitwidth & \multicolumn{2}{c}{Perplexity}  \\
\cmidrule(r){3-4}
      &           & WikiText-2 & C4 \\
\midrule
\multirow{4}{*}{0.6B} & 16.000 & 20.96 & 26.37 \\
           & 4.125 & 22.72 & 28.35 \\
           & 3.125 & 31.43 & 37.92 \\
           & 2.125 & 156.45 & 171.38 \\
\midrule
\multirow{4}{*}{1.7B} & 16.000 & 16.72 & 19.92 \\
           & 4.125 & 18.18 & 20.99 \\
           & 3.125 & 19.72 & 23.15 \\
           & 2.125 & 46.94 & 51.96 \\
\midrule
\multirow{4}{*}{4B}   & 16.000 & 13.66 & 17.07 \\
           & 4.125 & 14.26 & 17.39 \\
           & 3.125 & 14.55 & 18.17 \\
           & 2.125 & 24.40 & 26.46 \\
\midrule
\multirow{4}{*}{8B}   & 16.000 & 9.73 & 13.55 \\
           & 4.125 & 9.81 & 13.64 \\
           & 3.125 & 10.34 & 14.23 \\
           & 2.125 & 13.97 & 16.89 \\
\midrule
\multirow{4}{*}{14B}  & 16.000 & 8.65 & 12.23 \\
           & 4.125 & 8.76 & 12.12 \\
           & 3.125 & 9.06 & 13.97 \\
           & 2.125 & 11.36 & 15.50 \\
\bottomrule
\end{tabular}
\end{center}
\end{table}

\begin{table}[htpb]
\caption{Zero-shot evaluation results (\%) for Qwen3-8B under different quantization methods across six benchmarks.}
\label{tab:qwen8_eval_tasks}
\begin{center}
\begin{tabular}{lccccccc}
\toprule
Method & Avg. Bitwidth & Wino & MMLU & PiQA & SciQ & \multicolumn{2}{c}{TQA}  \\
\cmidrule(r){7-8}
      &           &           &           &            &           & MC1 & MC2 \\
\midrule
BF16 Baseline & 16.000 & 68.11 & 73.02 & 77.80 & 95.7 & 36.35 & 54.50 \\
\midrule
\multirow{3}{*}{HPTQ} & 4.125 & 67.17 & 72.28 & 77.42 & 95.6 & 35.01 & 53.36 \\
           & 3.125 & 66.93 & 70.96 & 77.53 & 95.4 & 36.11 & 54.73 \\
           & 2.125 & 59.19 & 52.99 & 72.52 & 86.8 & 31.09 & 49.01  \\
\midrule
\multirow{3}{*}{GPTQ} & 4.125 & 68.82 & 71.76 & 77.58 & 95.3 & 36.35 & 54.55 \\
           & 3.125 & 68.35 & 65.80 & 75.46 & 75.46 & 36.11 & 55.21 \\
           & 2.125 & 52.25 & 34.25 & 57.83 & 57.83 & 28.40 & 46.91 \\
\midrule
\multirow{3}{*}{HRTN}   & 4.125 & 67.56 & 72.15 & 76.99 & 94.2 & 36.47 & 56.46 \\
           & 3.125 & 66.22 & 67.85 & 76.12 & 93.7 & 35.13 & 53.68 \\
           & 2.125 & 51.22 & 33.91 & 65.78 & 76.8 & 30.48 & 51.78 \\
\midrule
\multirow{3}{*}{RTN}   & 4.125 & 67.17 & 69.71 & 75.90 & 94.5 & 36.84 & 55.77 \\
           & 3.125 & 57.93 & 47.90 & 70.89 & 87.1 & 34.03 & 52.76 \\
           & 2.125 & 49.08 & 22.95 & 51.63 & 21.2 & 24.11 & 47.33 \\
\midrule
\multirow{3}{*}{SSQR-1\%} 
 & 4.445 & 68.43 & 72.12 & 77.04 & 95.2 & 37.58 & 55.81 \\
 & 3.445 & 68.11 & 68.46 & 75.84 & 95.5 & 38.19 & 55.95 \\
 & 2.445 & 51.85 & 26.71 & 61.64 & 69.8 & 28.40 & 43.88 \\
\midrule
\multirow{3}{*}{SSQR-2\%}  
 & 4.765 & 67.25 & 72.27 & 77.97 & 95.5 & 35.62 & 53.47 \\
 & 3.765 & 67.40 & 69.66 & 76.22 & 95.1 & 33.90 & 53.05 \\
 & 2.765 & 55.72 & 37.48 & 66.76 & 83.8 & 27.54 & 45.54 \\
\midrule
\multirow{3}{*}{SSQR-3\%}  
 & 5.085 & 67.72 & 71.89 & 77.53 & 95.6 & 36.47 & 54.46 \\
 & 4.085 & 65.59 & 69.88 & 77.31 & 94.3 & 37.82 & 55.34 \\
 & 3.085 & 59.19 & 49.32 & 69.59 & 86.4 & 29.50 & 48.53 \\
\midrule
\multirow{3}{*}{SSQR-4\%}  
 & 5.405 & 69.53 & 72.63 & 77.31 & 95.1 & 36.23 & 53.60 \\
 & 4.405 & 67.48 & 69.51 & 76.61 & 94.9 & 37.21 & 54.81 \\
 & 3.405 & 61.25 & 54.07 & 72.80 & 89.5 & 31.33 & 50.46 \\
\midrule
\multirow{3}{*}{SSQR-5\%}  
 & 5.725 & 68.27 & 72.23 & 77.42 & 95.2 & 35.86 & 53.76 \\
 & 4.725 & 67.48 & 70.76 & 76.71 & 95.5 & 35.37 & 52.91 \\
 & 3.725 & 62.59 & 58.67 & 73.23 & 90.8 & 31.21 & 50.25 \\
\bottomrule
\end{tabular}
\end{center}
\end{table}

\begin{table}[htpb]
\caption{TruthfullQA (\%) zero-shot results (MC1/MC2) for Qwen3 models quantized with HPTQ.}
\label{tab:all_truthfulqa}
\begin{center}
\begin{tabular}{cccccc}
\toprule
Avg. Bitwidth & 0.6B & 1.7B & 4B & 8B & 14B \\
\midrule
16.000 & 27.17/42.80 & 29.50/45.88 & 37.33/54.83 & 36.35/54.50 & 40.76/58.62 \\
\midrule
4.125 & 26.19/41.56 & 28.76/45.17 & 36.72/54.46 & 35.01/53.36 & 40.51/58.28 \\
\midrule
3.125 & 25.34/41.95 & 29.62/46.13 & 35.25/53.83 & 36.11/54.73 & 39.90/58.33 \\
\midrule
2.125 & 23.99/46.39 & 28.15/48.25 & 31.70/50.67 & 31.09/49.01 & 36.84/54.93 \\
\bottomrule
\end{tabular}
\end{center}
\end{table}

\begin{table}[htpb]
\caption{MMLU (\%) zero-shot results for Qwen3 models quantized with HPTQ.}
\label{tab:all_mmlu}
\begin{center}
\begin{tabular}{cccccc}
\toprule
Avg. Bitwidth & 0.6B & 1.7B & 4B & 8B & 14B \\
\midrule
16.000 & 40.34 & 55.44 & 68.38 & 73.02 & 77.10 \\
\midrule
4.125 & 29.84 & 53.95 & 67.45 & 72.28 & 76.27 \\
\midrule
3.125 & 32.92 & 47.49 & 62.70 & 70.96 & 75.53 \\
\midrule
2.125 & 24.58 & 23.87 & 40.83 & 52.99 & 64.31 \\
\bottomrule
\end{tabular}
\end{center}
\end{table}

\begin{table}[htpb]
\caption{PiQA (\%) zero-shot results for Qwen3 models quantized with HPTQ.}
\label{tab:all_piqa}
\begin{center}
\begin{tabular}{cccccc}
\toprule
Avg. Bitwidth & 0.6B & 1.7B & 4B & 8B & 14B \\
\midrule
16.000 & 67.30 & 72.31 & 74.92 & 77.80 & 79.87 \\
\midrule
4.125 & 66.00 & 70.78 & 75.30 & 77.42 & 79.54 \\
\midrule
3.125 & 62.08 & 68.44 & 73.01 & 77.53 & 78.78 \\
\midrule
2.125 & 54.13 & 57.40 & 66.76 & 72.52 & 75.46 \\
\bottomrule
\end{tabular}
\end{center}
\end{table}

\begin{table}[htpb]
\caption{WinoGrande (\%) zero-shot results for Qwen models quantized with HPTQ.}
\label{tab:all_winogrande}
\begin{center}
\begin{tabular}{cccccc}
\toprule
Avg. Bitwidth & 0.6B & 1.7B & 4B & 8B & 14B \\
\midrule
16.000    & 56.43  & 61.48 & 65.27 & 68.11 & 72.53 \\
\midrule
4.125 & 54.38 & 59.67 & 64.09 & 67.17 & 73.01 \\
\midrule
3.125 & 52.72 & 58.72 & 64.80 & 66.93 & 71.19 \\
\midrule
2.125 & 49.80 & 49.96 & 53.04 & 59.19 & 66.06 \\
\bottomrule
\end{tabular}
\end{center}
\end{table}

\begin{table}[htpb]
\caption{SciQ (\%) zero-shot results for Qwen3 models quantized with HPTQ, with internal reasoning disabled.}
\label{tab:all_sciq}
\begin{center}
\begin{tabular}{cccccc}
\toprule
Avg. Bitwidth & 0.6B & 1.7B & 4B & 8B & 14B \\
\midrule
16.000 & 83.5 & 91.2 & 93.5 & 95.7 & 96.8 \\
\midrule
4.125 & 80.7 & 88.9 & 93.3 & 95.6 & 97.1 \\
\midrule
3.125 & 76.6 & 89.9 & 92 & 95.4 & 96.8 \\
\midrule
2.125 & 40.8 & 62.8 & 81.2 & 86.8 & 93.8 \\
\bottomrule
\end{tabular}
\end{center}
\end{table}

\clearpage

\subsection{Accuracy Results for Llama Models}
\label{sec:app_experiments_metrics_llama}

\cref{tab:llama3-2_results,tab:llama3.1_8b_results,tab:llama2_7b_results,tab:llama3.2_3b_eval_tasks,tab:llama3.1_8b_eval_tasks} report the evaluation results for Llama-3.2-3B-Instruct, Llama-3.1-8B-Instruct, and Llama-2-7B models under the same setups as in \cref{sec:app_experiments_metrics_qwen}.

\begin{table}[htpb]
\caption{Perplexity of Llama-3.2-3B-Instruct model under HPTQ, GPTQ, and SSQR with different bitwidths.}
\label{tab:llama3-2_results}
\begin{center}
\begin{tabular}{lcccc}
\toprule
Method & Avg. Bitwidth & \multicolumn{2}{c}{Perplexity}  \\
\cmidrule(r){3-4}
      &           & WikiText-2 & C4 \\
\midrule
BF16 Baseline & 16.000 & 11.01 & 13.49 \\
\midrule
\multirow{3}{*}{HPTQ} & 4.125 & 11.27 & 14.64 \\
           & 3.125 & 12.51 & 15.81 \\
           & 2.125 & 22.58 & 29.82  \\
\midrule
\multirow{3}{*}{GPTQ} & 4.125 & 11.96	& 15.37 \\
           & 3.125 & 15.20	& 18.99 \\
           & 2.125 & 357.69 & 172.89 \\
\midrule
\multirow{3}{*}{SSQR-1\%} & 4.445 & 11.38	& 14.95 \\
           & 3.445 & 13.48 & 18.38 \\
           & 2.445 & 83.41 & 67.19 \\
\midrule
\multirow{3}{*}{SSQR-2\%} & 4.765 & 11.50	& 14.77 \\
           & 3.765 & 13.20 & 16.65 \\
           & 2.765 & 45.93 & 41.69 \\
\midrule
\multirow{3}{*}{SSQR-3\%} & 5.085 & 11.39	& 14.64 \\
           & 4.085 & 12.50 & 16.10 \\
           & 3.085 & 37.41 & 30.74 \\
\midrule
\multirow{3}{*}{SSQR-4\%} & 5.405 & 11.53	& 14.69 \\
           & 4.405 & 12.33 & 15.96 \\
           & 3.405 & 23.74 & 27.59 \\
\midrule
\multirow{3}{*}{SSQR-5\%} & 5.725 & 11.47	& 14.69 \\
           & 4.725 & 12.29 & 15.81 \\
           & 3.725 & 22.94 & 25.44 \\
\bottomrule
\end{tabular}
\end{center}
\end{table}

\begin{table}[htpb]
\caption{Perplexity of Llama-3.1-8B-Instruct model under HPTQ, GPTQ, and SSQR with different bitwidths.}
\label{tab:llama3.1_8b_results}
\begin{center}
\begin{tabular}{lcccc}
\toprule
Method & Avg. Bitwidth & \multicolumn{2}{c}{Perplexity}  \\
\cmidrule(r){3-4}
      &           & WikiText-2 & C4 \\
\midrule
BF16 Baseline & 16.000 & 7.20 & 9.09 \\
\midrule
\multirow{3}{*}{HPTQ} & 4.125 & 7.37 & 9.99 \\
           & 3.125 & 7.84 & 11.04 \\
           & 2.125 & 11.89 & 16.37  \\
\midrule
\multirow{3}{*}{GPTQ} & 4.125 & 7.56 & 10.46 \\
           & 3.125 & 9.44	& 13.16 \\
           & 2.125 & 148.15 & 71.33 \\
\midrule
\multirow{3}{*}{SSQR-1\%} & 4.445 & 7.50 & 10.30 \\
           & 3.445 & 8.67 & 12.35 \\
           & 2.445 & 57.26 & 39.96 \\
\midrule
\multirow{3}{*}{SSQR-2\%} & 4.765 & 7.48 & 10.20 \\
           & 3.765 & 8.32	& 11.75 \\
           & 2.765 & 25.18 & 25.21 \\
\midrule
\multirow{3}{*}{SSQR-3\%} & 5.085 & 7.41 & 10.11 \\
           & 4.085 & 8.16	& 11.54 \\
           & 3.085 & 17.27 & 20.03 \\
\midrule
\multirow{3}{*}{SSQR-4\%} & 5.405 & 7.39 & 10.05 \\
           & 4.405 & 8.01	& 11.31 \\
           & 3.405 & 13.22 & 17.77 \\
\midrule
\multirow{3}{*}{SSQR-5\%} & 5.725 & 7.38 & 10.03 \\
           & 4.725 & 7.98	& 11.13 \\
           & 3.725 & 12.12 & 16.13 \\
\bottomrule
\end{tabular}
\end{center}
\end{table}

\begin{table}[htpb]
\caption{Perplexity of Llama-2-7B model under HPTQ, GPTQ, and SSQR-1\% with different bitwidths.}
\label{tab:llama2_7b_results}
\begin{center}
\begin{tabular}{lcccc}
\toprule
Method & Avg. Bitwidth & \multicolumn{2}{c}{Perplexity}  \\
\cmidrule(r){3-4}
      &           & WikiText-2 & C4 \\
\midrule
FP16 Baseline & 16.000 & 5.50 & 6.24 \\
\midrule
\multirow{3}{*}{HPTQ} & 4.125 & 5.53 & 6.73 \\
           & 3.125 & 5.77 & 7.04 \\
           & 2.125 & 7.45 & 9.43  \\
\midrule
\multirow{3}{*}{GPTQ} & 4.125 & 5.70 & 6.90 \\
           & 3.125 & 6.75 & 8.08 \\
           & 2.125 & 28.07 & 26.13 \\
\midrule
\multirow{3}{*}{SSQR-1\%} & 4.445 & 5.60 & 6.81 \\
           & 3.445 & 6.09 & 7.52 \\
           & 2.445 & 14.58 & 15.85 \\
\bottomrule
\end{tabular}
\end{center}
\end{table}

\begin{table}[htpb]
\caption{Zero-shot evaluation results (\%) for Llama-3.2-3B-Instruct under different quantization methods.}
\label{tab:llama3.2_3b_eval_tasks}
\begin{center}
\begin{tabular}{lccccccc}
\toprule
Method & Avg. Bitwidth & Wino & MMLU & PiQA & SciQ & \multicolumn{2}{c}{HSwag}  \\
\cmidrule(r){7-8}
      &           &      &      &      &      & acc & $\text{acc}_\text{norm}$ \\
\midrule
BF16 Baseline & 16.000 & 68.75 & 62.18 & 76.17 & 95.4 & 53.27 & 71.65 \\
\midrule
\multirow{3}{*}{HPTQ} & 4.125 & 68.03 & 61.57 & 76.55 & 95.0 & 53.02 & 71.28 \\
           & 3.125 & 68.35 & 58.50 & 7497 & 95.5 & 51.76 & 70.00 \\
           & 2.125 & 60.85 & 42.75 & 69.15 & 89.9 & 44.54 & 60.29  \\
\midrule
\multirow{3}{*}{GPTQ} & 4.125 & 68.11 & 59.81 & 75.73 & 95.5 & 52.29 & 70.54 \\
           & 3.125 & 66.06 & 49.13 & 72.58 & 94.0 & 47.25 & 63.93 \\
           & 2.125 & 50.59 & 22.96 & 53.65 & 63.4 & 28.06 & 30.58 \\
\midrule
\multirow{3}{*}{SSQR-1\%} 
 & 4.445 & 68.19 & 60.94 & 76.12 & 95.7 & 52.37 & 70.88 \\
 & 3.445 & 66.93 & 54.10 & 74.92	& 95.6 & 50.55 & 68.86 \\
 & 2.445 & 51.70 & 23.97 & 58.22 & 64.5 & 31.14 & 36.90 \\
\midrule
\multirow{3}{*}{SSQR-2\%}  
 & 4.765 & 68.03 & 61.17 & 76.33 & 95.2 & 52.49 & 70.99 \\
 & 3.765 & 65.51 & 56.37 & 74.43 & 94.4 & 50.88 & 68.85 \\
 & 2.765 & 53.12 & 23.91 & 60.01 & 78.3 & 34.12 & 42.99 \\
\midrule
\multirow{3}{*}{SSQR-3\%}  
 & 5.085 & 68.27 & 61.68 & 76.82 & 95.4 & 53.03 & 71.29 \\
 & 4.085 & 66.69 & 57.65 & 75.03 & 95.0 & 50.98 & 69.00 \\
 & 3.085 & 58.48 & 34.20 & 65.61 & 90.5 & 39.87 & 52.43 \\
\midrule
\multirow{3}{*}{SSQR-4\%}  
 & 5.405 & 68.90 & 61.11 & 76.28 & 95.5 & 52.80 & 71.03 \\
 & 4.405 & 66.77 & 57.73 & 75.03 & 95.3 & 51.08 & 68.79 \\
 & 3.405 & 57.85 & 33.74 & 66.49 & 90.1 & 40.66 & 54.44\\
\midrule
\multirow{3}{*}{SSQR-5\%}  
 & 5.725 & 68.35 & 61.67 & 75.57 & 95.3 & 52.88 & 70.97 \\
 & 4.725 & 66.69 & 57.02 & 75.52 & 95.3 & 51.32 & 69.70 \\
 & 3.725 & 57.38 & 37.24 & 65.56 & 91.5 & 41.37 & 54.96 \\
\bottomrule
\end{tabular}
\end{center}
\end{table}

\begin{table}[htpb]
\caption{Zero-shot evaluation results (\%) for Llama-3.1-8B-Instruct under different quantization methods.}
\label{tab:llama3.1_8b_eval_tasks}
\begin{center}
\begin{tabular}{lccccccc}
\toprule
Method & Avg. Bitwidth & Wino & MMLU & PiQA & SciQ & \multicolumn{2}{c}{HSwag}  \\
\cmidrule(r){7-8}
      &           &      &      &      &      & acc & $\text{acc}_\text{norm}$ \\
\midrule
BF16 Baseline & 16.000 & 73.72 & 68.31 & 80.14 & 97.3 & 59.81 & 79.59 \\
\midrule
\multirow{3}{*}{HPTQ} 
 & 4.125 & 73.56 & 67.90 & 79.49 & 97.7 & 59.57 & 79.25 \\
 & 3.125 & 72.77 & 64.58 & 79.16 & 96.9 & 58.42 & 78.21 \\
 & 2.125 & 63.69 & 45.01 & 69.15 & 90.8 & 49.84 & 67.98 \\
\midrule
\multirow{3}{*}{GPTQ} & 4.125 & 73.80 & 65.68 & 79.27 & 97.2 & 58.61 & 78.36 \\
           & 3.125 & 72.45 & 58.19 & 77.37 & 95.5 & 55.21 & 74.57 \\
           & 2.125 & 54.93 & 24.67 & 54.46 & 75.1 & 31.77 & 37.79 \\
\midrule
\multirow{3}{*}{SSQR-1\%} 
 & 4.445 & 74.43 & 66.78 & 79.65 & 96.9 & 59.18 & 78.93 \\
 & 3.445 & 72.45 & 60.14 & 77.97 & 96.3 & 56.74 & 76.24 \\
 & 2.445 & 52.80 & 23.07 & 58.49 & 74.1 & 33.25 & 40.05 \\
\midrule
\multirow{3}{*}{SSQR-2\%}  
 & 4.765 & 73.80 & 67.21 & 79.49 & 97.2 & 58.94 & 78.53 \\
 & 3.765 & 73.24 & 63.13 & 78.78 & 96.4 & 57.63 & 77.22 \\
 & 2.765 & 54.30 & 27.08 & 61.04 & 82.5 & 38.41 & 50.41 \\
\midrule
\multirow{3}{*}{SSQR-3\%}  
 & 5.085 & 72.93 & 67.38 & 79.54 & 96.9 & 59.64 & 79.07 \\
 & 4.085 & 73.09 & 63.77 & 79.11 & 96.6 & 57.62 & 77.40 \\
 & 3.085 & 54.54 & 26.15 & 58.81 & 83.6 & 38.34 & 49.52 \\
\midrule
\multirow{3}{*}{SSQR-4\%}  
 & 5.405 & 73.24 & 66.95 & 79.92 & 96.9 & 59.32 & 79.06 \\
 & 4.405 & 73.24 & 62.92 & 78.73 & 96.5 & 57.61 & 77.47 \\
 & 3.405 & 54.54 & 29.95 & 54.95 & 82.3 & 39.80 & 51.87 \\
\midrule
\multirow{3}{*}{SSQR-5\%}  
 & 5.725 & 74.03 & 67.91 & 80.52 & 97.2 & 59.49 & 79.39 \\
 & 4.725 & 73.40 & 64.14 & 79.05 & 97.0 & 58.16 & 77.63 \\
 & 3.725 & 64.25 & 42.59 & 72.58 & 88.7 & 49.94 & 68.20 \\
\bottomrule
\end{tabular}
\end{center}
\end{table}

\clearpage

\subsection{Comparison with Other Quantization Methods}
\label{sec:app_experiments_metrics_comparison}

We compare zero-shot WinoGrande and PiQA accuracies of our methods (HPTQ, SSQR) against GPTQ and state-of-the-art post-training, weight-only quantizers AQLM~\citep{pmlr-v235-egiazarian24a}, QuIP\#~\citep{pmlr-v235-tseng24a}, and QTIP~\citep{NEURIPS2024_6de2e84b} on Llama-2-7B.
Results are reported in \cref{tab:compare_to_other_methods}, sorted by average bitwidth.
Metrics for AQLM, QuIP\#, and QTIP are taken from their respective papers.

As shown in \cref{tab:compare_to_other_methods}, for average bitwidth $\ge$ 4, all methods yield accuracy close to the full-precision baseline.
In the 3-4 bit regime, vanilla GPTQ falls behind recent methods; however, HPTQ and SSQR close this gap, bringing a scalar quantization approach to parity with vector quantization methods (AQLM, QuIP\#, QTIP).
In the 2-3 bit regime, HPTQ remains competitive with the state of the art.

\begin{table}[htpb]
\caption{Comparing the zero-shot results of different quantization methods on Llama-2-7B.}
\label{tab:compare_to_other_methods}
\begin{center}
\begin{tabular}{lccc}
\toprule
Method & Avg. Bitwidth & WinoGrande & PiQA \\
\midrule
FP16 Baseline        & 16.000            & 69.46               & 78.13         \\
\midrule
AQLM            & 5.020             & 67.40               & 78.29         \\
\textit{SSQR-1\%}        & 4.445             & 68.82               & 78.35         \\
\textit{HPTQ}            & 4.125             & 69.61               & 77.75         \\
GPTQ            & 4.125             & 68.82               & 77.97         \\
AQLM            & 4.040             & 67.32               & 78.24         \\
QuIP\#          & 4.000             & 67.60               & 78.40         \\
QTIP            & 4.000             & 67.10               & 78.40         \\
\midrule
\textit{SSQR-1\%}        & 3.445             & 65.43               & 77.15         \\
\textit{HPTQ}            & 3.125             & 67.72               & 77.80         \\
GPTQ            & 3.125             & 64.96               & 73.88         \\
AQLM            & 3.040             & 66.93               & 76.88         \\
QuIP\#          & 3.000             & 66.50               & 77.30         \\
QTIP            & 3.000             & 66.90               & 78.10         \\
\midrule
\textit{SSQR-1\%}        & 2.445             & 50.04               & 56.15         \\
AQLM            & 2.290             & 65.67               & 74.92         \\
\textit{HPTQ}            & 2.125             & 65.82               & 73.56         \\
GPTQ            & 2.125             & 49.64               & 56.20         \\
AQLM            & 2.020             & 65.67               & 74.76         \\
QuIP\#          & 2.000             & 64.90               & 75.10         \\
QTIP            & 2.000             & 64.70               & 75.90        \\
\bottomrule
\end{tabular}
\end{center}
\end{table}

\clearpage

\subsection{Technical Details and Performance of SSQR's CUDA Kernel}
\label{sec:app_experiments_kernal}

The kernel is specialized for two regimes: in the low-batch regime, the kernel utilizes SIMT GPU cores exclusively, while tensor cores are utilized when batch size is $\ge$8, the smallest outer dimension where tensor cores can be utilized without padding, and with 16-bit operands and 32-bit floating-point accumulators. For both regimes, sparse outliers are handled with SIMT cores.

To handle the dense inliers, we apply two reordering schemes here. First, the weights are reordered for memory movement involving tensor cores. Second, we apply an additional reordering scheme to enable batched conversion between 2-4-bit integers into their 16-bit counterparts.

To handle the sparse outliers, we group sparse outliers in groups of 16 rows (matching the outer tensor core dimension), then store them in column-major row order with padding to account for differences between non-zero counts across rows in the group.

\cref{fig:speedup_layer} shows the layer-wise speedup of the SSQR kernel on NVIDIA RTX 6000 GPU compared to the PyTorch BF16 matrix multiplication baseline across different layer shapes in the Qwen3-8B model (layers with the same input are merged), inlier bitwidths, outlier rates, and batch sizes. We observe the largest gains in the low-batch regime, with up to 4$\times$ speedup when <1\% outliers are present. As the outlier rate increases, the speedup diminishes, but the kernel consistently outperforms the BF16 baseline across all settings.

\begin{figure}[htpb]
\begin{center}
\includegraphics[width=\textwidth]{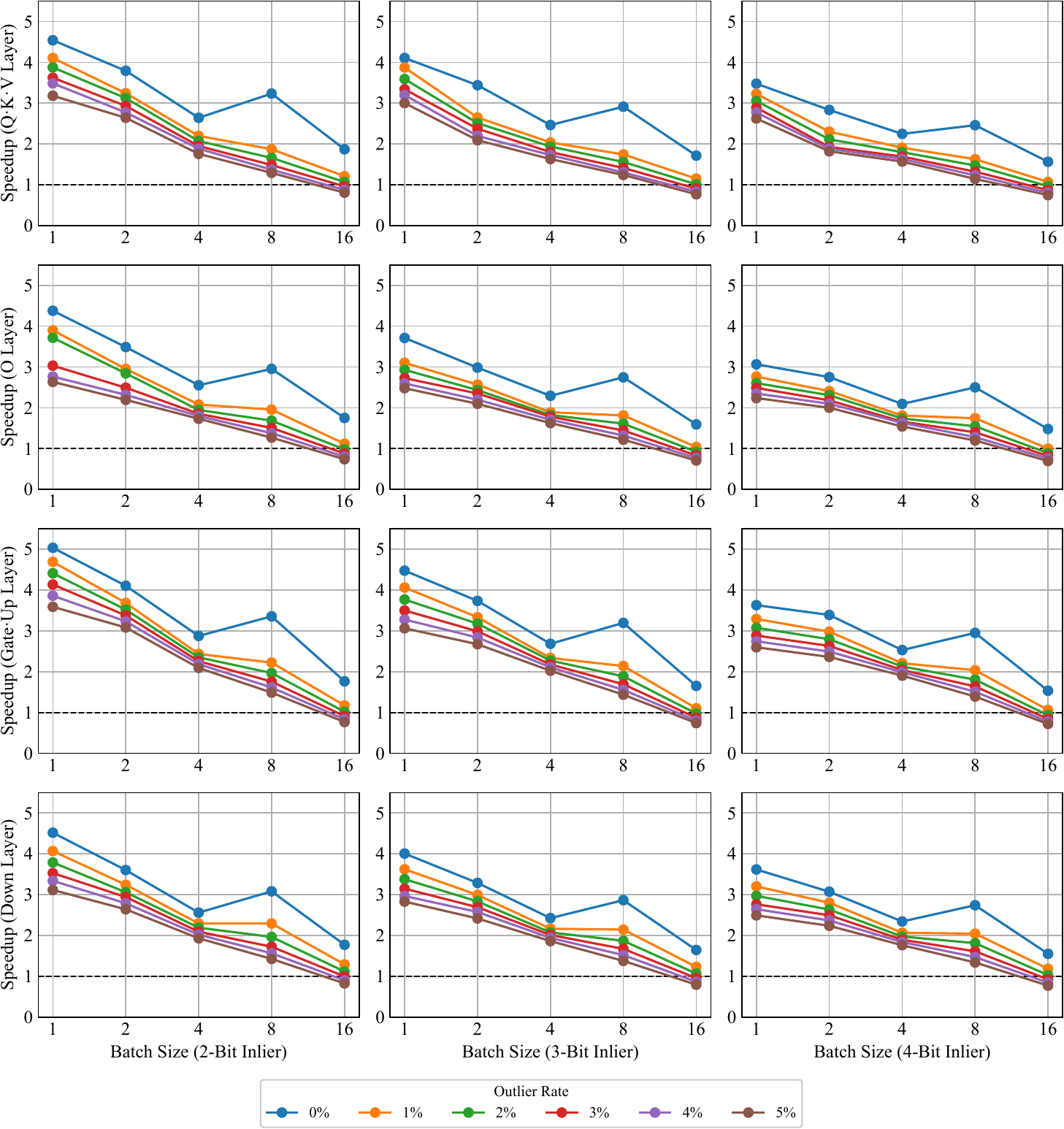}
\end{center}
\caption{Layer-wise inference speedup of the SSQR kernel over the PyTorch BF16 baseline on Qwen3-8B across inlier bitwidths, outlier rates, and batch sizes on A6000 GPU.}
\label{fig:speedup_layer}
\end{figure}

\clearpage

\section{LLM Usage}
\label{sec:app_llm_usage}

LLM was used to aid and polish the writing of this paper, e.g., correcting grammar and rephrasing sentences.

\end{document}